    \newcommand{\wfrac}[3][3pt]{%
  \frac{\wfracterm{depth}{\dp}{#1}{#2}}{\wfracterm{height}{\ht}{#1}{#3}}%
}
\newcommand{\wfracterm}[4]{%
  \sbox0{$\displaystyle#4$}%
  \vrule width 0pt #1 \dimexpr #20+#3\relax
  \usebox{0}%
}
    \let\originalleft\left
    \let\originalright\right
    \renewcommand{\left}{\mathopen{}\mathclose\bgroup\originalleft}
    \renewcommand{\right}{\aftergroup\egroup\originalright}
    \newcounter{thm} 
    \newtheorem{theorem}[thm]{\indent Theorem}
    \newtheorem{assumption}{\indent Assumption}
    \newtheorem{proposition}{\indent Proposition}
    \newenvironment{prop}{\begin{proposition}$\!\!\!${\bf }\rm }{\end{proposition}}
    \newtheorem{lemma}{\indent Lemma}
    \newtheorem{corollary}{\indent Corollary}
    \newtheorem{definition}{\indent Definition}
    \newenvironment{example}
  {\pushQED{\qed}\examplex}
  {\popQED\endexamplex}
    \newtheorem{remark}{\indent Remark}
    \newtheorem{Simulation}{Simulation}
    \newtheorem{fact}{\indent Fact}
    \newtheorem{conjecture}{\indent Conjecture}
    \newtheorem{experiment}{\indent Experiment}
    \renewcommand{\theenumi}{\textit{(\alph{enumi})}}
    \renewcommand{\labelenumi}{\theenumi}
    \newlength\figureheight 
    \newlength\figurewidth
    \crefname{equation}{}{}
    \newcommand{\ubar}[1]{\underaccent{\bar}{#1}}
    \DeclareMathAlphabet{\mathcal}{OMS}{cmsy}{m}{n} 
    \def\BibTeX{{\rm B\kern-.05em{\sc i\kern-.025em b}\kern-.08em
    T\kern-.1667em\lower.7ex\hbox{E}\kern-.125emX}}
\begin{document}

\title{Time-Varying Soft-Maximum Barrier Functions for Safety in Unmapped and Dynamic Environments}

\author{Amirsaeid Safari and Jesse B. Hoagg \vspace{-3ex}
\thanks{A. Safari and J. B. Hoagg are with the Department of Mechanical and Aerospace Engineering, University of Kentucky, Lexington, KY, USA. (e-mail: amirsaeid.safari@uky.edu, jesse.hoagg@uky.edu).}
\thanks{This work is supported in part by the National Science Foundation (1849213) and Air Force Office of Scientific Research (FA9550-20-1-0028).}
}
\maketitle

\begin{abstract}
We present a closed-form optimal feedback control method that ensures safety in an \textit{a priori} unknown and potentially dynamic environment.
This article considers the scenario where local perception data (e.g., LiDAR) is obtained periodically, and this data can be used to construct a local control barrier function (CBF) that models a local set that is safe for a period of time into the future. 
Then, we use a smooth time-varying soft-maximum function to compose the $N+1$ most recently obtained local CBFs into a single barrier function that models an approximate union of the $N+1$ most recently obtained local sets. 
This composite barrier function is used in a constrained quadratic optimization, which is solved in closed form to obtain a safe-and-optimal feedback control. 
We also apply the time-varying soft-maximum barrier function control to 2 robotic systems (nonholonomic ground robot with nonnegligible inertia, and quadrotor robot), where the objective is to navigate an \textit{a priori} unknown environment safely and reach a target destination. 
In these applications, we present a simple approach to generate local CBFs from periodically obtained perception data.
\end{abstract}

\begin{IEEEkeywords}
Autonomous systems, constrained control, optimal control, perception 
\end{IEEEkeywords}

\section{Introduction}

Safe autonomous robotic navigation in an \textit{a priori} unmapped environment has applications in a variety of domains including search and rescue~\cite{hudson2021heterogeneous}, environmental monitoring~\cite{kress2009temporal}, and transportation~\cite{schwarting2018planning}. 
A techniques for safe navigation include potential field methods\cite{tang2010novel,kirven2021autonomous}, collision cones \cite{sunkara2019collision}, reachable sets \cite{chen2018hamilton}, and barrier function approaches~\cite{prajna2007framework,panagou2015distributed,tee2009barrier,ames2014control,ames2016control,ames2019control,wabersich2022predictive,seiler2021control, breeden2023robust}.

Control barrier functions (CBFs) provide a set-theoretic method to obtain forward invariance (e.g., safety) with respect to a specified safe set \cite{wieland2007constructive}.
CBFs can be implemented as constraints in real-time optimization-based control methods (e.g., quadratic programs) in order to guarantee forward invariance and thus, safety \cite{ames2019control}. 
A variety of extensions have been developed, including higher-relative degree CBFs (e.g., \cite{tan2021high, nguyen2016exponential,xiao2021high}), CBFs for discrete-time \cite{zeng2021safety}, and CBFs with time variation or adaptation (e.g., \cite{xiao2021high,taylor2020adaptive}).

Traditionally, CBFs are assumed to be given offline, that is, constructed offline using \textit{a priori} known information regarding the safe set.
However, in situations where the environment is unknown or changing, online construction of CBFs could enable safe navigation. 
In this scenario, the objective is to construct a CBF in real time based on the current state of the system (e.g., robot) as well as information gathered from the environment (e.g., perception information). 
For example, \cite{srinivasan2020synthesis,abuaish2023geometry} use support vector machine classifiers to construct barrier functions using LiDAR data, and \cite{long2021learning} synthesizes barrier functions using a deep neural network trained with sensor data. 
However, when new sensor data is obtained, the barrier function must be updated, which typically results in discontinuities that can be problematic for ensuring forward invariance.

Another challenge to online barrier function construction is that it can be difficult to synthesize one barrier function that models a complex environment. 
Thus, there is interest in composing multiple barrier functions into one function \cite{backupautomatica, rabiee2024closed, compositionACC, lindemann2018control,srinivasan2018control,glotfelter2017nonsmooth,glotfelter2019hybrid}.
For example, \cite{glotfelter2017nonsmooth} uses Boolean composition, which result in non-smooth barrier functions that use minimum and maximum functions. 
This approach is extended in \cite{glotfelter2019hybrid} to allow for hybrid non-smooth barrier functions, which can be useful for addressing discontinuities that arise when a barrier function is updated using new information (e.g., perception data). 
\textcolor{black}{Non-smooth time-varying barrier functions are used in \cite{long2024sensor} to define safety constraints from signed distance functions, allowing safety to be evaluated directly from sensor data in dynamic environments.
However, \cite{srinivasan2018control, glotfelter2017nonsmooth, glotfelter2019hybrid, long2024sensor} are not applicable for relative degree greater than one.}
Thus, these approaches cannot be directly applied to higher-relative-degree systems such as position-based safe sets (e.g., avoiding obstacles) and ground robots with nonnegligible inertia or unmanned aerial vehicles.
In contrast to nonsmooth compositions, \cite{backupautomatica, rabiee2024closed, compositionACC, lindemann2018control} use smooth soft-minimum and soft-maximum functions to compose multiple barrier functions into one function.
However, \cite{backupautomatica, compositionACC, rabiee2024closed, lindemann2018control} do not address online barrier function construction using real-time perception. 
\textcolor{black}{
Non-smooth barrier functions with a maximum operator are used in \cite{lutkus2024incremental} to compose local CBFs that are learned from discrete state-action pairs. 
However, \cite{lutkus2024incremental} requires a discretization and sampling of the state-space to determine safe state-action pairs, which can make it computationally intractable for high-dimensional systems.}


This article's main contribution is a new closed-form optimal feedback control that ensures safety in an \textit{a priori} unknown and potentially dynamic environment.
We consider the scenario where periodically obtained local perception data can be used to construct a local CBF that, for some time interval into the future, models a local subset of the unknown time-varying safe set.
These local CBFs can have arbitrary relative degree $r$.
We use a smooth time-varying soft-maximum construction to compose the $N+1$ most recently obtained local CBFs into a single barrier function whose zero-superlevel set approximates the union of the $N+1$ most recently obtained local subsets. 
This construction uses not only the soft maximum but also a homotopy in time to smoothly move the oldest local CBF out of the soft maximum and the newest local CBF into the soft maximum. 
This homotopy is designed to ensure that the composite soft-maximum function is $r$-times continuously differentiable in time and space.
The time-varying soft-maximum barrier function is used with a higher-relative-degree approach to construct a relaxed control barrier function (R-CBF), that is, a function that satisfies the CBF criteria on the boundary of its zero-superlevel set but not necessarily on the interior. 
This R-CBF is used in a constrained quadratic optimization, which is solved in closed form to obtain a safe-and-optimal feedback control. 
Closed-form solutions to other CBF-based optimizations appear in \cite{ames2016control,wieland2007constructive, cortez2022compatibility,rabiee2024closed}.

We also present applications of the new control method to a nonholonomic ground robot with nonnegligible inertia, and to a quadrotor aerial robot. 
These robots are equipped with sensing capability (e.g., LiDAR) that periodically detects points on obstacles that are near the robot. 
The objective is to safely navigate the environment, which may be dynamic, and reach a target destination. 
We present a simple approach to generate local CBFs from the perception data. 
Specifically, for each detected point, we construct a spheroid, whose semi-major axis is the line from the detected point to the boundary of the perception system's detection area.  
The interior of each spheroid is a region to avoid because it is not necessarily safe. 
We use a soft minimum to compose all spheroidal functions and functions that model the perception detection area.  
The zero-superlevel set of this perception-based composite soft-minimum CBF is a subset of the unknown safe set. 
Some preliminary ideas related to this work are presented in the conference article \cite{safari2023time}.


\section{Notation}

The interior, boundary, and closure of the set $\mathcal{A} \subseteq \mathbb{R}^n$ are denoted by $\mbox{int}~\mathcal{A}$, $\mbox{bd}~\mathcal{A}$, and $\mbox{cl}~\mathcal{A}$ respectively.
Let $\mathbb{N} \triangleq \{ 0, 1, 2, \ldots \}$, and let $\| \cdot \|$ denote the $2$ norm on $\BBR^n$.

Let $\zeta:[0,\infty) \times \BBR^n \to \BBR$ be continuously differentiable. 
Then, the partial Lie derivative of $\zeta$ with respect to $x$ along the vector fields of $\nu:\mathbb{R}^n \to \mathbb{R}^{n \times \ell}$ is define as $L_\nu \zeta(t,x) \triangleq \frac{\partial \zeta(t,x)}{\partial x} \nu(x)$.
In this paper, all functions are sufficiently smooth such that all derivatives that we write exist and are continuous.

A continuous function $a \colon \BBR \to \BBR$ is an \textit{extended class-$\SK$ function} if it is strictly increasing and $a(0)=0$.

\section{Soft Minimum and Soft Maximum}

Let $\kappa>0$, and consider $\mbox{softmin}_\kappa \colon \mathbb{R}^N \to \mathbb{R}$ and $\mbox{softmax}_\kappa \colon \mathbb{R}^N \to \mathbb{R}$ defined by
\begin{gather}
\mbox{softmin}_\kappa (z_1,\cdots,z_N) \triangleq -\frac{1}{\kappa}\log\sum_{i=1}^Ne^{-\kappa z_i},\label{eq:softmin}\\
\mbox{softmax}_\kappa (z_1,\cdots,z_N) \triangleq \frac{1}{\kappa}\log\sum_{i=1}^Ne^{\kappa z_i} - \frac{\log N }{\kappa},\label{eq:softmax}
\end{gather}
which are the log-sum-exponential \textit{soft minimum} and \textit{soft maximum}, respectively.
The next result relates the soft minimum to the minimum and the soft maximum to the maximum. 
The proof is in Appendix~\ref{appendix:proposition proofs}.

\begin{proposition}
\label{fact:softmin_limit}
\rm
Let $z_1,\cdots, z_N \in \mathbb{R}$. 
Then,
\begin{align}
  \min \, \{z_1,\cdots,z_N\} - \frac{\log N }{\kappa} 
    &\le \mbox{softmin}_\kappa(z_1,\cdots,z_N) \nn \\
    &\le \min \, \{z_1,\cdots,z_N\}\label{eq:softmin_inequality},
\end{align}
and
\begin{align}
    \max\,\{z_1,\cdots,z_N\}  - \frac{\log N}{\kappa}
 &\le \mbox{softmax}_\kappa(z_1,\cdots,z_N) \nn \\
 &\le \max \, \{z_1,\cdots,z_N\}\label{eq:softmax_inequality}.
\end{align}
\end{proposition}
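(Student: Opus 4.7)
The plan is to prove each double inequality by bounding the single sum $\sum_{i=1}^N e^{\pm \kappa z_i}$ above and below in terms of the extremal element, and then pushing those bounds through the monotone function $(1/\kappa)\log(\cdot)$ (with an appropriate sign). This is the standard log-sum-exp sandwich argument; no machinery beyond monotonicity of the logarithm and positivity of the exponential is needed.

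First I would let $m \triangleq \min\{z_1,\ldots,z_N\}$ and observe the two-sided bound
\[
e^{-\kappa m} \;\le\; \sum_{i=1}^N e^{-\kappa z_i} \;\le\; N\,e^{-\kappa m}.
\]
The lower bound holds because the minimizing index contributes exactly $e^{-\kappa m}$ and the remaining terms are nonnegative; the upper bound holds because $z_i \ge m$ and $\kappa>0$ give $e^{-\kappa z_i} \le e^{-\kappa m}$ for every $i$. Taking $-\tfrac{1}{\kappa}\log(\cdot)$ (which reverses inequalities since $\kappa>0$) yields \eqref{eq:softmin_inequality} directly.

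Next I would repeat the same argument for the soft maximum with $M \triangleq \max\{z_1,\ldots,z_N\}$, using
\[
e^{\kappa M} \;\le\; \sum_{i=1}^N e^{\kappa z_i} \;\le\; N\,e^{\kappa M}.
\]
Applying $\tfrac{1}{\kappa}\log(\cdot)$ (monotone and order-preserving) and then subtracting $\tfrac{\log N}{\kappa}$ gives \eqref{eq:softmax_inequality}.

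There is essentially no obstacle here; the only point worth being careful about is the sign flip when passing from $\sum e^{-\kappa z_i}$ to $\text{softmin}_\kappa$, which is why the $-\tfrac{\log N}{\kappa}$ correction attaches to the \emph{lower} bound in the softmin case but, after the subtraction in the definition \eqref{eq:softmax}, also attaches to the lower bound in the softmax case. I would present the softmin and softmax arguments in parallel to make this symmetry transparent.
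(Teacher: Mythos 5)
Your argument is correct and is essentially identical to the paper's own proof: both sandwich $\sum_i e^{\mp\kappa z_i}$ between one extremal term and $N$ times that term, then apply the logarithm (with the appropriate sign and scaling) and, for the soft maximum, subtract $\tfrac{\log N}{\kappa}$. No gaps and no meaningful difference in route.
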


\Cref{fact:softmin_limit} shows that $\mbox{softmin}_\kappa$ and $\mbox{softmax}_\kappa$ lower bound minimum and maximum, respectively.
\Cref{fact:softmin_limit} also shows that $\mbox{softmin}_\kappa$ and $\mbox{softmax}_\kappa$ approximate the minimum and maximum in the sense that they converge to minimum and maximum, respectively, as $\kappa \to \infty$.

The next result is a consequence of \Cref{fact:softmin_limit}.
The result shows that soft minimum and soft maximum can be used to approximate the intersection and the union of zero-superlevel sets, respectively. 
The proof is in Appendix~\ref{appendix:proposition proofs}.

\begin{proposition}
\label{prop:softmin_softmax_sets}
\rm
For $i \in \{ 1,2,\ldots,N\}$, let $\zeta_i \colon \BBR^n \to \BBR$ be continuous, and define 
\begin{gather}
\SD_i \triangleq \{ x \in \BBR^n \colon \zeta_i(x) \ge 0 \}\label{prop2.1},\\
\SX_{\kappa} \triangleq \{ x \in \BBR^n \colon \mbox{softmin}_\kappa(\zeta_1(x),\cdots,\zeta_N(x)) \ge 0 \}\label{prop2.2},\\
\SY_{\kappa} \triangleq \{ x \in \BBR^n \colon \mbox{softmax}_\kappa(\zeta_1(x),\cdots,\zeta_N(x)) \ge 0 \}\label{prop2.3}.
\end{gather}
Then, 
\begin{equation*}
    \SX_{\kappa} \subseteq \bigcap_{i=1}^N \SD_i, \qquad \SY_{\kappa} \subseteq \bigcup_{i=1}^N \SD_i.
\end{equation*}
Furthermore, as $\kappa \to \infty$, $\SX_{\kappa} \to \bigcap_{i=1}^N \SD_i$ and $\SY_{\kappa} \to \bigcup_{i=1}^N \SD_i$.
\end{proposition}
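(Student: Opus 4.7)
My plan is to reduce both claims to Proposition~\ref{fact:softmin_limit} by reading off the zero level via the two-sided sandwich $\mathrm{min} - \log N/\kappa \le \mathrm{softmin}_\kappa \le \mathrm{min}$ and the analogous bound for $\mathrm{softmax}_\kappa$. The inclusions use only the upper bounds; the convergence uses only the lower bounds.

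First, I would establish $\SX_\kappa \subseteq \bigcap_{i=1}^N \SD_i$. Let $x \in \SX_\kappa$, so $\mbox{softmin}_\kappa(\zeta_1(x),\dots,\zeta_N(x)) \ge 0$. The upper bound in \eqref{eq:softmin_inequality} gives $\min\{\zeta_1(x),\dots,\zeta_N(x)\} \ge \mbox{softmin}_\kappa(\zeta_1(x),\dots,\zeta_N(x)) \ge 0$, hence $\zeta_i(x) \ge 0$ for every $i$, so $x \in \SD_i$ for every $i$. The inclusion $\SY_\kappa \subseteq \bigcup_{i=1}^N \SD_i$ is obtained in the same way from the upper bound in \eqref{eq:softmax_inequality}: if $\mbox{softmax}_\kappa \ge 0$ then $\max\{\zeta_i(x)\} \ge 0$, so at least one index yields $\zeta_i(x) \ge 0$.

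Next, for the convergence claim I would verify the two containment directions that together pin down the limit set. From the previous paragraph, $\SX_\kappa \subseteq \bigcap_{i=1}^N \SD_i$ for every $\kappa > 0$. In the other direction, fix any $x$ with $\zeta_i(x) > 0$ for all $i$, so $m(x) \triangleq \min\{\zeta_1(x),\dots,\zeta_N(x)\} > 0$. The lower bound in \eqref{eq:softmin_inequality} gives $\mbox{softmin}_\kappa(\zeta_1(x),\dots,\zeta_N(x)) \ge m(x) - \log N/\kappa$, which is nonnegative whenever $\kappa \ge \log N / m(x)$; hence for such $\kappa$, $x \in \SX_\kappa$. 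Sending $\kappa \to \infty$ therefore sweeps out every strictly interior point of $\bigcap \SD_i$, while the outer inclusion is already an equality in the limit, which is what the paper means by $\SX_\kappa \to \bigcap_{i=1}^N \SD_i$. The argument for $\SY_\kappa \to \bigcup_{i=1}^N \SD_i$ is identical, using instead the maximum: if $\max\{\zeta_i(x)\} > 0$, then $\mbox{softmax}_\kappa \ge \max - \log N/\kappa > 0$ for all sufficiently large $\kappa$.

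The only delicate point is the meaning of ``$\to$'' for sets. Since the paper does not specify Hausdorff or Painlev\'e--Kuratowski convergence, I would present the argument as the two complementary statements above: every $\kappa > 0$ yields the outer inclusion, and for each $x$ in the strict interior of the target intersection (respectively union) there is $\kappa^\star(x)$ such that $x$ is captured for all $\kappa \ge \kappa^\star(x)$. Boundary points where the relevant minimum (or maximum) equals zero are the only ambiguous case, but they do not affect the limiting conclusion since they have measure zero in the set-theoretic sense used in the paper. This is the only step that requires care; the rest is a direct application of Proposition~\ref{fact:softmin_limit}.
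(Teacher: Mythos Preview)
Your proposal is correct. For the two inclusions you invoke the upper bounds in Proposition~\ref{fact:softmin_limit} directly ($\mbox{softmin}_\kappa \le \min$ and $\mbox{softmax}_\kappa \le \max$), whereas the paper instead unwinds the definitions \eqref{eq:softmin}--\eqref{eq:softmax}: from $\mbox{softmin}_\kappa \ge 0$ it obtains $\sum_i e^{-\kappa\zeta_i(x)} \le 1$, hence each summand is at most $1$ and each $\zeta_i(x)\ge 0$; similarly $\mbox{softmax}_\kappa \ge 0$ gives $\sum_i e^{\kappa\zeta_i(y)} \ge N$, so some summand is at least $1$. Your route is shorter and modular; the paper's route avoids appealing to the proposition it just proved but otherwise buys nothing extra. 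For the limit statement, the paper simply records that Proposition~\ref{fact:softmin_limit} forces $\mbox{softmin}_\kappa \to \min$ and $\mbox{softmax}_\kappa \to \max$ pointwise and declares the set convergence, without unpacking what ``$\to$'' means for sets. Your treatment via the lower bounds, showing every strict-interior point of the target is captured for all sufficiently large $\kappa$, is more explicit about this and is a genuine improvement in rigor at essentially no extra cost.
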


If at least one argument of the soft minimum \eqref{eq:softmin} is negative and $\kappa >0$ is large, then \eqref{eq:softmin} can involve the exponential of a large positive number, which can be poorly conditioned for numerical computation.  
A similar phenomenon can occur with \eqref{eq:softmax} 
if at least one of its arguments is positive. 
Hence, we present the next result, which provides expressions for $\mbox{softmin}_\kappa$ and $\mbox{softmax}_\kappa$ that are advantageous for numerical computation. 
These expressions are used in \Cref{sec:GroundRobot,sec:Quadrotor} for implementation.
The proof of the next result is in Appendix~\ref{appendix:proposition proofs}.

\begin{prop}\label{prop:softmin_max_numerical}
Let $z_1,\cdots, z_N \in \mathbb{R}$, and define $\ubar{z} \triangleq \min \, \{z_1,\cdots,z_N\}$ and $\bar z \triangleq \max \, \{z_1,\cdots,z_N\}$. 
Then, 
    \begin{gather*}
  \mbox{softmin}_\kappa(z_1,\cdots,z_N) =\ubar{z} + \mbox{softmin}_\kappa(z_1-\ubar{z},\cdots,z_N-\ubar{z}),\\
   \mbox{softmax}_\kappa(z_1, \cdots, z_N) = \bar{z} + \mbox{softmax}_\kappa(z_1-\bar{z}, \cdots, z_N-\bar{z}).
\end{gather*}
\end{prop}

\section{Problem Formulation}\label{sec:problem formulation}

Consider
\begin{equation}\label{eq:affine control}
\dot x(t) = f(x(t))+g(x(t)) u(t), 
\end{equation}
where $x(t) \in \mathbb{R}^n $ is the state, $x(0) = x_0 \in \mathbb{R}^n$ is the initial condition, $f: \mathbb{R}^n \to \mathbb{R}^n$ and $g: \mathbb{R}^n \to \mathbb{R}^{n \times m}$ are locally Lipschitz continuous on $\BBR^n$, and $u:[0, \infty) \to \BBR^m$ is the control.

The following definition is a relaxation of the standard CBF definition.
See \cite{ames2016control,lindemann2018control,xiao2021high} for standard CBF definitions.

\begin{definition}\label{def:RCBF}
\rm
Let $\zeta \colon [0,\infty) \times \BBR^n \to \BBR$ be continuously differentiable, and for all $t \ge 0$, define $\SD(t) \triangleq \{ x \in \BBR^n \colon \zeta(t,x) \ge 0 \}$.
Then, $\zeta$ is a \textit{relaxed control barrier function} (R-CBF) for~\cref{eq:affine control} on $\SD(t)$ if for all $(t,x) \in [0,\infty) \times  \mbox{bd }\SD(t)$,
\begin{equation}
\sup_{u \in \BBR^m} \left [ \textstyle\frac{\partial \zeta(t,x)}{\partial t} + L_f \zeta(t,x) + L_g \zeta(t,x) u \right ]  \ge 0.\label{def:RCBF.1}
\end{equation} 
\end{definition}

\Cref{def:RCBF} implies that an R-CBF need only satisfy 
\eqref{def:RCBF.1} on the boundary of its zero-superlevel.
In contrast, a CBF (e.g., \cite{ames2016control,lindemann2018control,xiao2021high}) must satisfy \eqref{def:RCBF.1} on the boundary and a related condition on the interior of $\SD(t)$. 
Under additional assumptions (e.g., $\SD(t)$ is compact), an R-CBF may also be a CBF.
CBF definitions often take the supremum over a subset of $\BBR^m$; however, \Cref{def:RCBF} is sufficient for this article.

For all $t \ge 0$, let $\SSS_{\rm{s}}(t) \subset \BBR^n$ be the \textit{safe set}, that is, the set of states that are safe at time $t$. 
The time-varying safe set $\SSS_{\rm{s}}(t)$ is not assumed to be known. 
Instead, we assume that a real-time perception/sensing system provides a subset of the safe set at update times $0,T,2T,3T,\ldots$, where $T > 0$.
In particular, for all $k \in \BBN$, we obtain perception feedback at time $kT$ in the form of a continuously differentiable function $b_k : \mathbb{R}^n \to \mathbb{R}$ such that its zero-superlevel set
\begin{equation}\label{eq:Sk}
\SSS_{k} \triangleq \{x \in \mathbb{R}^n : b_k(x) \ge 0\},
\end{equation}
is nonempty, contains no isolated points, and is a subset of $\SSS_\rms(kT)$.
The perception information $b_k$ can be obtained from a variety of perception synthesis methods (e.g., \cite{srinivasan2020synthesis,long2021learning, abuaish2023geometry}).
Sections \ref{sec:GroundRobot} and \ref{sec:Quadrotor} demonstrate a simple approach for using LiDAR and the soft minimum to synthesize $b_k$.
We assume that there exist known positive integers $N$ and $r$ such that for all $k \in \mathbb{N}$, the following hold:

\begin{enumerate}[leftmargin=0.9cm]
	\renewcommand{\labelenumi}{(A\arabic{enumi})}
	\renewcommand{\theenumi}{(A\arabic{enumi})}

\item\label{con1}
For all $t \in [kT, (k+N+1)T]$, $\SSS_{k} \subseteq \SSS_{\rm{s}}(t)$.

\item\label{con3}
For all $x \in \BBR^n$, $L_gb_k(x)=L_gL_fb_k(x)=\cdots=L_gL_f^{r-2}b_k(x)=0$.

\item\label{con4}
For almost all $x \in \cup_{i=k}^{k+N} \SSS_{i}$, $L_gL_f^{r-1}b_k(x) \ne 0$.

\item\label{con2}
$\SSS_{k} \cap \left ( \cup_{i=k+1}^{k+N} \SSS_i \right )$ is nonempty and contains no isolated points.

\end{enumerate}

Assumption~\ref{con1} implies that $\SSS_k$ is a subset of the safe set $\SSS_\rms(t)$ for $(N+1)T$ time units into the future. 
For example, if $N=1$, then \ref{con1} implies that $\SSS_k$ is a subset of the safe set over the interval $[kT,(k+2)T]$. 
The choice $N=1$ is appropriate if the safe set is changing quickly. 
On the other hand, if the safe set is changing more slowly, then $N>1$ may be appropriate. 
If the safe set is time-invariant, then \ref{con1} is satisfied for any positive integer $N$ because $\SSS_k \subseteq \SSS_\rms$. 
The next section presents a construction for a time-varying R-CBF that uses the $N+1$ most recent perception feedback $b_k,\ldots,b_{k-N}$.

Assumptions~\ref{con3} and~\ref{con4} implies that $b_k$ has relative degree $r$ with respect to \eqref{eq:affine control} almost everywhere on $\cup_{i=k}^{k+N} \SSS_{i}$.
Assumption~\ref{con2} is a technical condition on the perception data that the zero-superlevel set of $b_{k}$ is connected to the union of the zero-superlevel sets of $b_{k-1},\ldots,b_{k-N}$.

\textcolor{black}{
Section~\ref{sec:GroundRobot} demonstrates one method to satisfy \ref{con1} and \ref{con2} in a dynamic environment using a known upper bound $\bar{v} > 0$ on the speed of dynamic obstacles.
In this case, $\SSS_k$ is synthesized from LiDAR, and it excludes the possible positions that an obstacle detected at time $t = kT$ could occupy during the time interval $[kT,(k+N+1)T]$. 
More specifically, the upper bound $\bar{v}$ on speed is used to ensure that $\SSS_k$ does not intersect with a disk of radius $\bar{v}(N+1)T$ centered at each LiDAR point where an obstacle is detected. 
For $N=1$, the radius of the exclusion disks is $2\bar{v}T$. 
Since the perception system cannot update arbitrarily quickly (i.e., $T$ is lower bounded), it follows that the exclusion disk must have a larger radius to accommodate faster obstacles. 
Furthermore, if the exclusion disk is too large, then $\SSS_k$ may be empty, which violates \ref{con2}. 
Thus, the perception hardware impose limits on the allowable rate of change of a time-varying environment.
}

Next, we consider the cost function $J:[0,\infty) \times \mathbb{R}^n \times \mathbb{R}^m \to \mathbb{R}$ defined by
\begin{equation}\label{eq:J}
    J(t,x,u) \triangleq \tfrac{1}{2}u^\rmT Q(t,x)u + c(t,x)^\rmT u, 
\end{equation}
where $Q:[0,\infty) \times \mathbb{R}^n \to \BBR^m$ and $c: [0,\infty) \times \mathbb{R}^n \to \mathbb{R}^m$ are continuous in $t$ and locally Lipschitz in $x$; and for all $(t,x) \in [0,\infty) \times \BBR^n$, $Q(t,x)$ is positive definite.

The objective is to design a feedback control such that for each $t \ge 0$, the cost $J(t, x(t), u)$ is minimized subject to the safety constraint that $x(t) \in \SSS_{\rm{s}}(t)$.

A special case of this problem is the minimum-intervention problem, where there is desired control $u_\rmd : [0,\infty) \times \BBR^n \to \BBR^m$, which is designed to satisfy performance requirements but may not account for safety. 
In this case, the objective is to design a feedback control such that the minimum-intervention cost $\| u - u_\rmd(t, x(t)) \|^2$ is minimized subject to the constraint that $x(t) \in \SSS_{\rm{s}}(t)$.  
The cost $\| u - u_\rmd(t, x) \|^2$ is minimized by $u_\rmd(t,x)$, which is equal to the minimizer of \eqref{eq:J} where $Q(t,x) = I_m$ and $c(t,x) = -u_\rmd(t, x)$. 
Thus, the minimum-intervention problem is addressed by letting $Q(t,x) = I_m$ and $c(t,x) = -u_\rmd(t, x)$. 
All statements in this paper that involve the subscript $k$ are for all $k \in \mathbb{N}$.

\section{Time-Varying Soft Maximum R-CBF} 
\label{sec:Method}

This section presents a method for constructing a time-varying R-CBF from the real-time perception feedback $b_k$. 

Let $\eta:\mathbb{R} \to [0,1]$ be $r$-times continuously differentiable such that the following conditions hold:

\begin{enumerate}[leftmargin=0.9cm]
	\renewcommand{\labelenumi}{(C\arabic{enumi})}
	\renewcommand{\theenumi}{(C\arabic{enumi})}

 \item\label{con:con1_g}
For all $t\in (-\infty,0]$, $\eta(t) = 0$.

\item\label{con:con2_g}
For all $t\in [1,\infty)$, $\eta(t) = 1$.

\item\label{con: con4_g}
For all $i \in \{1,\cdots,r\}$, $\left . {\frac{{\rm d}^i \eta(t)}{{\rm d}t^i}}\right |_{t=0} = 0$ and $\left . {\frac{{\rm d}^i \eta(t)}{{\rm d}t^i}} \right |_{t=1} = 0$. 
\end{enumerate}

The following example provides a choice for $\eta$ that satisfies \ref{con:con1_g}--\ref{con: con4_g} for any positive integer $r$.

\begin{example}\label{ex:g}\rm
Let $\nu \ge 1$, and let
\begin{equation}\label{eq:smoothstep}
\eta(t) =
\begin{cases}
        0, & t \in (-\infty,0), \\
        \left(\nu t\right)^{r+1} \sum_{j=0}^{r} \binom{r+j}{j}\binom{2r+1}{r-j}(-\nu t)^j, & t \in \left [ 0,\frac{1}{\nu} \right ], \\
        1, & t \in \left (\frac{1}{\nu},\infty \right ),
\end{cases}
\end{equation}
which satisfies \ref{con:con1_g}--\ref{con: con4_g}.
\Cref{fig:eta} is a plot of \Cref{eq:smoothstep} with $r=2$ for different $\nu$. 
Note that $\nu$ is a tuning parameter, which influences the rate at which $\eta$ transitions from zero to one. 
\end{example}

\begin{figure}[t!]
\center{\includegraphics[width=0.47\textwidth,clip=true,trim= 0.42in 0.25in 1.1in 0.6in] {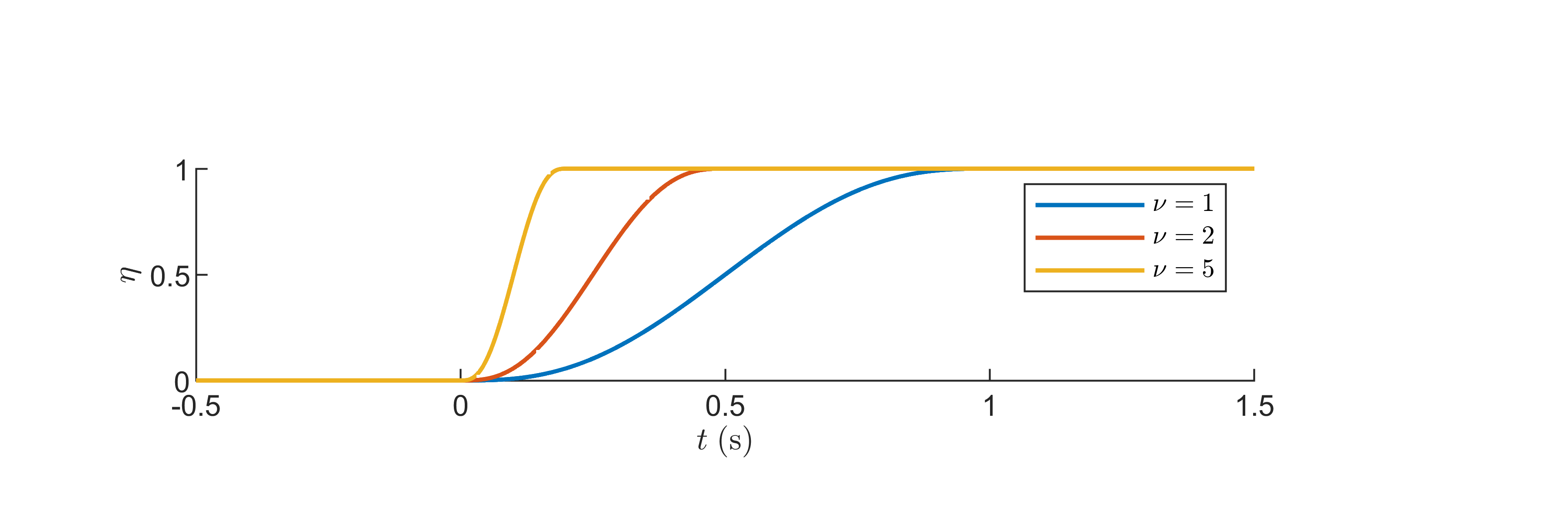}}
\caption{$\eta$ given by Example~\ref{ex:g} with $r=2$.}\label{fig:eta}
\end{figure}

The next example provides another choice for $\eta$ that satisfies \ref{con:con1_g}--\ref{con: con4_g} in the case where $r \in \{ 1 ,2 \}$.

\begin{example}\label{ex:g2}\rm
Assume $r \in \{ 1 ,2 \}$. 
Let $\nu \ge 1$, and let 
\begin{equation}\nn
\eta(t) =
\begin{cases}
        0, & t \in (-\infty,0), \\
        \nu t - \frac{1}{2 \pi}\sin 2\pi \nu t, & t \in \left [0,\frac{1}{\nu} \right ], \\
        1, & t \in \left (\frac{1}{\nu},\infty \right ),
\end{cases}
\end{equation}
which satisfies \ref{con:con1_g}--\ref{con: con4_g}.
\end{example}

Next, let $\kappa > 0$, and consider $\psi_0:[0, \infty) \times \mathbb{R}^n  \to \mathbb{R}$ such that for all $k\in\BBN$ and all $(t,x) \in [kT, (k+1)T) \times \BBR^n$,
\begin{align}\label{eq:softmax_h}
\psi_0(t,x) &\triangleq \mbox{softmax}_\kappa \bigg (b_{k-1}(x), \cdots, b_{k-N+1}(x),\nonumber\\
&\qquad \eta\left(\textstyle\frac{t}{T}-k\right)b_k(x)  + \left[ 1-\eta\left(\textstyle\frac{t}{T}-k\right)\right]b_{k-N}(x) \bigg ),
\end{align}
where for $i\in \{1,2,\ldots,N\},$ $b_{-i}\triangleq b_0$.
The function $\psi_0$ is constructed from the $N+1$ most recently obtained perception feedback functions $b_k,\ldots,b_{k-N}$. 
If $N=1$, then 
\begin{equation}
\psi_0(t,x) = \eta\left(\textstyle\frac{t}{T}-k\right)b_k(x)  + \left[ 1-\eta\left(\textstyle\frac{t}{T}-k\right)\right ] b_{k-1}(x).    \label{eq:psi0_N=1}
\end{equation}
In this case, $\psi_0$ is a convex combination of $b_k$ and $b_{k-1}$, where $\psi_0$ smoothly transitions from $b_{k-1}$ to $b_{k}$ over the interval $[kT,(k+1)T]$. 
In general, the final argument of \eqref{eq:softmax_h} involves the convex combination of $b_k$ and $b_{k-N}$, which allows for the smooth transition from $b_{k-N}$ to $b_{k}$ over the interval $[kT,(k+1)T]$. 
In other words, this convex combination is the mechanism by which the newest perception feedback $b_{k}$ is smoothly incorporated into $\psi_0$ and the oldest perception feedback $b_{k-N}$ is smoothly removed from $\psi_0$.

The zero-superlevel set of $\psi_0$ is defined by
\begin{equation} \label{eq:safe set final}
    \SC_0(t) \triangleq \{x \in \mathbb{R}^n \colon \psi_0(t,x) \geq 0 \}.
\end{equation}
\Cref{prop:softmin_softmax_sets} implies that $\SC_0(t)$ is a subset of the union of the zero-superlevel sets of the arguments of \eqref{eq:softmax_h}.
The next result is an immediate consequence of \Cref{prop:softmin_softmax_sets}.

\begin{proposition} \label{prop:C0_at_kT}\rm
For all $k \in \BBN$, $\SC_0(kT) \subseteq \cup_{i=k-N}^{k-1} \SSS_i$.
\end{proposition}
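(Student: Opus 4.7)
The plan is to reduce $\psi_0(kT,\cdot)$ to a pure soft maximum of $N$ of the stored perception functions and then invoke Proposition~\ref{prop:softmin_softmax_sets} directly. Because the heavy set-theoretic work has already been done in that proposition, this claim should be essentially a one-line substitution followed by a citation.

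First, I would evaluate \eqref{eq:softmax_h} at $t=kT$. The argument of the homotopy becomes $\tfrac{t}{T}-k = 0$, and condition \ref{con:con1_g} gives $\eta(0)=0$. Substituting this into the final slot of the soft maximum collapses the convex combination $\eta(0)\,b_k(x)+[1-\eta(0)]\,b_{k-N}(x)$ to $b_{k-N}(x)$, so
\[
\psi_0(kT,x)=\mbox{softmax}_\kappa\bigl(b_{k-1}(x),\,b_{k-2}(x),\,\ldots,\,b_{k-N+1}(x),\,b_{k-N}(x)\bigr).
\]
Thus $\psi_0(kT,\cdot)$ is exactly a soft maximum of the $N$ functions $b_{k-1},\ldots,b_{k-N}$.

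Second, I would apply Proposition~\ref{prop:softmin_softmax_sets} with the choice $\zeta_i = b_{k-i}$ for $i\in\{1,\ldots,N\}$. Under this identification the sets $\SD_i$ of that proposition are exactly $\SSS_{k-i}$ by \eqref{eq:Sk}, and the set $\SY_\kappa$ there is exactly $\SC_0(kT)$ by \eqref{eq:safe set final}. The proposition then yields
\[
\SC_0(kT)\;\subseteq\;\bigcup_{i=1}^{N}\SSS_{k-i}\;=\;\bigcup_{i=k-N}^{k-1}\SSS_i,
\]
which is the claim. For the boundary case $k=0$, the convention $b_{-i}\triangleq b_0$ makes every argument of the soft maximum equal to $b_0$, so $\psi_0(0,x)=b_0(x)$ and $\SC_0(0)=\SSS_0$, which is consistent with the union on the right-hand side (every $\SSS_{-i}$ is $\SSS_0$).

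I do not anticipate a genuine obstacle: the only step specific to this proposition is recognizing that $\eta(\tfrac{t}{T}-k)|_{t=kT}=0$, and the remainder is a verbatim application of Proposition~\ref{prop:softmin_softmax_sets}. The mild bookkeeping around $k=0$ is handled by the stated convention on negative-indexed perception functions.
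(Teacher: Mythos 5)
Your proposal is correct and matches the paper's intended argument: the paper simply declares the result an ``immediate consequence'' of \Cref{prop:softmin_softmax_sets}, and you supply exactly the two steps that make it immediate, namely that $\eta(0)=0$ collapses the last argument of \eqref{eq:softmax_h} to $b_{k-N}$ so that $\psi_0(kT,\cdot)=\mbox{softmax}_\kappa(b_{k-1},\ldots,b_{k-N})$, and then applying \Cref{prop:softmin_softmax_sets} with $\zeta_i=b_{k-i}$ identifies $\SY_\kappa$ with $\SC_0(kT)$ and the $\SD_i$ with the $\SSS_{k-i}$.
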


\textcolor{black}{\Cref{prop:softmin_softmax_sets} also implies that for sufficiently large $\kappa>0$, $\SC_0(t)$ approximates the union of the zero-superlevel sets of the arguments of the soft maximum in \eqref{eq:softmax_h}.
Specifically, $\SC_0(kT) \to \cup_{i=k-N}^{k-1} \SSS_i$ as $\kappa \to \infty$. 
In other words, if $\kappa >0$ is sufficiently large, then $\psi_0$ is a lower-bound approximation of 
\begin{align*}
\psi_*(t,x) &\triangleq \max \, \{ b_{k-1}(x), \cdots, b_{k-N+1}(x),  \nn\\
&\qquad \eta\left(\textstyle\frac{t}{T}-k\right)b_k(x)  + \left[ 1-\eta\left(\textstyle\frac{t}{T}-k\right)\right]b_{k-N}(x)\}.
\end{align*}
In fact, \Cref{fact:softmin_limit} shows that $\psi_*-\psi_0 \le \frac{\log N}{\kappa}$, which implies that $\psi_0$ converges to $\psi_*$ as $\kappa \to \infty$.
However, if $\kappa >0$ is large, then $\textstyle\frac{\partial\psi_0(t,x)}{\partial x}$ has large magnitude at points where $\psi_*$ is not differentiable. 
Thus, choice of $\kappa$ is a trade off between the magnitude of $\textstyle\frac{\partial\psi_0(t,x)}{\partial x}$ and the how well $\SC_0(t)$ approximates the zero-superlevel set of $\psi_*$.}

\Cref{prop:C0_at_kT} and \ref{con1} imply that $\SC_0$ is a subset of the safe set $\SSS_{\rm{s}}$ at sample times $kT$.
In fact, the convex combination of $b_k$ and $b_{k-N}$ used in \eqref{eq:softmax_h} ensures that $\SC_0$ is a subset of $\SSS_{\rm{s}}$ not only at sample times but also for all time between samples. 
The following result demonstrates this property. 

\begin{proposition}\label{fact:S(t)}\rm
Assume \ref{con1} is satisfied. 
Then, for all $k \in \BBN$ and all $t \in [kT, (k+1)T]$, $\SC_0(t) \subseteq  \cup_{i=k-N}^k \SSS_i  \subseteq \SSS_{\rm{s}}(t)$. 
\end{proposition}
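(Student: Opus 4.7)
The plan is to split the claim $\mathcal{C}_0(t) \subseteq \cup_{i=k-N}^{k} \mathcal{S}_i \subseteq \mathcal{S}_{\rm s}(t)$ into two inclusions and handle them separately. The first inclusion is the substantive one and rests on Proposition~\ref{prop:softmin_softmax_sets} together with a short convex-combination argument. The second inclusion is an index-bookkeeping check against Assumption~\ref{con1}.

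For the first inclusion, I would fix $k \in \BBN$ and $t \in [kT,(k+1)T]$ and apply Proposition~\ref{prop:softmin_softmax_sets} directly to the soft maximum in~\eqref{eq:softmax_h}. This yields
\[
\mathcal{C}_0(t) \subseteq \mathcal{S}_{k-1} \cup \cdots \cup \mathcal{S}_{k-N+1} \cup \mathcal{E}(t),
\]
where $\mathcal{E}(t) \triangleq \{ x \in \BBR^n : \lambda(t) b_k(x) + (1-\lambda(t)) b_{k-N}(x) \ge 0 \}$ and $\lambda(t) \triangleq \eta(t/T - k) \in [0,1]$ by conditions~\ref{con:con1_g} and~\ref{con:con2_g}. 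The remaining step is to show $\mathcal{E}(t) \subseteq \mathcal{S}_{k-N} \cup \mathcal{S}_k$. I would argue this by contraposition: if $x \notin \mathcal{S}_{k-N} \cup \mathcal{S}_k$, then $b_k(x) < 0$ and $b_{k-N}(x) < 0$; since $\lambda(t), 1-\lambda(t) \ge 0$, the convex combination is strictly negative, so $x \notin \mathcal{E}(t)$. Combining the two containments gives $\mathcal{C}_0(t) \subseteq \cup_{i=k-N}^{k} \mathcal{S}_i$.

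For the second inclusion, I would invoke Assumption~\ref{con1} once per index: for each $i \in \{k-N,\ldots,k\}$, \ref{con1} gives $\mathcal{S}_i \subseteq \mathcal{S}_{\rm s}(\tau)$ for all $\tau \in [iT,(i+N+1)T]$. The remaining check is that $[kT,(k+1)T] \subseteq [iT,(i+N+1)T]$ for every such $i$, which follows from $i \le k$ and $i \ge k-N$. Taking the union over $i$ then yields $\cup_{i=k-N}^{k} \mathcal{S}_i \subseteq \mathcal{S}_{\rm s}(t)$ for every $t \in [kT,(k+1)T]$.

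I do not anticipate a serious obstacle: the potentially non-obvious point is that $\mathcal{E}(t)$, defined through a convex combination of $b_k$ and $b_{k-N}$, need not be contained in $\mathcal{S}_k \cap \mathcal{S}_{k-N}$, but it is contained in their union, and that is all that is needed to close the argument. The rest is a clean application of Proposition~\ref{prop:softmin_softmax_sets} and an interval-containment check, so the proof should fit in a few lines.
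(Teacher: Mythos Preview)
Your proposal is correct and takes essentially the same approach as the paper: both rest on the softmax upper bound (the paper via Proposition~\ref{fact:softmin_limit}, you via its set-level corollary Proposition~\ref{prop:softmin_softmax_sets}) together with the observation that a convex combination of two negative numbers is negative. The paper packages this as a single contradiction argument rather than splitting off $\mathcal{E}(t) \subseteq \mathcal{S}_{k-N} \cup \mathcal{S}_k$ as a separate step, and it invokes~\ref{con1} for the second inclusion without spelling out the interval check you make explicit, but the substance is identical.
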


\begin{proof}[\indent Proof]
Let $k_1 \in \mathbb{N}$, $t_1 \in [k_1T, (k_1+1)T]$, and $x_1 \in \SC_0(t_1)$. 
Assume for contradiction that $x_1 \not \in \cup_{i=k_1-N}^{k_1} \SSS_i$, which implies that for $i \in \{ k_1-N,k_1-N+1,\ldots,k_1 \}$, $b_i(x_1) < 0$. 
Since, in addition, $\eta (t_1/T-k_1) \in [0,1]$, it follows from~\eqref{eq:softmax_h} and \Cref{fact:softmin_limit} that $\psi_0(t_1,x_1) < 0$. 
Next, \eqref{eq:safe set final} implies that $x_1 \not \in \SC_0(t_1)$, which is a contradiction. 
Thus, $x_1 \in \cup_{i=k_1-N}^{k_1} \SSS_i$, which implies that $\SC_0(t_1) \subseteq \cup_{i=k_1-N}^{k_1} \SSS_i$, which combined with \ref{con1} implies that $\SC_0(t_1) \subseteq \cup_{i=k_1-N}^{k_1} \SSS_i \subseteq \SSS_\rms(t_1)$.
\end{proof}

\Cref{fact:S(t)} demonstrates that $\SC_0(t) \subseteq \SSS_{\rm{s}}(t)$; however, we may not be able to use $\psi_0$ directly as a candidate R-CBF because $\psi_0$ is not necessarily relative degree one.
The next result concerns the relative degree of $\psi_0$.
The proof is in Appendix~\ref{appendix:proposition proofs}. 
The following notation is needed.
For all $k\in\BBN$ and all $(t,x) \in [kT, (k+1)T) \times \BBR^n$,
define
\begin{align}
\mu_{0}(t,x) &\triangleq \frac{\eta\left(\textstyle\frac{t}{T}-k\right)}{N} \exp \kappa \Big ( \eta\left(\textstyle\frac{t}{T}-k\right)b_k(x)  \nn\\
&\qquad + \left[ 1-\eta\left(\textstyle\frac{t}{T}-k\right)\right ] b_{k-N}(x) - \psi_0(t,x) \Big ), \label{eq:mu0}\\
\mu_{N}(t,x) &\triangleq \frac{1-\eta\left(\textstyle\frac{t}{T}-k\right)}{N} \exp \kappa \Big ( \eta\left(\textstyle\frac{t}{T}-k\right)b_k(x)  \nn\\
&\qquad + \left[ 1-\eta\left(\textstyle\frac{t}{T}-k\right)\right ] b_{k-N}(x) - \psi_0(t,x) \Big ),\label{eq:muN}
\end{align}
and for all $j \in \{ 1,\ldots,N-1 \}$, define
\begin{equation}
    \mu_{j}(t,x) \triangleq \textstyle \frac{1}{N} \exp \kappa \Big ( b_{k-j}(x) - \psi_0(t,x) \Big ).\label{eq:muj}
\end{equation}

\begin{proposition}\label{prop.h}\rm
Assume \ref{con3} is satisfied.
Then, the following statements hold: 
\begin{enumerate}
\item \label{prop.h.1}
$\psi_0$ is $r$-times continuously differentiable on $[0,\infty) \times \BBR^n$.

\item \label{prop.h.2}
For all $(t,x) \in [0,\infty) \times \BBR^n$, 
\begin{equation*}
L_g\psi_0(t,x) = L_gL_f\psi_0(t,x) = \cdots = L_gL_f^{r-2}\psi_0(t,x)=0.    
\end{equation*}

\item \label{prop.h.3}
For all $k\in\BBN$ and all $(t,x) \in [kT, (k+1)T) \times \BBR^n$,
\begin{equation}
    L_gL_f^{r-1}\psi_0(t,x) = \sum_{j=0}^{N} \mu_{j}(t,x) L_gL_f^{r-1} b_{k-j}(x).\label{eq:LgLfr-1psi0}
\end{equation}

\item \label{prop.h.4}
For all $(t,x) \in [0,\infty) \times \BBR^n$, $\mu_0(t,x),\ldots,\mu_N(t,x) \ge 0$, and $\sum_{j=0}^N \mu_j(t,x) = 1$.

\end{enumerate}
\end{proposition}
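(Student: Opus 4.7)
The plan is to establish the four claims in the order \ref{prop.h.4}, \ref{prop.h.1}, and finally \ref{prop.h.2}--\ref{prop.h.3} jointly, because knowing that the weights $\mu_j$ form a nonnegative partition of unity is what drives every subsequent chain-rule step. For \ref{prop.h.4}, nonnegativity of each $\mu_j$ is immediate from $\eta\in[0,1]$ (guaranteed by \ref{con:con1_g}--\ref{con:con2_g} and the codomain of $\eta$). For the sum, I would combine $\mu_0$ and $\mu_N$ using $\eta+(1-\eta)=1$, factor out $e^{-\kappa\psi_0}/N$, and invoke the identity $Ne^{\kappa\psi_0}=e^{\kappa[\eta b_k+(1-\eta)b_{k-N}]}+\sum_{j=1}^{N-1}e^{\kappa b_{k-j}}$ implied by \eqref{eq:softmax} applied to \eqref{eq:softmax_h}; this collapses $\sum_j\mu_j$ to $1$.

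For \ref{prop.h.1}, smoothness on each open interval $(kT,(k+1)T)$ is immediate from composition of the smooth $\mbox{softmax}_\kappa$, $\eta$, and the $r$-times continuously differentiable $b_{k-j}$'s. The nontrivial work is matching derivatives across the transition time $t=kT$. For the zeroth-order match, $\eta(0)=0$ and $\eta(1)=1$ (by \ref{con:con1_g}--\ref{con:con2_g}) make the time-varying argument in both the right-hand and left-hand forms of $\psi_0$ collapse at $t=kT$ to a single $b_{k-j}$, so both forms reduce to $\mbox{softmax}_\kappa(b_{k-1},\ldots,b_{k-N})$, which match because the softmax is symmetric. For higher-order derivatives, the $i$-th time derivative of the convex combination is $\frac{\eta^{(i)}(\cdot)}{T^i}$ times a purely spatial function, which vanishes at $t=kT^\pm$ for every $i\in\{1,\ldots,r\}$ by \ref{con: con4_g}. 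The multivariate chain rule through the smooth $\mbox{softmax}_\kappa$ then shows that every mixed space-time partial derivative of $\psi_0$ of total order at most $r$ agrees from both sides at $t=kT$.

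For \ref{prop.h.2}--\ref{prop.h.3} I would set up a unified chain-rule induction. Direct differentiation of \eqref{eq:softmax_h} through the softmax yields
\begin{equation*}
\tfrac{\partial\psi_0(t,x)}{\partial x}=\sum_{j=0}^{N}\mu_j(t,x)\tfrac{\partial b_{k-j}(x)}{\partial x},
\end{equation*}
so $L_g\psi_0=\sum_{j=0}^{N}\mu_j L_g b_{k-j}=0$ by \ref{con3}. Setting $\phi_i\triangleq L_f^i\psi_0$, the inductive claim I would prove is that there exists a smooth $G_i$ with $\phi_i(t,x)=G_i\bigl(t,\{L_f^\ell b_{k-j}(x)\}_{0\le\ell\le i,\,0\le j\le N}\bigr)$ and $\partial G_i/\partial(L_f^i b_{k-j})=\mu_j(t,x)$. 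The base case is the formula above. For the inductive step, $\phi_{i+1}=L_f\phi_i=\sum_{j,\ell}\frac{\partial G_i}{\partial(L_f^\ell b_{k-j})}L_f^{\ell+1}b_{k-j}$ by chain rule; the only summand explicitly carrying $L_f^{i+1}b_{k-j}$ is the $\ell=i$ one, whose coefficient is $\mu_j$ by hypothesis. Applying $L_g$ once more and invoking the chain rule again yields $L_g\phi_i=\sum_{j,\ell}\frac{\partial G_i}{\partial(L_f^\ell b_{k-j})}L_g L_f^\ell b_{k-j}$. For $i\le r-2$, every factor $L_g L_f^\ell b_{k-j}$ vanishes by \ref{con3}, giving \ref{prop.h.2}; for $i=r-1$, only the $\ell=r-1$ summands survive, yielding $L_gL_f^{r-1}\psi_0=\sum_j\mu_j L_gL_f^{r-1}b_{k-j}$, which is \ref{prop.h.3}.

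The main obstacle is the transition-time smoothness in \ref{prop.h.1}: the scalar identities $\eta^{(i)}(0)=\eta^{(i)}(1)=0$ are easy, but the multivariate chain rule must be applied with care, since the weights $\mu_j$ themselves depend on $t$ through $\eta$, so every mixed space-time derivative must be checked to inherit the vanishing of $\eta^{(i)}$ at the transition. The induction for \ref{prop.h.3} is then clean once the identification $\partial G_i/\partial(L_f^i b_{k-j})=\mu_j$ is in place.
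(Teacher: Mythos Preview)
Your proposal is correct and follows essentially the same approach as the paper: both arguments establish \ref{prop.h.2}--\ref{prop.h.3} via the chain-rule induction that writes $L_f^i\psi_0$ as a smooth function of $\{L_f^\ell b_{k-j}\}_{\ell\le i}$ and tracks that the partial with respect to the top-order argument $L_f^i b_{k-j}$ stays equal to $\partial\psi_0/\partial b_{k-j}=\mu_j$, and both handle \ref{prop.h.4} by direct calculation from the softmax identity. Your treatment of the transition-time smoothness in \ref{prop.h.1} is actually more detailed than the paper's (which simply cites \ref{con:con1_g}--\ref{con: con4_g} without unpacking the Fa\`a di Bruno structure), and your reordering to prove \ref{prop.h.4} first is a cosmetic difference that does not change the substance.
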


\Cref{prop.h} shows that if $L_gL_f^{r-1}\psi_0$ is nonzero, then $\psi_0$ has relative degree $r$ with respect to \eqref{eq:affine control}.
In addition, \ref{prop.h.3} and \ref{prop.h.4} of \Cref{prop.h} show that $L_gL_f^{r-1}\psi_0$ is a convex combination of $L_gL_f^{r-1}b_k,\ldots,L_gL_f^{r-1}b_{k-N}$, which are nonzero from \ref{con4}.

Since $\psi_0$ has relative degree $r$ provided that $L_gL_f^{r-1}\psi_0$ is nonzero, we use a higher-order approach to construct a candidate R-CBF from $\psi_0$.
Specifically, for all $i \in \{0, 1, \cdots, r-2\}$, let $\alpha_i:\mathbb{R} \to \mathbb{R}$ be an $(r-1-i)$-times continuously differentiable extended class-$\mathcal{K}$ function, 
and consider $\psi_i:[0, \infty) \times \mathbb{R}^n  \to \mathbb{R}$ defined by
\begin{equation}\label{eq:HOCBF}
\psi_{i+1}(t,x) \triangleq \textstyle\frac{\partial \psi_{i}(t,x)}{\partial t} + L_f \psi_{i}(t,x) +\alpha_{i}(\psi_{i}(t,x)).
\end{equation}
For all $i \in \{1,\cdots,r-1\}$, the zero-superlevel set of $\psi_i$ is defined by
\begin{equation}\label{eq:HOCBF set}
    \SC_i(t) \triangleq \left \{x \in \mathbb{R}^n \colon \psi_{i}(t,x) \ge 0 \right \}.
\end{equation}
Next, we define
\begin{equation}
\SC(t) \triangleq \bigcap_{i=0}^{r-1} \SC_i(t), \label{eq:Common set}
\end{equation}
and assume that $\SC(0)$ is nonempty and contains no isolated points. 
Since $\SC(t) \subseteq \SC_0(t)$, it follows from \Cref{fact:S(t)} that for all $t \ge 0$, $\SC(t) \subseteq \SSS_\rms(t)$.

The next result shows that $L_g \psi_{r-1} = L_g L_f^{r-1} \psi_0$, which implies that if $\psi_0$ has relative degree $r$, then $\psi_{r-1}$ has relative degree one.
The proof is in Appendix~\ref{appendix:proposition proofs}.

\begin{proposition}\label{prop.psi_i_properties}\rm
Assume \ref{con3} is satisfied.
Then, for all $(t,x) \in [0,\infty) \times \BBR^n$, $L_g \psi_{r-1}(t,x) = L_g L_f^{r-1} \psi_0(t,x)$ and $L_g\psi_0(t,x) = L_g\psi_1(t,x) = \cdots = L_g \psi_{r-2}(t,x)=0$.
\end{proposition}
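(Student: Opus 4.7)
The plan is to prove the proposition by induction on $i$, but with a strengthened inductive hypothesis that tracks higher-order Lie derivatives as well. Specifically, I would prove: for each $i \in \{0,1,\ldots,r-1\}$, (i) $L_g L_f^j \psi_i(t,x) = 0$ for every $j \in \{0,1,\ldots,r-2-i\}$ (vacuous when $i=r-1$), and (ii) $L_g L_f^{r-1-i} \psi_i(t,x) = L_g L_f^{r-1} \psi_0(t,x)$. Specializing (ii) to $i=r-1$ gives $L_g \psi_{r-1} = L_g L_f^{r-1}\psi_0$, and specializing (i) with $j=0$ to $i \in \{0,\ldots,r-2\}$ gives $L_g \psi_i = 0$, which together are exactly the two conclusions of the proposition. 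The base case $i=0$ is immediate from \Cref{prop.h} parts \ref{prop.h.2} and \ref{prop.h.3} together with \ref{con3}.

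For the inductive step, assume the claim holds at level $i \le r-2$ and substitute the recursion \eqref{eq:HOCBF} to expand
\begin{equation*}
L_g L_f^j \psi_{i+1} = L_g L_f^j\!\left(\tfrac{\partial \psi_i}{\partial t}\right) + L_g L_f^{j+1} \psi_i + L_g L_f^j\!\bigl(\alpha_i(\psi_i)\bigr).
\end{equation*}
I would handle the three terms as follows. Because $f$ and $g$ are time-independent, the operators $\partial/\partial t$, $L_f$, and $L_g$ all commute, so the first term equals $\tfrac{\partial}{\partial t}\bigl(L_g L_f^j \psi_i\bigr)$, which vanishes for every $j \le r-2-i$ by the inductive hypothesis. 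The second term either vanishes (when $j+1 \le r-2-i$) or equals $L_g L_f^{r-1}\psi_0$ (when $j+1 = r-1-i$), again by the inductive hypothesis.

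The third term is the one requiring a small additional observation: a straightforward induction on $j$, using the chain rule, shows that $L_f^j(\alpha_i(\psi_i))$ can be written as a smooth function of $\psi_i,\,L_f\psi_i,\ldots,L_f^j\psi_i$ alone (for example, $L_f(\alpha_i(\psi_i)) = \alpha'_i(\psi_i) L_f\psi_i$, $L_f^2(\alpha_i(\psi_i)) = \alpha''_i(\psi_i)(L_f\psi_i)^2 + \alpha'_i(\psi_i)L_f^2\psi_i$, and so on). Applying $L_g$ by chain rule then produces a linear combination of $L_g\psi_i,\,L_g L_f\psi_i,\ldots,L_g L_f^j\psi_i$, every one of which is zero by the inductive hypothesis whenever $j \le r-2-i$. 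Hence the third term vanishes for all $j$ of interest.

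Combining the three contributions, for $j \in \{0,\ldots,r-3-i\}$ all three terms vanish, giving $L_g L_f^j \psi_{i+1} = 0$, and for $j = r-2-i$ only the second term survives, giving $L_g L_f^{r-2-i}\psi_{i+1} = L_g L_f^{r-1}\psi_0$. This is precisely the claim at level $i+1$, closing the induction. The main obstacle is the third term, since naively one worries that repeatedly applying $L_f$ to $\alpha_i(\psi_i)$ could introduce new $x$-dependence outside the chain $\psi_i,L_f\psi_i,\ldots$; the key observation that removes this concern is the Fa\`a di Bruno-type structure showing no such extraneous dependence arises, after which the $L_g$ derivative is killed by the inductive hypothesis.
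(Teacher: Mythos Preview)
Your proposal is correct and follows essentially the same approach as the paper: the paper also proves \eqref{prop.psi_i_properties.1} by induction on $i$, decomposes $L_g L_f^j \psi_{i+1}$ into the same three terms, handles the $\alpha_i(\psi_i)$ term via the same Fa\`a di Bruno-type observation (stated separately as \Cref{lemma:Fij}), and then obtains $L_g\psi_{r-1}=L_gL_f^{r-1}\psi_0$ by unrolling the recursion. The only organizational difference is that you fold the equality $L_gL_f^{r-1-i}\psi_i = L_gL_f^{r-1}\psi_0$ into the same induction as part (ii), whereas the paper first establishes all the vanishing and then derives the equality by a separate telescoping argument; the content is identical.
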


Next, we define
\begin{align}
    \SB(t) &\triangleq \left \{ x \in \mbox{bd } \SC_{r-1}(t) \colon \textstyle\frac{\partial \psi_{r-1}(t,x)}{\partial t} + L_f \psi_{r-1}(t,x) \leq 0 \right \}, \label{eq: bd cr-1}     
\end{align}
which is the set of all states on the boundary of the zero-superlevel set of $\psi_{r-1}$ such that if $u=0$, then the time derivative of $\psi_{r-1}$ is nonpositive along the trajectories of \eqref{eq:affine control}.
We assume that on $u$ directly affects the time derivative of $\psi_{r-1}$ on $\SB$. 
Specifically, we make the following assumption:

\begin{enumerate}[leftmargin=0.9cm]
\renewcommand{\labelenumi}{(A\arabic{enumi})}
\renewcommand{\theenumi}{(A\arabic{enumi})}
\setcounter{enumi}{4}

\item\label{con6}
For all $(t,x) \in [0,\infty) \times \SB(t)$, $L_g\psi_{r-1}(t,x) \ne 0$.
\end{enumerate}

Assumption \ref{con6} is related to the constant-relative-degree assumption often invoked with CBF approaches. 
In this work, $L_g \psi_{r-1}$ is assumed to be nonzero on $\SB$, which is a subset of the boundary of the zero-superlevel set of $\psi_{r-1}$.

\begin{remark}\label{remark:Lgpsi}\rm
\Cref{prop.psi_i_properties,prop.h} imply that $L_g \psi_{r-1}$ is nonzero if and only if the convex combination \eqref{eq:LgLfr-1psi0} is nonzero. 
This observation can be used to obtain sufficient conditions for \ref{con6} that may be verifiable in certain applications. 
For example, this observation implies that \ref{con6} is satisfied if for all $k\in\BBN$ and all $(t,x) \in [kT, (k+1)T) \times \SB(t)$, the convex hull of $L_gL_f^{r-1}b_k,\ldots,L_gL_f^{r-1}b_{k-N}$ does not contain the origin. 
Note that this condition on the convex hull of $L_gL_f^{r-1}b_k,\ldots,L_gL_f^{r-1}b_{k-N}$ is sufficient for \ref{con6}, but in general, it is not necessary. 
In the case where $N=1$, it follows from \eqref{eq:psi0_N=1} and \Cref{prop.h,prop.psi_i_properties} that 
\begin{align*}
    L_g \psi_{r-1}(t,x) &= \eta\left(\textstyle\frac{t}{T}-k\right) L_g L_f^{r-1} b_k(x)  \\
    &\qquad + \left[ 1-\eta\left(\textstyle\frac{t}{T}-k\right)\right ] L_g L_f^{r-1}  b_{k-1}(x).
\end{align*}
In this case, \ref{con6} is satisfied if for all $k\in\BBN$ and all $(t,x) \in [kT, (k+1)T) \times \SB(t)$, the origin is not on the line segment connecting $L_g L_f^{r-1} b_k(x)$ and $L_g L_f^{r-1} b_{k-1}(x)$.
\end{remark}

The next result demonstrates that that $\psi_{r-1}$ is an R-CBF.

\begin{proposition}\label{prop:CFI}
\rm
Assume \ref{con6} is satisfied. 
Then, $\psi_{r-1}$ is an R-CBF for~\cref{eq:affine control} on $\SC_{r-1}(t)$. 
\end{proposition}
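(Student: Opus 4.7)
The plan is to verify \cref{def:RCBF.1} for $\zeta = \psi_{r-1}$ by partitioning the boundary $\mbox{bd }\SC_{r-1}(t)$ into two disjoint pieces and handling each separately. Specifically, I would write $\mbox{bd }\SC_{r-1}(t) = \SB(t) \cup \left ( \mbox{bd }\SC_{r-1}(t) \setminus \SB(t) \right )$, where $\SB(t)$ is defined in~\eqref{eq: bd cr-1}.

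First I would take $(t,x) \in [0,\infty) \times \SB(t)$. By assumption~\ref{con6}, $L_g \psi_{r-1}(t,x) \ne 0$. Since $L_g \psi_{r-1}(t,x) \in \BBR^{1 \times m}$ is nonzero, the map $u \mapsto L_g\psi_{r-1}(t,x) u$ is surjective onto $\BBR$. Hence the scalar $L_g\psi_{r-1}(t,x) u$ can be made arbitrarily large by suitable choice of $u \in \BBR^m$ (for instance $u = s \, L_g\psi_{r-1}(t,x)^\rmT$ with $s \to \infty$), which gives
\begin{equation*}
\sup_{u \in \BBR^m} \left [ \textstyle\frac{\partial \psi_{r-1}(t,x)}{\partial t} + L_f \psi_{r-1}(t,x) + L_g \psi_{r-1}(t,x) u \right ] = +\infty \ge 0,
\end{equation*}
regardless of the sign of the drift term $\textstyle\frac{\partial \psi_{r-1}(t,x)}{\partial t} + L_f \psi_{r-1}(t,x)$.

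Next I would take $(t,x) \in [0,\infty) \times \left ( \mbox{bd }\SC_{r-1}(t) \setminus \SB(t) \right )$. By definition of $\SB(t)$ in~\eqref{eq: bd cr-1}, any boundary point not in $\SB(t)$ must satisfy $\textstyle\frac{\partial \psi_{r-1}(t,x)}{\partial t} + L_f \psi_{r-1}(t,x) > 0$. Choosing $u = 0$ (recall from \Cref{prop.psi_i_properties} that \ref{con3} is not invoked here; we only need feasibility of some $u$), the bracketed expression in \cref{def:RCBF.1} equals $\textstyle\frac{\partial \psi_{r-1}(t,x)}{\partial t} + L_f \psi_{r-1}(t,x) > 0$, so the supremum is strictly positive and hence nonnegative.

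Since both cases cover all $(t,x) \in [0,\infty) \times \mbox{bd }\SC_{r-1}(t)$, the CBF inequality \cref{def:RCBF.1} holds everywhere on the boundary of the zero-superlevel set, and \Cref{def:RCBF} is satisfied. No step here looks like a serious obstacle: the only substantive ingredient is the dichotomy built into the definition of $\SB(t)$ together with \ref{con6}, and the verification is essentially a one-line argument in each case. The delicate point worth being explicit about in the write-up is simply that $\SB(t)$ was designed exactly so that its complement in $\mbox{bd }\SC_{r-1}(t)$ carries the strict inequality $\textstyle\frac{\partial \psi_{r-1}}{\partial t} + L_f \psi_{r-1} > 0$, which is what allows $u = 0$ to work off of $\SB(t)$.
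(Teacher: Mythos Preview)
Your proposal is correct and follows essentially the same approach as the paper: the paper also splits $\mbox{bd }\SC_{r-1}(t)$ into $\SB(t)$ and its complement, uses \ref{con6} to get $L_g\psi_{r-1}\ne 0$ (hence supremum $+\infty$) on $\SB(t)$, and uses the strict positivity of the drift term off $\SB(t)$ to verify \cref{def:RCBF.1} with $u=0$. There is nothing to add.
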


\begin{proof}[\indent Proof]
Let $(t_1,x_1) \in [0,\infty)\times \BBR^n$ be such that $x_1 \in \mbox{bd } \SC_{r-1}(t_1)$. 
We consider two case: $x_1 \in \SB(t_1)$ and $x_1 \not \in \SB(t_1)$. 
First, consider the case where $x_1 \in \SB(t_1)$, and \ref{con6} implies that $L_g \psi_{r-1}(t_1,x_1) \ne 0$. 
Thus, $\sup_{u \in \BBR^m} [ \frac{\partial \psi_{r-1}(t,x)}{\partial t} + L_f \psi_{r-1}(t,x) + L_g \psi_{r-1}(t,x) u ] = \infty$, which confirms \eqref{def:RCBF.1} for $x_1 \in \SB(t_1)$.
Next, consider the case where $x_1 \not \in \SB(t_1)$, and it follows from \eqref{eq: bd cr-1} that $\frac{\partial \psi_{r-1}(t,x)}{\partial t}|_{(t,x)=(t_1,x_1)} + L_f \psi_{r-1}(t_1,x_1) > 0$, which confirms \eqref{def:RCBF.1} for $x_1 \not \in \SB(t_1)$.
\end{proof}

Since $\psi_{r-1}$ is an R-CBF on $\SC_{r-1}(t)$, Nagumo's theorem \cite[Corollary~4.8]{blanchini2008set} suggests that it may be possible to construct a control such that $\SC_{r-1}(t)$ is forward invariant with respect to the closed-loop dynamics. 
However, $\SC_{r-1}(t)$ is not necessarily a subset of the safe set $\SSS_\rms(t)$.
Therefore, the next result is useful because it shows that forward invariance of $\SC_{r-1}(t)$ implies forward invariance of $\SC(t)$, which is a subset of the safe set $\SSS_\rms(t)$.

\begin{proposition}\label{prop:Forward_Invariant}
\rm
Consider \eqref{eq:affine control}, where \ref{con3} is satisfied and $x_0 \in \SC(0)$. 
Assume there exists $\bar{t} \in (0,\infty]$ such that for all $t \in [0,\bar{t})$, $x(t) \in \SC_{r-1}(t)$.
Then, for all $t \in [0,\bar{t})$, $x(t) \in \SC(t)$.
\end{proposition}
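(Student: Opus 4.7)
The plan is a backward induction on $i$ showing that for each $i \in \{0,1,\ldots,r-1\}$ and each $t \in [0,\bar t)$, $\psi_i(t,x(t)) \geq 0$; in view of \eqref{eq:HOCBF set} and \eqref{eq:Common set} this is exactly the claim that $x(t) \in \SC(t)$. The base case $i = r-1$ is the hypothesis of the proposition, so nothing needs to be done there.

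For the inductive step, suppose $i \leq r-2$ and $\psi_{i+1}(t,x(t)) \geq 0$ on $[0,\bar t)$. Proposition~\ref{prop.psi_i_properties} gives $L_g\psi_i \equiv 0$, so when $\psi_i(\cdot,x(\cdot))$ is differentiated along \eqref{eq:affine control} the control term drops out; substituting the defining identity \eqref{eq:HOCBF} of $\psi_{i+1}$ then yields
\begin{equation*}
\frac{{\rm d}}{{\rm d}t}\psi_i(t,x(t)) = \frac{\partial \psi_i(t,x(t))}{\partial t} + L_f\psi_i(t,x(t)) = \psi_{i+1}(t,x(t)) - \alpha_i(\psi_i(t,x(t))) \geq -\alpha_i(\psi_i(t,x(t))),
\end{equation*}
and $\psi_i(0,x_0) \geq 0$ holds because $x_0 \in \SC(0) \subseteq \SC_i(0)$.

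It then remains to conclude, from $y(0) \geq 0$ and $\dot y(t) \geq -\alpha_i(y(t))$ with $y(t) \triangleq \psi_i(t,x(t))$, that $y(t) \geq 0$ on $[0,\bar t)$. I would do this by a standard sign-chasing argument: assume for contradiction that $y(t_1) < 0$ for some $t_1 \in (0,\bar t)$, and set $t_0 \triangleq \sup\{t \in [0,t_1] : y(t) \geq 0\}$. Continuity together with $y(0) \geq 0$ give $t_0 \in [0,t_1)$ and $y(t_0) = 0$, and on $(t_0,t_1]$ we have $y < 0$. Since $\alpha_i$ is an extended class-$\SK$ function, $\alpha_i(y) < 0$ on that interval, so $\dot y > 0$ and $y$ is strictly increasing on $[t_0,t_1]$, contradicting $y(t_1) < 0 = y(t_0)$. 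Closing the induction at $i = 0$ delivers the proposition.

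The only delicate step is this final sign-chasing comparison: one has to confirm that the extended class-$\SK$ structure of $\alpha_i$ (strict monotonicity plus $\alpha_i(0) = 0$) alone suffices to block $y$ from ever crossing zero downward, with no Lipschitz regularity of $\alpha_i$ required. Everything else is bookkeeping, since Proposition~\ref{prop.psi_i_properties} removes the $L_g \psi_i u$ term for $i \leq r-2$ and thereby reduces each inductive step to the same scalar comparison argument, telescoping the nonnegativity of $\psi_{r-1}$ along the trajectory down through $\psi_{r-2},\ldots,\psi_0$.
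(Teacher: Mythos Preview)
Your proof is correct and follows essentially the same backward-induction scheme as the paper: use Proposition~\ref{prop.psi_i_properties} to eliminate the $L_g\psi_i\,u$ term for $i\le r-2$, rewrite the trajectory derivative of $\psi_i$ via \eqref{eq:HOCBF} as $\psi_{i+1}-\alpha_i(\psi_i)$, and then invoke a comparison principle to propagate nonnegativity from $\psi_{i+1}$ down to $\psi_i$. The only difference is that the paper outsources the comparison step to \cite[Lemma~2]{glotfelter2017nonsmooth}, whereas you supply the sign-chasing argument explicitly; your observation that the extended class-$\SK$ structure of $\alpha_i$ alone (no Lipschitz regularity) suffices is correct, and the minor imprecision that $\dot y>0$ need only hold on $(t_0,t_1]$ rather than $[t_0,t_1]$ does not affect the contradiction.
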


\begin{proof}[\indent Proof]
We use induction on $i$ to show that for all $i\in \{1,2,\ldots,r\}$ and all $t \in \ST \triangleq [0,\bar{t})$, $x(t) \in \SC_{r-i}(t)$. 
First, note that for all $t \in [0,\bar{t})$, $x(t) \in \SC_{r-1}(t)$, which confirms the $i=1$ case. 
Next, let $\ell \in \{1,2,\ldots,r-1\}$, and assume for induction that for all $t \in \ST$, $x(t) \in \SC_{r-\ell}(t)$. 
Thus, \Cref{eq:HOCBF set} implies that for all $t \in \ST$, $\psi_{r-\ell}(t,x(t)) \ge 0$.
Since \Cref{prop.psi_i_properties} implies that $L_g \psi_{r-(\ell+1)} = 0$, it follows from \eqref{eq:HOCBF} that $\psi_{r-\ell}(t,x(t)) = \dot \psi_{r-(\ell+1)}(t,x(t)) + \alpha_{r-(\ell+1)}(\psi_{r-(\ell+1)}(t,x(t)))$. 
Thus, for all $t \in \ST$, $\dot \psi_{r-(\ell+1)}(t,x(t)) + \alpha_{r-(\ell+1)}(\psi_{r-(\ell+1)}(t,x(t))) \ge 0$. 
Since, in addition, $x_0 \in \SC(0) \subseteq \SC_{r-(\ell+1)}(0)$, it follows from \cite[Lemma~2]{glotfelter2017nonsmooth}  that for all $t \in \ST$, $\psi_{r-(\ell+1)}(t,x(t)) \ge 0$. 
Hence, \Cref{eq:safe set final,eq:HOCBF set} imply that for all $t \in \ST$, $x(t) \in \SC_{r-(\ell+1)}$, which confirms the $i=\ell+1$ case.
Finally, since for all $i\in \{1,2,\ldots,r\}$ and all $t \in \ST$, $x(t) \in \SC_{r-i}(t)$, it follows from \eqref{eq:Common set} that all $t \in \ST$, $x(t) \in \SC(t)$.
\end{proof}

\Cref{prop:Forward_Invariant} shows that if $\psi_{r-1}$ is nonnegative along the solution trajectory of \eqref{eq:affine control}, then the state remains in a subset of the safe set. 
The next section presents an optimal and safe control that is synthesized using a constraint that ensures $\psi_{r-1}$ is nonnegative along the closed-loop trajectories, which, in turn, from \Cref{prop:Forward_Invariant} ensures that $\SC(t)$ is invariant under the closed-loop dynamics.

\section{Optimal and Safe Control in an \textit{A Priori} Unknown and Potentially Dynamic Safe Set}
\label{sec:control}

This section uses the soft-maximum R-CBF $\psi_{r-1}$ to construct a closed-form optimal control that guarantees safety. 
Consider the R-CBF safety constraint function $b \colon [0,\infty) \times \BBR^n \times \BBR^m \times \BBR \to \BBR$ given by
\begin{align}
b(t,x,\hat{u},\hat{\mu}) &\triangleq \textstyle\frac{\partial \psi_{r-1}(t,x)}{\partial t} + L_f \psi_{r-1}(t,x) + L_g\psi_{r-1}(t,x)\hat{u} \nn \\
        & \qquad +\alpha(\psi_{r-1}(t,x)) + \hat{\mu}\psi_{r-1}(t,x),
        \label{eq:safety_constraint}
\end{align}
where $\hat u$ is the control variable; $\hat \mu$ is a slack variable; and $\alpha \colon \BBR \to \BBR$ is locally Lipschitz and nondecreasing such that $\alpha(0)=0$. 
Next, let $\gamma > 0$, and consider the cost function  $\SJ \colon [0,\infty) \times \BBR^n \times \BBR^m \times \BBR \to \BBR$ given by
\begin{align}
 \SJ(t,x,\hat{u},\hat{\mu}) &\triangleq J(t, x,\hat{u}) + \frac{\gamma}{2} \hat{\mu}^2\nn\\
 &= \frac{1}{2} \hat u^\rmT Q(t,x) \hat u + c(t,x)^\rmT \hat u + \frac{\gamma}{2} {\hat \mu}^2, \label{eq:SJ}
\end{align} 
which is equal to the cost \eqref{eq:J} plus a term that is quadratic in the slack variable $\hat{\mu}$.
The objective is to synthesize $(\hat u,\hat \mu)$ that minimizes the cost $\SJ(t,x,\hat{u},\hat{\mu})$ subject to the R-CBF safety constraint $b(t,x,\hat{u},\hat{\mu}) \ge 0$.

The slack-variable term $\hat{\mu}\psi_{r-1}$ in \eqref{eq:safety_constraint} ensures the constraint $b(t,x,\hat{u},\hat{\mu}) \ge 0$ is feasible. 
Specifically, the slack-variable term ensures that for all $(t,x) \in [0,\infty) \times \BBR^n$, there exists $\hat u$ and $\hat \mu$ such that $b(t,x,\hat{u},\hat{\mu}) \ge 0$.
Since $\psi_{r-1}$ is an R-CBF, it follows from \Cref{def:RCBF} that 
for all $(t,x) \in [0,\infty) \times \mbox{bd } \SC_{r-1}(t)$, there exists $\hat u$ such that the constraint without the slack variable is satisfied (i.e., $b(t,x,\hat{u},0) \ge 0$). 
However, the constraint without the slack variable may not be feasible for $x \not \in \mbox{bd } \SC_{r-1}(t)$. 
CBF-based constraints often do not include the slack term $\hat{\mu}\psi_{r-1}$.
In this case, additional assumptions are typically needed to guarantee feasibility. 
Note that the slack variable approach is also used in \cite{lindemann2018control,ames2014control,nguyen2016exponential,rabiee2024closed, compositionACC}.


For each $(t,x) \in [0,\infty) \times \BBR^n$, the minimizer of $\SJ(t,x,\hat u,\hat \mu)$ subject to $b(t,x,\hat{u},\hat{\mu}) \ge 0$ can be obtained from the first-order necessary conditions for optimality.
The derivation is in Appendix~\ref{appendix:CformControl} and yields the control $u_* \colon [0,\infty) \times \BBR^n$ defined by
\begin{equation}
\mspace{-10mu} u_* (t,x) \triangleq -Q(t,x)^{-1} \Big (c(t,x) - \lambda(t,x) L_g\psi_{r-1}(t,x)^\rmT \Big ), \label{eq:uclose}  
\end{equation}
where $\lambda,\omega,d \colon [0, \infty) \times \BBR^n \to \BBR$ are defined by 
\begin{align}
        \lambda(t,x) &\triangleq \max \, \left \{ 0,
            \tfrac{-\omega(t,x)}{d(t,x)} \right \} \label{eq:ulambda} \\
%
        %
\omega(t,x) &\triangleq b(t,x,-Q(t,x)^{-1}c(t,x),0) \nn \\ 
&= \textstyle\frac{\partial \psi_{r-1}(t,x)}{\partial t} + L_f \psi_{r-1}(t,x) + \alpha(\psi_{r-1}(t,x))\nn \\ &\qquad 
- L_g\psi_{r-1}(t,x)Q(t,x)^{-1}c(t,x), \label{eq:omegabar}\\
d(t,x) &\triangleq L_g \psi_{r-1}(t,x)Q(t,x)^{-1} L_g \psi_{r-1}(t,x)^\rmT \nn \\ &\qquad 
+ \gamma^{-1}\psi_{r-1}(t,x)^2. \label{eq:dxt}
\end{align}
Furthermore, the slack variable that satisfies the first-order necessary conditions for optimality is $\mu_* \colon [0,\infty) \times \BBR^n \to \BBR$ defined by
\begin{equation}
    \mu_*(t,x) \triangleq  \tfrac{\psi_{r-1}(t,x) \lambda(t,x)}\gamma. \label{eq:mu_close}
\end{equation}

The control $u_*$ and slack variable $\mu_*$ depend on the user-selected parameter $\gamma >0$ and the user-selected functions $\alpha$ and $\alpha_{0},\ldots,\alpha_{r-2}$. 
The extended class-$\SK$ functions $\alpha_{0},\ldots,\alpha_{r-2}$ are similar to those used in typical higher-order CBF approaches. 
These functions influence the size and shape of the forward invariant set $\SC$ as well as the aggressiveness of the control.
The parameter $\gamma >0$ weights the slack-variable term in the cost \eqref{eq:SJ} and influences the aggressiveness of the control $u_*$. 
If $\gamma > 0$ is small (e.g., as $\gamma \to 0$), then the optimal control $u_*$ is aggressive in that it allows the state to get close to the boundary of $\SC$.
Specifically, if $\gamma$ is small, then for all $x$ outside a small neighborhood of $\SB$, the optimal control $u_*$ is approximately equal to $-Q^{-1}c$, which is the unconstrained minimizer of $J$. 
On the other hand, if $\gamma>0$ is large (e.g., as $\gamma \to \infty$), then 
$u_*$ is approximately equal to the control generated by the quadratic program with zero slack variable unless nonzero slack is required for the safety constraint to be feasible (which occurs if $\omega(t,x) < 0$ and $L_g \psi_{r-1}(t,x) \approx 0$). 
Thus, large $\gamma$ effectively disables the slack unless it is required for feasibility.
If $\gamma>0$ is large, then the aggressiveness of $u_*$ is determined by $\alpha$, which is similar to the class-$\SK$ function used in a standard CBF-based constraint.

The next result demonstrates that $u_*$ and $\mu_*$ satisfy the safety constraint (i.e., $b(t,x,u_*(t,x),\mu_*(t,x)) \ge 0$).
The result also provides useful properties regarding the continuity of $u_*$, $\mu_*$, and $\lambda$ given by \Cref{eq:uclose,eq:ulambda,eq:omegabar,eq:dxt,eq:mu_close}.

\begin{proposition}\label{prop:ClosedForm}
\rm
Assume \ref{con6} is satisfied. 
Then, the following statements hold: 

\begin{enumerate}

\item \label{propE} 
For all $(t,x) \in [0,\infty) \times \BBR^n$, $b(t,x,u_*(t,x),\mu_*(t,x)) = \max \, \{0,\omega(t,x)\}$
    
\item \label{propC} 
$u_*$, $\mu_*$, and $\lambda$ are continuous on $[0, \infty) \times \BBR^n$.

\item\label{propD} 
Assume that $\psi_{r-1}^{\prime}$ is locally Lipschitz in $x$ on $\mathcal{D} \subseteq \mathbb{R}^{n}$. 
Then, $u_*$, $\mu_*$, and $\lambda$ are locally Lipschitz in $x$ on $\mathcal{D}$. 

\item \label{propA.1} 
Let $(t_1,x_1) \in [0,\infty)\times \BBR^n$ be such that $\omega(t_1,x_1) \le 0$.
Then, $d(t_1,x_1) > 0$.

\end{enumerate}
\end{proposition}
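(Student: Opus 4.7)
The plan is to prove the four parts in the order (d), (a), (b), (c), because (d) guarantees that the denominator in $\lambda$ can only vanish where $\omega>0$, which is the fact that unlocks the remaining parts.

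For (d), I will argue by contrapositive. Suppose $d(t_1,x_1)=0$. Since $Q(t_1,x_1)$ is positive definite, \eqref{eq:dxt} forces $L_g\psi_{r-1}(t_1,x_1)=0$ and $\psi_{r-1}(t_1,x_1)=0$ simultaneously. Together with $\alpha(0)=0$, expression \eqref{eq:omegabar} then collapses to $\omega(t_1,x_1)=\frac{\partial \psi_{r-1}(t_1,x_1)}{\partial t}+L_f\psi_{r-1}(t_1,x_1)$. If additionally $\omega(t_1,x_1)\le 0$, then $\psi_{r-1}(t_1,x_1)=0$ places $x_1\in\mbox{bd}\,\SC_{r-1}(t_1)$ and \eqref{eq: bd cr-1} places $x_1\in\SB(t_1)$, which combined with $L_g\psi_{r-1}(t_1,x_1)=0$ contradicts \ref{con6}.

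For (a), I will substitute $u_*$ and $\mu_*$ from \eqref{eq:uclose} and \eqref{eq:mu_close} into \eqref{eq:safety_constraint} and use the definitions \eqref{eq:omegabar} and \eqref{eq:dxt} to collapse $b(t,x,u_*(t,x),\mu_*(t,x))$ to $\omega(t,x)+\lambda(t,x)d(t,x)$. I then split on the sign of $\omega$: when $\omega(t,x)\ge 0$, \eqref{eq:ulambda} together with (d) (which gives $d>0$ whenever $\omega=0$) yields $\lambda(t,x)=0$, hence $b=\omega=\max\{0,\omega\}$; when $\omega(t,x)<0$, part (d) gives $d(t,x)>0$, so $\lambda(t,x)=-\omega(t,x)/d(t,x)$ and therefore $\omega+\lambda d=0=\max\{0,\omega\}$.

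For (b) and (c), the strategy is that $\omega$ and $d$ inherit continuity (resp.\ local Lipschitz dependence on $x$ on $\mathcal{D}$) from $\psi_{r-1}$, its spatial derivative, $Q$, and $c$. The only threat to the regularity of $\lambda=\max\{0,-\omega/d\}$ is the zero set of $d$. On the open set $\{d>0\}$ the ratio is regular and the outer $\max\{0,\cdot\}$ preserves both continuity and local Lipschitz continuity. At a point $(t_1,x_1)$ with $d(t_1,x_1)=0$, part (d) implies $\omega(t_1,x_1)>0$, so continuity of $\omega$ yields a neighborhood on which $\omega>0$ and $d\ge 0$; on that neighborhood $-\omega/d\le 0$ wherever $d>0$ and hence $\lambda\equiv 0$, which is trivially continuous and Lipschitz. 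The corresponding regularity of $u_*$ and $\mu_*$ then follows by inspection of \eqref{eq:uclose} and \eqref{eq:mu_close}, since $L_g\psi_{r-1}$ and $\psi_{r-1}$ have the required regularity and $Q^{-1}$ is smooth in its positive-definite entries.

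The main obstacle is the set $\{d=0\}$, where the ratio defining $\lambda$ is a formal $0/0$. Part (d) is the decisive tool: it confines this set inside $\{\omega>0\}$, which pins $\lambda$ to zero on a whole neighborhood of every such point and thereby rescues both the closed-form identity in (a) and the regularity claims in (b) and (c).
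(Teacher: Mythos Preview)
Your proposal is correct and follows essentially the same approach as the paper: both prove part~\ref{propA.1} first by contradiction from \ref{con6}, then reduce $b(t,x,u_*,\mu_*)$ to $\omega+\lambda d$ and read off the $\max\{0,\omega\}$ identity, and finally use \ref{propA.1} to localize the zero set of $d$ inside $\{\omega>0\}$ for the regularity of $\lambda$. Your treatment of parts~\ref{propC} and~\ref{propD} is in fact more explicit than the paper's, which simply asserts that continuity of $\omega$ together with \ref{propA.1} and \eqref{eq:ulambda} yields continuity of $\lambda$ without spelling out the neighborhood argument you give at points where $d=0$.
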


\begin{proof}[\indent Proof]

To prove \ref{propA.1}, assume for contradiction that $d(t_1,x_1) =0$. 
Since $\gamma >0$, and $Q(t_1,x_1)$ is positive definite, it follows from \eqref{eq:dxt} that $L_g \psi_{r-1}(t_1,x_1)=0$ and $\psi_{r-1}(t_1,x_1)=0$. 
Thus, \eqref{eq:omegabar} implies 
\begin{equation}\label{eq:omega_t1x1}
\omega(t_1,x_1) = \left . \textstyle\frac{\partial \psi_{r-1}(t,x_1)}{\partial t} \right |_{t=t_1} + L_f \psi_{r-1}(t_1,x_1).
\end{equation}
Since $L_g \psi_{r-1}(t_1,x_1)=0$ and $\psi_{r-1}(t_1,x_1)=0$, it follows from 
\ref{con6}, \eqref{eq: bd cr-1}, and \eqref{eq:omega_t1x1} that $\omega(t_1,x_1) > 0$, which is a contradiction. 
Therefore, $d(t_1,x_1) \ne 0$, which implies $d(t_1,x_1) > 0$.

To prove \ref{propC}, it follows from \eqref{eq:HOCBF} and \ref{prop.h.1} of \Cref{prop.h} that $\psi_{r-1}$ is continuously differentiable on $[0,\infty) \times \BBR^n$. 
Thus, $\frac{\partial \psi_{r-1}}{\partial t}$, $\psi_{r-1}$, $L_f \psi_{r-1}$ and $L_g \psi_{r-1}$ are continuous on $[0, \infty) \times \BBR^n$.
Since, in addition, $Q^{-1}$ and $c$ are continuous on $[0, \infty) \times \BBR^n$, it follows from \eqref{eq:omegabar} that $\omega$ is continuous on $[0, \infty) \times \BBR^n$.  
Since $\omega$ is continuous on $[0, \infty) \times \BBR^n$, it follows from  \ref{propA.1} and \Cref{eq:ulambda} that $\lambda$ is continuous on $[0, \infty) \times \BBR^n$, which combined with \Cref{eq:mu_close,eq:uclose} implies that $u_*$ and $\mu_*$ are continuous on $[0, \infty) \times \BBR^n$.

To prove \ref{propD}, note that $f$, $g$ are locally Lipschitz on $\BBR^n$ and  $Q^{-1}$, $c$, and $\psi_{r-1}$ are locally Lipschitz in $x$ on $\BBR^{n}$. Since, in addition, $\psi_{r-1}^{\prime}$ is locally Lipschitz in $x$ on $\mathcal{D} $, it follows from \Cref{eq:dxt,eq:omegabar,eq:safety_constraint} that $\omega$ and $d$ are locally Lipschitz in $x$ on $\mathcal{D}$.
Thus, \Cref{eq:ulambda} implies that $\lambda$ is locally Lipschitz in $x$ on $\mathcal{D}$, which combined with \Cref{eq:uclose,eq:mu_close} implies that $u_*$ and $\mu_*$ are locally Lipschitz in $x$ on $\mathcal{D}$.

To prove \ref{propE}, substituting \Cref{eq:mu_close,eq:uclose} into \Cref{eq:safety_constraint} and using \Cref{eq:omegabar,eq:dxt} yields
\begin{align*}
b(t,x,u_*,\mu_*) &= \textstyle\frac{\partial \psi_{r-1}}{\partial t} + L_f \psi_{r-1} +\alpha(\psi_{r-1}) - L_g\psi_{r-1}Q^{-1}c  \nn\\
&\qquad +\lambda \left ( L_g \psi_{r-1} Q^{-1} L_g \psi_{r-1}^\rmT + \gamma^{-1}\psi_{r-1}^2 \right ) \nn\\
&= \omega +\lambda d,
\end{align*}
where the arguments $(t,x)$ are omitted for brevity. 
Then, substituting \eqref{eq:ulambda} yields \ref{propE}. 
\end{proof}

Part~\ref{propE} of \Cref{prop:ClosedForm} demonstrates that the control $u_*$ and slack variable $\mu_*$ satisfy the R-CBF safety constraint; however, \Cref{prop:ClosedForm} does not address optimality. 
The next result demonstrates global optimality, specifically, $(u_*(t,x),\mu_*(t,x))$ is the unique global minimizer of $\SJ(t,x,\hat u,\hat \mu)$ subject to $b(t,x,\hat u,\hat \mu) \ge 0$.

\begin{theorem}\label{thm:global minimizer}
\rm
Assume \ref{con6} is satisfied. 
Let $t \ge 0$ and $x \in \BBR^n$.
Furthermore, let $u \in \BBR^m$ and $\mu \in \BBR$ be such that $b(t,x,u,\mu) \ge 0$ and $(u,\mu) \ne (u_*(t,x),\mu_*(t,x))$.
Then, 
\begin{equation} \label{eq:Global_Min}
    \SJ(t,x,u,\mu) > \SJ(t,x,u_*(t,x),\mu_*(t,x)).
\end{equation}
\end{theorem}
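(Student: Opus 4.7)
The plan is to exploit the fact that $\SJ(t,x,\cdot,\cdot)$ is a strictly convex quadratic in $(\hat u,\hat\mu)$ while the safety constraint $b(t,x,\hat u,\hat\mu)\ge 0$ is affine in $(\hat u,\hat\mu)$, so the problem is a strictly convex QP with a single affine inequality constraint. For such a problem the KKT conditions are necessary and sufficient for the unique global minimum, and the derivation in Appendix~\ref{appendix:CformControl} already exhibits $(u_*(t,x),\mu_*(t,x))$ with multiplier $\lambda(t,x)$ as the KKT point; the theorem is then almost immediate.

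Concretely, I would first record that the Hessian of $\SJ(t,x,\cdot,\cdot)$ is $\operatorname{diag}(Q(t,x),\gamma)$, which is positive definite because $Q(t,x)\succ 0$ and $\gamma>0$. This gives strict convexity of $\SJ$ in $(\hat u,\hat\mu)$. From \eqref{eq:safety_constraint}, the constraint is affine in $(\hat u,\hat\mu)$, so the feasible set is a closed half-space; the slack variable $\hat\mu$ guarantees nonemptiness. Next I would verify the KKT conditions for $(u_*,\mu_*)$: stationarity $Q\hat u + c = \lambda L_g\psi_{r-1}^\rmT$ and $\gamma\hat\mu = \lambda\psi_{r-1}$ reproduces~\eqref{eq:uclose} and~\eqref{eq:mu_close}; dual feasibility $\lambda\ge 0$ is built into~\eqref{eq:ulambda}; primal feasibility is Proposition~\ref{prop:ClosedForm}\ref{propE}; and complementary slackness splits by the sign of $\omega(t,x)$, with $\lambda=0$ when $\omega(t,x)\ge 0$ and, when $\omega(t,x)<0$, Proposition~\ref{prop:ClosedForm}\ref{propA.1} giving $d(t,x)>0$, hence $\lambda=-\omega/d>0$ and $b(t,x,u_*,\mu_*)=\omega+\lambda d = 0$.

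The cleanest way to extract the strict inequality \eqref{eq:Global_Min}, rather than invoking a black-box convex-programming theorem, is to expand directly. Setting $\Delta u \triangleq u-u_*(t,x)$ and $\Delta\mu \triangleq \mu-\mu_*(t,x)$, stationarity yields
\begin{equation*}
\SJ(t,x,u,\mu) - \SJ(t,x,u_*,\mu_*) = \lambda(t,x)\bigl[b(t,x,u,\mu) - b(t,x,u_*,\mu_*)\bigr] + \tfrac{1}{2}\Delta u^\rmT Q(t,x)\Delta u + \tfrac{\gamma}{2}\Delta\mu^2.
\end{equation*}
The quadratic remainder is nonnegative and is zero only when $(\Delta u,\Delta\mu)=(0,0)$. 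In the case $\omega(t,x)\ge 0$ the first term vanishes ($\lambda=0$), leaving a strictly positive quadratic whenever $(u,\mu)\ne (u_*,\mu_*)$. In the case $\omega(t,x)<0$ we have $\lambda>0$ and $b(t,x,u_*,\mu_*)=0$, so primal feasibility $b(t,x,u,\mu)\ge 0$ makes the bracketed term nonnegative; the strictly positive quadratic remainder then forces strict inequality.

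There is no real obstacle: the constraint is affine, so no constraint qualification is needed, and the main care is only in the bookkeeping of the two cases for $\omega(t,x)$ and in verifying that the identity above follows from stationarity alone. The only subtlety worth flagging is that the decomposition uses $\lambda(t,x)$ defined in \eqref{eq:ulambda} together with Proposition~\ref{prop:ClosedForm}\ref{propA.1}, both of which require Assumption~\ref{con6} to guarantee that the denominator $d(t,x)$ in \eqref{eq:ulambda} is nonzero whenever $\omega(t,x)\le 0$.
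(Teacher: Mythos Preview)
Your proposal is correct and takes essentially the same approach as the paper: both expand $\SJ(t,x,u,\mu)-\SJ(t,x,u_*,\mu_*)$ into the strictly positive quadratic remainder $\tfrac{1}{2}\Delta u^{\rmT}Q\Delta u+\tfrac{\gamma}{2}\Delta\mu^2$ plus a nonnegative term involving $\lambda$ and $b$, then invoke $\lambda\ge 0$ and feasibility. The only cosmetic differences are that you organize the computation around KKT stationarity while the paper does bare add-and-subtract algebra, and your identity carries the extra term $-\lambda\,b(t,x,u_*,\mu_*)$, which vanishes by complementary slackness; the paper absorbs this upfront and writes $\Psi=\lambda\,b(t,x,u,\mu)$ directly, so no case split on the sign of $\omega$ is needed at the end.
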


\begin{proof}[\indent Proof]

Define 
$\Delta \SJ \triangleq \SJ(t,x,u,\mu) - \SJ(t,x,u_*(t,x),$ $\mu_*(t,x))$,   
and it follows from \eqref{eq:SJ} that 
\begin{equation}\label{eq:Delta_SJ}
\Delta \SJ = \frac{1}{2} u^\rmT Q u + c^\rmT u + \frac{\gamma}{2} {\mu}^2  - \frac{1}{2} u_*^\rmT Q u_* - c^\rmT u_* - \frac{\gamma}{2} {\mu_*}^2,
\end{equation}
where $(t,x)$ are omitted for brevity. 
Adding and subtracting $\frac{1}{2} u_*^\rmT Q u_* - u^\rmT Q u_* + \frac{\gamma}{2} {\mu_*}^2 - \gamma \mu \mu_*$ to the right-hand side of \eqref{eq:Delta_SJ} yields
\begin{align}
\Delta \SJ &= \frac{1}{2} u^\rmT Q u - u^\rmT Q u_* + \frac{1}{2} u_*^\rmT Q u_*  \nn\\
&\qquad + \frac{\gamma}{2} {\mu}^2 - \gamma \mu \mu_* + \frac{\gamma}{2} {\mu_*}^2 + \Psi \nn\\
&= \frac{1}{2} (u-u_*)^\rmT Q (u-u_*) + \frac{\gamma}{2} (\mu-\mu_*)^2 +\Psi, \label{eq:Delta_SJ.2}
\end{align}
where
\begin{equation}
\Psi \triangleq c^\rmT u  - c^\rmT u_* - u_*^\rmT Q u_* + u^\rmT Q u_* - \gamma {\mu_*}^2 + \gamma \mu \mu_* . \label{eq:Psi}
\end{equation}
Substituting \Cref{eq:uclose,eq:mu_close} into \eqref{eq:Psi} yields 
\begin{align}
\Psi &= \lambda L_g\psi_{r-1} Q^{-1} c + \lambda L_g\psi_{r-1} u - \lambda^2 L_g\psi_{r-1} Q^{-1} L_g\psi_{r-1}^\rmT \nn\\
&\qquad - \lambda^2 \gamma^{-1} \psi_{r-1}^2 + \lambda \mu \psi_{r-1}\nn
\end{align}
and using \eqref{eq:dxt} and \eqref{eq:ulambda} implies that $\Psi =\lambda \left ( L_g\psi_{r-1} Q^{-1} c + L_g\psi_{r-1} u + \omega + \mu \psi_{r-1} \right )$.
Thus, \Cref{eq:safety_constraint,eq:omegabar} imply that $\Psi = \lambda b(t,x,u,\mu)$, which combined with \eqref{eq:Delta_SJ.2} yields
\begin{equation}
\Delta \SJ = \frac{1}{2} (u-u_*)^\rmT Q (u-u_*) + \frac{\gamma}{2} (\mu-\mu_*)^2 +\lambda b(t,x,u,\mu). \label{eq:Delta_SJ.3}
\end{equation}
Part~\ref{propA.1} of \Cref{prop:ClosedForm} and \eqref{eq:ulambda} imply that $\lambda \ge 0$. 
Since, in addition, $b(t,x,u,\mu) \ge 0$, it follows from \eqref{eq:Delta_SJ.3} that 
\begin{equation}
\Delta \SJ \ge \frac{1}{2} (u-u_*)^\rmT Q (u-u_*) + \frac{\gamma}{2} (\mu-\mu_*)^2. \label{eq:Delta_SJ.4}
\end{equation}
Since $Q$ is positive definite, $\gamma > 0$, and $(u,\mu) \ne (u_*,\mu_*)$, it follow from \eqref{eq:Delta_SJ.4} that $\Delta \SJ > 0$, which yields \eqref{eq:Global_Min}.
\end{proof}

The following theorem is the main result on safety using the  control $u_*$, which is constructed from the real-time perception information $b_k$ and the soft-maximum R-CBF $\psi_{r-1}$.

\begin{theorem}\label{Th:Main th}
\rm
Consider \eqref{eq:affine control}, where \ref{con1}--\ref{con6} are satisfied. 
Let $u=u_*$, where $u_*$ is given by~\Cref{eq:softmax_h,eq:HOCBF,eq:uclose,eq:ulambda,eq:omegabar,eq:dxt}.
Assume that $\psi^\prime_{r-1}$ is locally Lipschitz in $x$ on $\BBR^n$.
Then, for all $x_0 \in \SC(0)$, the following statements hold: 
\begin{enumerate}
\item \label{main.thm.1}
There exists a maximum value $t_{\rm m}(x_0) \in (0,\infty]$ such that \eqref{eq:affine control} with $u=u_*$ has a unique solution on $[0, t_{\rm m}(x_0))$. 

\item \label{main.thm.2}
For all $t \in [0, t_{\rm m}(x_0))$, $x(t) \in \SC(t) \subseteq \SSS_\rms(t)$.

\end{enumerate}
\end{theorem}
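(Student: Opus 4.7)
The plan is to establish \ref{main.thm.1} and \ref{main.thm.2} in order. For \ref{main.thm.1}, I would combine the regularity conclusions of \Cref{prop:ClosedForm} with classical ODE theory to obtain a unique maximally extended solution. For \ref{main.thm.2}, the key observation is that Part~\ref{propE} of \Cref{prop:ClosedForm} yields an R-CBF-type differential inequality for $\psi_{r-1}$ along the closed-loop trajectory; I would use this inequality to show that $x(t) \in \SC_{r-1}(t)$ throughout the interval of existence, promote this to $x(t) \in \SC(t)$ via \Cref{prop:Forward_Invariant}, and finally apply \Cref{fact:S(t)} to conclude $x(t) \in \SSS_\rms(t)$.

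To carry out \ref{main.thm.1}, define the closed-loop vector field $F(t,x) \triangleq f(x) + g(x) u_*(t,x)$. Part~\ref{propC} of \Cref{prop:ClosedForm} gives continuity of $u_*$ on $[0,\infty) \times \BBR^n$, and Part~\ref{propD} of \Cref{prop:ClosedForm} combined with the hypothesis that $\psi_{r-1}^\prime$ is locally Lipschitz in $x$ on $\BBR^n$ yields local Lipschitz continuity of $u_*$ in $x$. Since $f$ and $g$ are locally Lipschitz on $\BBR^n$, $F$ is continuous in $t$ and locally Lipschitz in $x$, so a standard ODE existence/uniqueness theorem together with continuation produces a maximal forward-time interval $[0, t_{\rm m}(x_0))$ on which the unique solution exists.

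For \ref{main.thm.2}, the central step is to prove $\psi_{r-1}(t,x(t)) \ge 0$ for all $t \in [0, t_{\rm m}(x_0))$. Along the closed-loop trajectory,
\begin{equation*}
\dot \psi_{r-1}(t,x(t)) = \tfrac{\partial \psi_{r-1}(t,x(t))}{\partial t} + L_f \psi_{r-1}(t,x(t)) + L_g \psi_{r-1}(t,x(t)) u_*(t,x(t)),
\end{equation*}
so substituting into \eqref{eq:safety_constraint} and invoking Part~\ref{propE} of \Cref{prop:ClosedForm} yields
\begin{equation*}
\dot \psi_{r-1}(t,x(t)) + \alpha(\psi_{r-1}(t,x(t))) + \mu_*(t,x(t))\psi_{r-1}(t,x(t)) \ge 0.
\end{equation*}
Since $x_0 \in \SC(0) \subseteq \SC_{r-1}(0)$, it follows that $\psi_{r-1}(0,x_0) \ge 0$. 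A Nagumo-type comparison argument, analogous to the use of \cite[Lemma~2]{glotfelter2017nonsmooth} already invoked in the proof of \Cref{prop:Forward_Invariant}, then yields $\psi_{r-1}(t,x(t)) \ge 0$ on $[0, t_{\rm m}(x_0))$, so that $x(t) \in \SC_{r-1}(t)$. Applying \Cref{prop:Forward_Invariant} promotes this conclusion to $x(t) \in \SC(t)$, and \eqref{eq:Common set} together with \Cref{fact:S(t)} yields $\SC(t) \subseteq \SC_0(t) \subseteq \SSS_\rms(t)$.

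I expect the main technical subtlety to be the comparison step for $\psi_{r-1}$: the effective damping $z \mapsto \alpha(z) + \mu_*(t,x(t)) z$ depends on the trajectory rather than being a fixed extended class-$\SK$ function. However, $\alpha$ is locally Lipschitz with $\alpha(0) = 0$, $\mu_*$ is continuous by Part~\ref{propC} of \Cref{prop:ClosedForm}, and $\mu_*(t,x(t)) z$ vanishes at $z = 0$, so the scalar time-varying ODE $\dot z = -\alpha(z) - \mu_*(t,x(t)) z$ is locally Lipschitz in $z$ uniformly on compact time sub-intervals and admits $z \equiv 0$ as its unique solution through the origin. This is exactly what is needed for the cited forward-invariance lemma to apply.
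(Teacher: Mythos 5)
Your argument is correct, and part~\ref{main.thm.1} is essentially identical to the paper's (Prop.~\ref{prop:ClosedForm}~\ref{propC},\ref{propD} plus \cite[Theorem~3.1]{khalil2002control}). For part~\ref{main.thm.2}, however, you take a genuinely different route from the paper at the central step of establishing $x(t) \in \SC_{r-1}(t)$. The paper augments the state with time, $\tilde x = (t,x)$, so that the closed loop becomes autonomous, and then applies Nagumo's theorem directly to $\tilde\SC_{r-1}$: on $\mbox{bd}\,\tilde\SC_{r-1}$ one has $\psi_{r-1}(t_1,x_1)=0$, so the $\alpha(\psi_{r-1})$ and $\mu_*\psi_{r-1}$ terms in $b$ vanish and $L_{\tilde f}\tilde\psi_{r-1}(\tilde x_1)=\max\{0,\omega(t_1,x_1)\}\ge 0$ falls out immediately. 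You instead retain the full inequality $\dot\psi_{r-1}+\alpha(\psi_{r-1})+\mu_*(t,x(t))\psi_{r-1}\ge 0$ everywhere and run a Glotfelter-style comparison argument. That is also valid, but it forces you to address the time-varying damping $z\mapsto\alpha(z)+\mu_*(t,x(t))z$, which is not an extended class-$\SK$ function and can even be decreasing near $z=0$ when $\mu_*<0$. Your justification — local Lipschitz continuity in $z$ uniform on compact time intervals plus the fact that the right-hand side vanishes at $z=0$ — is the correct ingredient (a backward Gronwall bound on any excursion into $\{z<0\}$ gives the contradiction cleanly), and it is the same flavor of argument the paper already delegates to \cite[Lemma~2]{glotfelter2017nonsmooth} inside Prop.~\ref{prop:Forward_Invariant}. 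The trade-off: the paper's augmentation-plus-Nagumo route avoids this delicacy entirely because the boundary condition is evaluated only where $\psi_{r-1}=0$, whereas your route is a single global differential inequality and is arguably more uniform with the style of Prop.~\ref{prop:Forward_Invariant}, at the cost of needing to verify that the cited comparison lemma really does apply to the trajectory-dependent damping.
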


\begin{proof}[\indent Proof]

To prove \ref{main.thm.1}, it follows from \ref{propC}  and \ref{propD} of \Cref{prop:ClosedForm} that $u_*$ is continuous in $t$ on $[0,\infty)$ and locally Lipschitz in $x$ on $\BBR^n$. 
Since, in addition, $f$ and $g$ are locally Lipschitz on $\BBR^n$, it follows from \cite[Theorem~3.1]{khalil2002control} that \eqref{eq:affine control} with $u=u_*$ has a unique solution on $[0, t_{\rm m}(x_0))$.

To prove \ref{main.thm.2}, it follows from \eqref{eq:affine control} with $u=u_*$ that 
\begin{equation}
\dot{\tilde x} = \tilde f(\tilde x), \label{eq:combo_sys.1}
\end{equation}
where
\begin{equation}\label{eq:combo_sys.2}
    \tilde x \triangleq \begin{bmatrix} t \\ x \end{bmatrix}, \, \,
    \tilde f(\tilde x) \triangleq \begin{bmatrix} 1 \\ f(x) + g(x) u_*(t,x) \end{bmatrix}, 
    \, \,
    \tilde x_0 \triangleq \begin{bmatrix} 0 \\ x_0 \end{bmatrix}.
\end{equation}
Next, define
\begin{gather}
\tilde \psi_{r-1}(\tilde x) \triangleq \psi_{r-1}(t,x), \label{eq:combo_sys.3}\\
\tilde \SC_{r-1} \triangleq \{ \tilde x \in [0,\infty) \times \BBR^n \colon \tilde \psi_{r-1}(\tilde x) \ge 0 \}, \label{eq:combo_sys.4}
\end{gather}
and it follows from \Cref{eq:combo_sys.2,eq:combo_sys.3} that 
\begin{align}\label{eq:tilde_Lfpsi}
L_{\tilde f} \tilde \psi_{r-1}(\tilde x) &= \textstyle\frac{\partial \psi_{r-1}(t,x)}{\partial t} + L_f \psi_{r-1}(t,x)\nn\\
&\qquad+ L_g\psi_{r-1}(t,x) u_*(t,x).
\end{align}
Let $\tilde x_1 \in \mbox{bd } \tilde \SC_{r-1}$,  and let $(t_1,x_1) \in [0,\infty) \times \BBR^n$ be such that $\tilde x_1^\rmT = [ \, t_1 \quad x_1 \, ]^\rmT$. 
Since $\tilde x_1 \in \mbox{bd } \tilde \SC_{r-1}$, it follows from 
\Cref{eq:combo_sys.3,eq:combo_sys.4} that $\psi_{r-1}(t_1,x_1) = 0$. 
Thus, substituting \Cref{eq:uclose} into \eqref{eq:tilde_Lfpsi}, and using \Cref{eq:ulambda,eq:omegabar,eq:dxt} yields $L_{\tilde f} \tilde \psi_{r-1}(\tilde x_1) = \omega(t_1,x_1) + \lambda(t_1,x_1) d(t_1,x_1) = \max \, \{0, \omega(t_1,x_1) \}$.
Thus, for all $\tilde x \in \mbox{bd } \tilde \SC_{r-1}$, $ L_{\tilde f} \tilde \psi_{r-1}(\tilde x) \ge 0$.

Next, since $x_0 \in \SC(0) \subseteq \SC_{r-1}(0)$, it follows from \Cref{eq:HOCBF set,eq:combo_sys.3,eq:combo_sys.4} that $\tilde x_0 \in \tilde \SC_{r-1}$. 
Since, in addition, for all $\tilde x \in \mbox{bd } \tilde \SC_{r-1}$, $ L_{\tilde f} \tilde \psi_{r-1}(\tilde x) \ge 0$, it follows from 
Nagumo's theorem \cite[Corollary~4.8]{blanchini2008set} that for all $t \in [0, t_{\rm m}(x_0))$, $\tilde x(t) \in \tilde \SC_{r-1}$.
Thus, \Cref{eq:HOCBF set,eq:combo_sys.3,eq:combo_sys.4} imply that for all $t \in [0, t_{\rm m}(x_0))$, $x(t) \in \SC_{r-1}(t)$.
Therefore, \Cref{prop:Forward_Invariant} implies that for all $t \in [0, t_{\rm m}(x_0))$, $x(t) \in \SC(t)$, which combined with \Cref{fact:S(t)} proves \ref{main.thm.2}. 
\end{proof}

Together, \Cref{thm:global minimizer,Th:Main th} show that $(u_*,\mu_*)$ is the globally optimal solution to an constrained optimization problem and that the R-CBF constraint used in that optimization guarantees safety, specifically, forward invariance of $\SC(t) \subseteq \SSS_\rms(t)$.

The next two sections demonstrate the optimal and safe control~\Cref{eq:softmax_h,eq:HOCBF,eq:uclose,eq:ulambda,eq:omegabar,eq:dxt} on simulations of a ground robot and an quadrotor robot operating in an \textit{a priori} unknown environment. These sections present an approach for using raw perception data (e.g., LiDAR) and the soft minimum to synthesize the perception feedback function $b_k$.

\section{Application to a Ground Robot}
\label{sec:GroundRobot}

Consider the nonholonomic ground robot modeled by \eqref{eq:affine control}, where
\begin{equation*}
    f(x) = \begin{bmatrix}
     v\cos\theta \\
     v\sin\theta \\
    0 \\
    0
    \end{bmatrix}, 
    \,
    g(x) = \begin{bmatrix}
    0 & 0\\
    0 & 0\\
    1 & 0 \\
    0 & 1
    \end{bmatrix}, 
    \,
    x = \begin{bmatrix}
    q_\rmx\\
    q_\rmy\\
    v\\
    \theta
    \end{bmatrix}, 
    \,
    u = \begin{bmatrix}
    u_1\\
    u_2
    \end{bmatrix}, 
\end{equation*}
and $q \triangleq [ \, q_\rmx \quad q_\rmy \, ]^\rmT$ is the robot's position in an orthogonal coordinate system, $v$ is the speed, and $\theta$ is the direction of the velocity vector (i.e., the angle from $[ \, 1 \quad 0 \, ]^\rmT$ to $[ \, \dot q_\rmx \quad \dot q_\rmy \, ]^\rmT$).

The robot is equipped with a perception/sensing system (e.g., LiDAR) that detects up to $\bar \ell$ points on objects that are: (i) in line of sight of the robot; (ii) inside the field of view (FOV) of the perception system; and (iii) inside the detection radius $\bar r > 0$ of the perception system. 
Specifically, for all $k \in \BBN$, at time $t=kT$, the robot obtains raw perception feedback in the form of $\ell_k \in \{0,1,\ldots,\bar \ell \}$ points given by $(r_{1,k},\theta_{1,k}),\cdots,(r_{\ell_k,k},\theta_{\ell_k,k})$, which are the polar-coordinate positions of the detected points relative to the robot position $q(kT)$ at the time of detection.
For all $i\in \{1,2,\ldots,\ell_k \}$, $r_{i,k} \in [0,\bar r]$ and $\theta_{i,k} \in [0,2\pi)$.

Since $(r_{i,k},\theta_{i,k})$ is the position of the detected point relative to $q(kT)$, it follows that for all $i\in \{1,2,\ldots, \ell_k \}$, the location of the detected point is 
\begin{equation*}
   c_{i,k} \triangleq q(kT) + r_{i,k} \begin{bmatrix}
        \cos \theta_{i,k} \\ \sin \theta_{i,k}
    \end{bmatrix}.
\end{equation*}
Similarly, for all $i\in \{1,2,\ldots, \ell_k \}$,
\begin{equation*}
   d_{i,k} \triangleq q(kT) + \bar r \begin{bmatrix}
        \cos \theta_{i,k} \\ \sin \theta_{i,k}
    \end{bmatrix}
\end{equation*}
is the location of the point that is at the boundary of the detection radius and on the line between $q(kT)$ and $c_{i,k}$.

Next, for each point $(r_{i,k},\theta_{i,k})$, we consider a function whose zero-level set is an ellipse that encircles $c_{i,k}$ and $d_{i,k}$.
Specifically, for all $i\in \{1,2,\ldots,\ell_k \}$, consider $\sigma_{i,k} \colon \BBR^4 \to \BBR$ defined by 
\begin{align}
    \sigma_{i,k}(x) &\triangleq \left [ \chi(x) - \frac{1}{2} (c_{i,k}+d_{i,k}) \right ]^\rmT R^\rmT_{i,k} P_{i,k} R_{i,k} \nn\\
    &\qquad \times \left [ \chi(x) - \frac{1}{2} (c_{i,k}+d_{i,k}) \right ] - 1, \label{eq:ellipse}
\end{align}
where 
\begin{gather}
    R_{i,k} \triangleq \begin{bmatrix}
        \cos\theta_{i,k} & \sin\theta_{i,k} \\ -\sin\theta_{i,k}& \cos\theta_{i,k}
    \end{bmatrix}, \quad
    P_{i,k} \triangleq \begin{bmatrix}
        a_{i,k}^{-2}&0 \\ 0 & z_{i,k}^{-2}
    \end{bmatrix}, \label{eq:ellipse.2}\\
    a_{i,k} \triangleq \frac{\bar r-r_{i,k}}{2} + \varepsilon_a, \quad 
    z_{i,k} \triangleq \sqrt{a_{i,k}^2 - \left(\frac{\bar r-r_{i,k}}{2}\right)^2}, \label{eq:ellipse.3}
\end{gather}
where $\chi(x) \triangleq [ \, I_2 \quad 0_{2\times2} \, ]^\rmT x$ extracts robot position from the state, and $\varepsilon_a > 0$ determines the size of the ellipse $\sigma_{i,k}(x)=0$ (specifically, larger $\varepsilon_a$ yields a larger ellipse). 
The parameter $\varepsilon_a$ is used to introduce conservativeness that can, for example, address an environment with dynamic obstacles. 
Note that $a_{i,k}$ and $z_{i,k}$ are the lengths of the semi-major and semi-minor axes, respectively. 
The area outside the ellipse is the zero-superlevel set of $\sigma_{i,k}$.

Next, let $\xi_k \colon \BBR^4 \to \BBR$ be a continuously differentiable function whose zero-superlevel set models the perception system's detection area (i.e., the detection radius and FOV).
For example, if the perception system has a $360^\circ$ FOV with detection radius $\bar r >0$, then it is appropriate to let $\xi_k = \beta_k$, where $\beta_k \colon \BBR^4 \to \BBR$ is defined by
\begin{align}\label{eq:circle}
    \beta_k(x) \triangleq  \left(\bar r - \varepsilon_\beta \right)^2- \| \chi(x)-q(kT) \|^2,
\end{align}
which has a zero-superlevel that is a disk with radius $\bar r - \varepsilon_\beta$ centered at the robot position $q(kT)$ at the time of detection, where $\varepsilon_\beta \ge 0$ influences the size of the disk and plays a role similar to $\varepsilon_a$.

We construct the perception feedback function $b_k$ using the soft minimum to compose $\sigma_{1,k},\cdots,\sigma_{\ell_k,k}$ and $\xi_k$.
Specifically, let $\rho>0$ and consider 
\begin{equation} \label{eq:bk_ex1}
    b_k(x) = \begin{cases}
            \mbox{softmin}_{\rho} \left( \xi_k(x), \sigma_{1,k}(x),\ldots,\sigma_{\ell_k,k}(x)\right), & \ell_k > 0,\\
            \xi_k(x), & \ell_k =0.
        \end{cases}
\end{equation}
\Cref{prop:softmin_softmax_sets} implies that $\SSS_k$ (i.e., zero-superlevel set of $b_k$) is a subset of the intersection of the zero-superlevel sets of $\xi_k$ and $\sigma_{1,k},\ldots,\sigma_{\ell_k,k}$. 
\Cref{prop:softmin_softmax_sets} also shows that for sufficiently large $\rho >0$, $\SSS_k$ approximates this intersection, which models the area inside the perception detection area and outside the ellipses that contain the detected points. 
\textcolor{black}{
In fact, \Cref{prop:softmin_softmax_sets} shows that $\SSS_k$ converges to this intersection as $\rho \to \infty$. 
However, if $\rho >0$ is large, then $\textstyle\frac{\partial b_k(x)}{\partial x}$ has large magnitude at points where $\min \left \{ \xi_k(x), \sigma_{1,k}(x),\ldots,\sigma_{\ell_k,k}(x)\right \}$ is not differentiable. 
Thus, choice of $\rho$ is a trade off between the magnitude of $\textstyle\frac{\partial b_k(x)}{\partial x}$ and the how well $\SSS_k$ approximates the intersection of the zero-superlevel sets of $\xi_k$ and $\sigma_{1,k},\ldots,\sigma_{\ell_k,k}$. }
\Cref{fig:ex1_safeset} illustrates the approach to generate the perception feedback function $b_k$ and thus, $\SSS_k$. 
Specifically, \Cref{fig:ex1_safeset} shows the ellipses modeled by $\sigma_{i,k}$, circle modeled by $\xi_k=\beta_k$, and set $\SSS_k$ that approximates the intersection of the zero-superlevel sets of $\xi_k$ and $\sigma_{1,k},\ldots,\sigma_{\ell_k,k}$.
\textcolor{black}{
We note that \Cref{eq:ellipse,eq:ellipse.2,eq:ellipse.3,eq:circle,eq:bk_ex1} can be used to verify directly that $L_gb_k=0$.
Next, \Cref{eq:bk_ex1} implies that $b_k$ is constructed using the soft minimum.
Thus, arguments similar to those in the proof of parts \ref{prop.h.3} and \ref{prop.h.4} of \Cref{prop.h} can be used to show that $L_gL_f b_k$ is a convex combination of $L_gL_f\sigma_{1,k},\ldots,L_gL_f\sigma_{\ell_k,k}$ and $L_gL_f \xi_k$. 
Since, in addition, $L_gL_f\sigma_{i,k}(x)\ne0$ and $L_gL_f \xi_k(x)\ne0$ for almost all $x$, it follows that $L_gL_f b_k(x) \ne 0$ for almost all $x$.
Hence, $b_k$ satisfies \ref{con3} and~\ref{con4} with $r=2$. 
}

\begin{figure}[t!]
\center{\includegraphics[width=0.44\textwidth,clip=true,trim= 0.4in 0.3in 1.0in 0.85in] {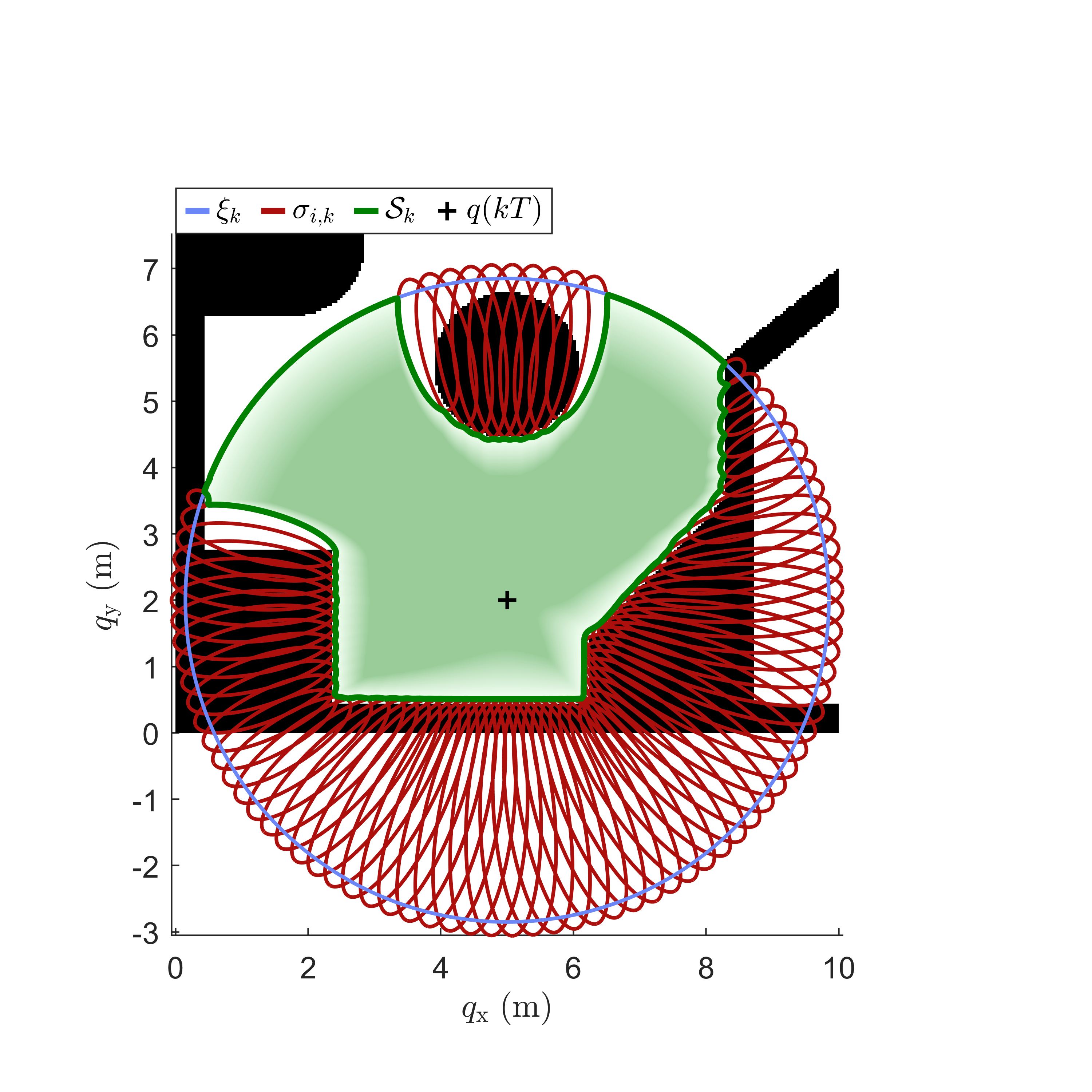}}
\caption{$\SSS_k$ that approximates the intersection of the zero-superlevel sets of $\xi_k$ and $\sigma_{1,k},\ldots,\sigma_{\ell_k,k}$.}\label{fig:ex1_safeset}
\end{figure}

The control objective is for the robot to move from its initial location to a goal location $q_\rmg = [ \, q_{\rmg,\rmx} \quad q_{\rmg,\rmy} \, ]^\rmT \in\BBR^2$ without violating safety (i.e., hitting an obstacle).
To accomplish this objective, we consider the desired control
\begin{equation*}
    u_\rmd(x)  \triangleq \begin{bmatrix} u_{\rmd_1}(x)\\u_{\rmd_2}(x) \end{bmatrix}, 
\end{equation*}
 where
\begin{align*}
u_{\rmd_1}(x) &\triangleq  -(k_1+k_3) v + (1+k_1k_3)\| q - q_\rmg \| \cos \delta(x)\\
&\qquad + k_1\left ( k_2 \| q - q_\rmg \| +v \right )\sin^2\delta(x),\\
u_{\rmd_2}(x) &\triangleq \left ( k_2+\frac{v}{\| q - q_\rmg \|} \right )\sin\delta(x),\\
\delta(x) &\triangleq\mbox{atan2}(q_\rmy-q_{\rmg,\rmy},q_\rmx-q_{\rmg,\rmx})-\theta + \pi,
\end{align*}
and $k_1,k_2,k_3 > 0$.
The desired control $u_\rmd$ drives $q$ to $q_\rmg$ but does not account for safety, that is, it does not incorporate information obtained from the perception system. 
The desired control is designed using a process similar to \cite[pp.~30--31]{de2002control}.

The simulations in this section use the minimum-intervention approach with the cost \eqref{eq:J}, where $Q(t,x) = I_2$, $c(t,x) = -u_\rmd(x)$.
Thus, the unconstrained minimizer of the cost is the desired control $u_\rmd$. 
For all examples in this section, the perception update period is $T = 0.2$~s and the gains for the desired control are $k_1 = 0.5$, $k_2 = 3$, and $k_3 = 3$.

We present simulations for 3 examples: $360^\circ$ FOV perception in a static environment; $360^\circ$ FOV perception in a dynamic environment; and $120^\circ$ FOV perception in a static environment.

\subsection{$360^\circ$ FOV in a Static Environment}
\label{sec:360_FOV}

We use the perception feedback $b_k$ given by \eqref{eq:bk_ex1}, where $\xi_k = \beta_k$, $\rho = 30$, $\bar r = 5$~m, and $\varepsilon_a = \varepsilon_\beta = 0.15$~m. 
The maximum number of detected points is $\bar \ell = 100$, and $N = 4$; however, similar results are obtained for other choices of $N$. 
\Cref{fig:ex1_map} shows a map of the unknown environment that the ground robot aims to navigate. 
\Cref{fig:ex1_safeset} is a close-up of the map at $k=0$ near the robot initial position (i.e., $q(0)$).

\begin{figure}[t!]
\center{\includegraphics[width=0.44\textwidth,clip=true,trim= 0.3in 0.3in 1in 1in] {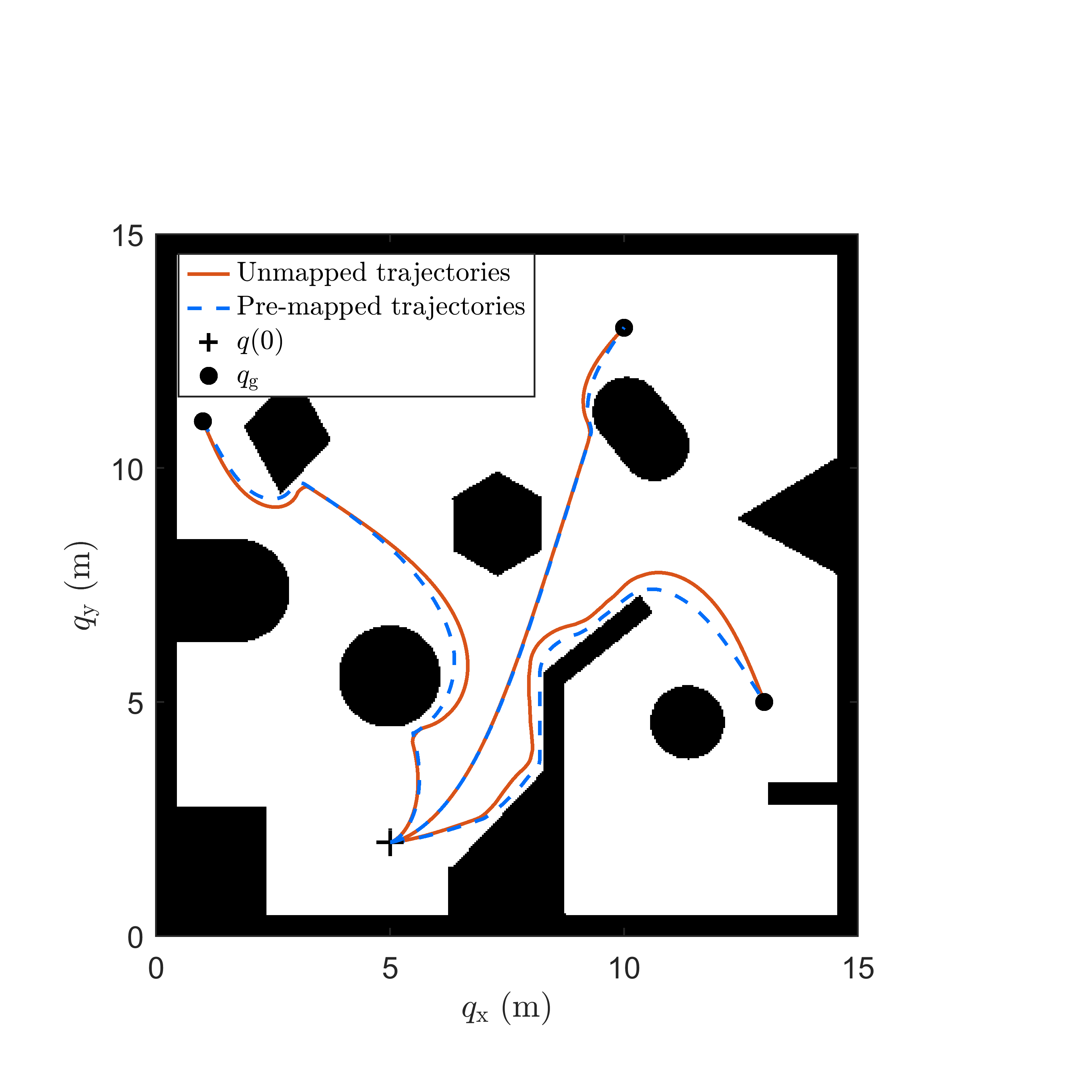}}
\caption{Three closed-loop trajectories using the control~\Cref{eq:softmax_h,eq:HOCBF,eq:uclose,eq:ulambda,eq:omegabar,eq:dxt} with the perception feedback $b_k$ generated from $360^{\circ}$ FOV perception in a static environment.} \label{fig:ex1_map}
\end{figure}

We implement the control~\Cref{eq:softmax_h,eq:HOCBF,eq:uclose,eq:ulambda,eq:omegabar,eq:dxt} with $\kappa = 30$, $\gamma = 200$, $\alpha_0(\psi_0) =35\psi_0$, $\alpha(\psi_1) =50\psi_1$, and $\eta$ given by Example~\ref{ex:g} where $r=2$ and $\nu = 2$. 
For sample-data implementation, the control is updated at $100$~Hz.

\Cref{fig:ex1_map} shows the closed-loop trajectories for $x_0 = [\,5\quad 2\quad0\quad0\,]^\rmT$ with 3 goal locations: $q_\rmg = [\,13\quad5\,]^\rmT$~m, $q_\rmg = [\,10\quad13\,]^\rmT$~m, and $q_\rmg = [\,1\quad11\,]^\rmT$~m. 
In all cases, the robot converges to the goal while satisfying safety. 
\Cref{fig:ex1_h,fig:ex1_state} show time histories of the relevant signals for the case where $q_\rmg = [\,13\quad5\,]^\rmT$~m. 
\textcolor{black}{
See the supplementary material for a video showing the time evolution of the robot and perception-based safe set $\SC_0(t)$.}
\Cref{fig:ex1_h} shows $\psi_0$ and $\psi_1$ are positive for all time, which demonstrates that for all time $t$, the trajectory is in $\SC(t) \subseteq \SSS_\rms(t)$. 
Thus, safety is satisfied. 
\Cref{fig:ex1_state} shows $u$ deviates from $u_\rmd$ in order to satisfy safety. 
Specifically, the turning-angle rate $u_2$ deviates from $u_{\rmd_2}$. 
\textcolor{black}{We note that the desired control $u_\rmd$ does not incorporate information from the perception system (i.e., $u_\rmd$ does not use knowledge of $\SSS_k$). 
Thus, $u$ may deviate significantly from $u_\rmd$, which is designed to drive the robot to the goal in an obstacle-free environment. 
Hence, performance could potentially be improved by using a more sophisticated desired control $u_\rmd$ that explicitly accounts for the information from the perception system. 
For example, $u_\rmd$ could be incorporate higher-level planners (e.g., \cite{lindqvist2021exploration}).
We also note that the user-selected functions $\alpha_0(\psi_0) =35\psi_0$ and $\alpha(\psi_1) =50\psi_1$ are selected to yield relatively aggressive behavior, where $u$ does not deviate from $u_\rmd$ until the robot is near the boundary of $\SC$ but then it may deviate significantly. 
Less aggressive deviations are accomplished by decreasing the slope of $\alpha_0$ and $\alpha$.}


\textcolor{black}{
For comparison, we present simulation results where the environment is perfectly known. 
For this case, we use the support vector machine classifier approach in \cite{srinivasan2020synthesis} to synthesize a single barrier function $h_*$ from the map in \Cref{fig:ex1_map}, which is assumed to be known \textit{a priori}. 
Next, we use the approach in \Cref{sec:control} with $\psi_0$ replaced by $h_*$ to construct a closed-form optimal control that guarantees safety for the pre-mapped environment described by $h_*$. 
\Cref{fig:ex1_map} shows closed-loop trajectories using the control designed for the pre-mapped environment; all control parameters are the same as used above with the unmapped implementation. 
\Cref{fig:ex1_map} shows that the trajectories with the unmapped control (i.e., using $\psi_0$ that is constructed from real-time perception feedback) are comparable to those with the pre-mapped control (i.e., using perfect map $h_*$ in place of $\psi_0$). 
\Cref{tab:trajectory_comparison} compares the settling time $T_\rms \triangleq \min \{ \hat{t} \, \colon \mbox{for all } t \geq \hat{t}, \| x(t) - q_\rmg \| \leq 0.1 \}$ and the root mean square (RMS) of $u-u_\rmd$ for the two cases. 
These metrics for the unmapped case are similar to the pre-mapped case, demonstrating that this new method for unmapped environments performs comparably to existing methods for perfectly known pre-mapped environments. 
}

\begin{table}[ht]
\centering
\small
\sisetup{detect-weight=true, detect-inline-weight=math}
\setlength{\tabcolsep}{3pt} 
\renewcommand{\arraystretch}{1.1} 
\caption{Performance metrics for unmapped and pre-mapped trajectories.}
\label{tab:trajectory_comparison}
\begin{tabular}{|c|c|c|c|c|}
\hline
Goal & Method & $T_\rms$ & $\operatorname*{RMS} u_1 - u_{\rmd_1}$ & $\operatorname*{RMS} u_2 - u_{\rmd_2}$ \\
\hline
\multirow{2}{*}{$[13 \; 5]^\rmT$} & Unmapped   & $5.2$ & $0.15$ & $2.88$ \\
                               & Pre-mapped & $5.0$ & $0.14$ & $2.67$ \\
\hline
\multirow{2}{*}{$[10 \; 13]^\rmT$} & Unmapped   & $4.7$ & $0.15$ & $0.77$ \\
                                & Pre-mapped & $4.6$ & $0.15$ & $0.75$ \\
\hline
\multirow{2}{*}{$[1 \; 11]^\rmT$} & Unmapped   & $5.2$ & $1.89$ & $3.37$ \\
                               & Pre-mapped & $4.8$ & $1.78$ & $3.11$ \\
\hline
\end{tabular}
\end{table}

\begin{figure}[t!]
\center{\includegraphics[width=0.44\textwidth,clip=true,trim= 0.4in 0.4in 1in 0.6in] {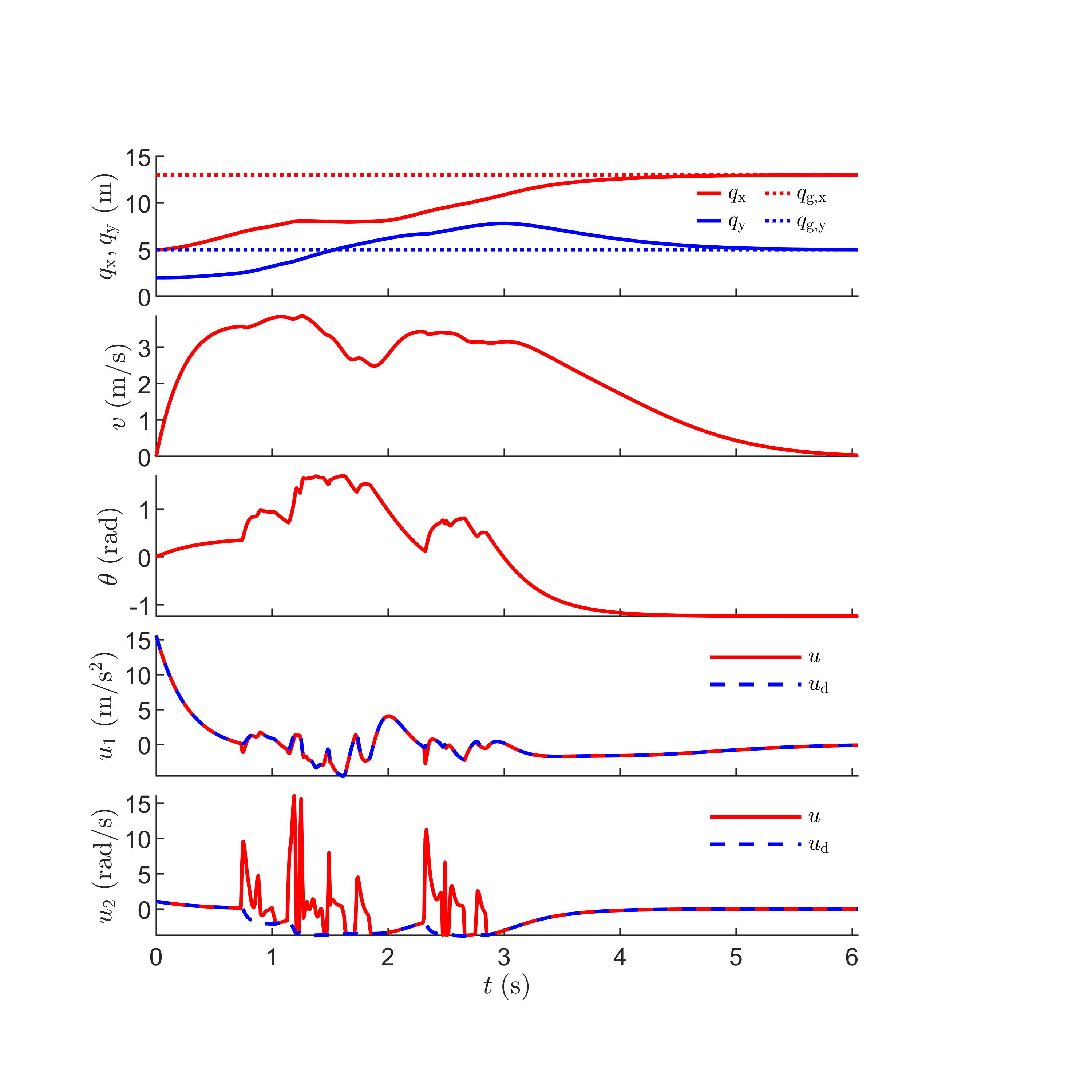}}
\caption{$q_\rmx$, $q_\rmy$, $v$, $\theta$, $u_\rmd$, and $u$ for $q_\rmg = [\,13\quad5\,]^\rmT$~m.}
\label{fig:ex1_state}
\end{figure} 

\begin{figure}[t!]
\center{\includegraphics[width=0.44\textwidth,clip=true,trim= 0.3in 0.27in 0.9in 0.5in] {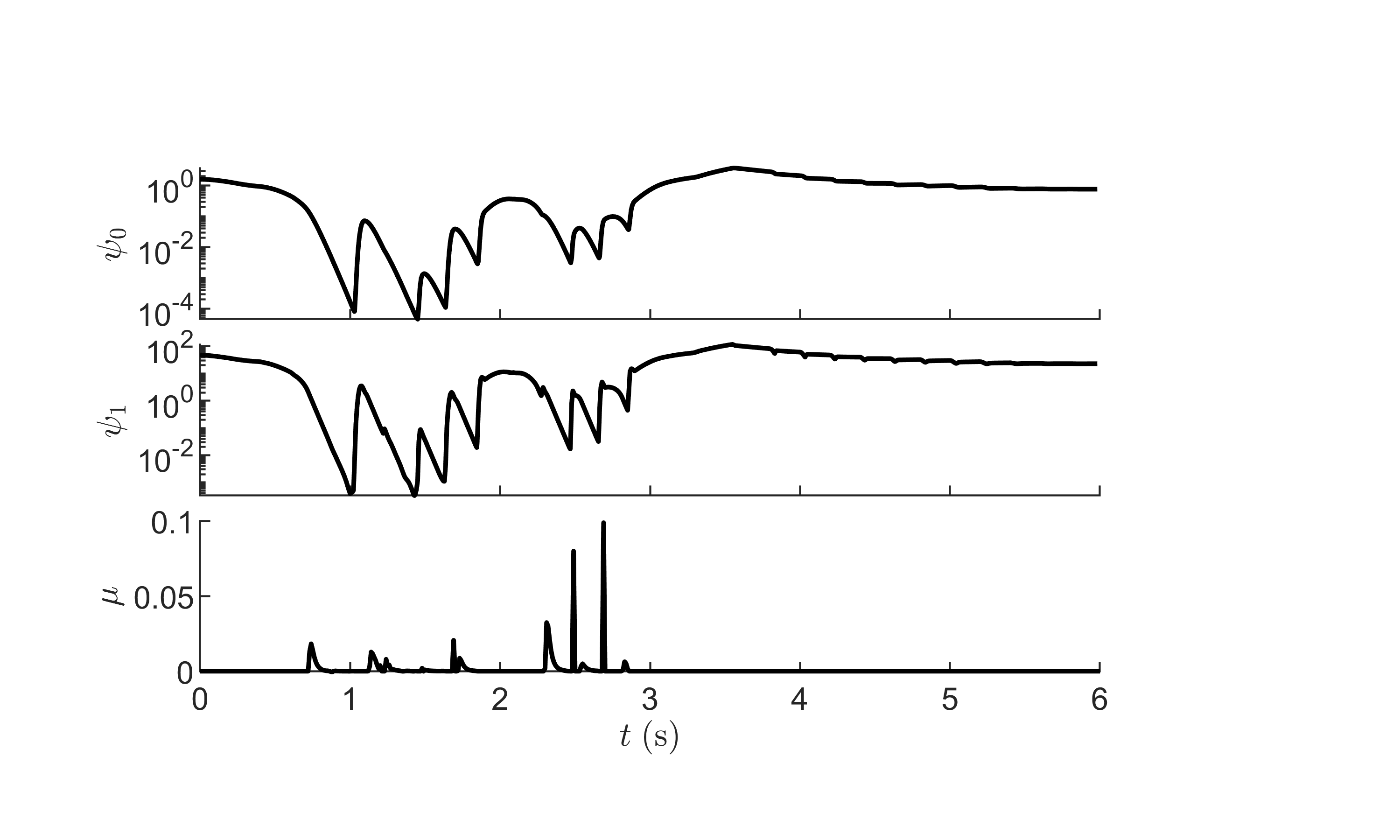}}
\caption{$\psi_0$, $\psi_1$, and $\mu$ for $q_\rmg = [\,13\quad5\,]^\rmT$~m.}\label{fig:ex1_h}
\end{figure}

\subsection{$360^\circ$ FOV in a Dynamic Environment}
\label{sec:Dynamic_env}

To address a dynamic environment, we assume that all objects in the environment move with a maximum speed $\bar v >0$. 
To satisfy \ref{con1}, we generate $\SSS_k$ that does not contain any of the possible positions that a point detected at time $t=kT$ can occupy during the time interval $[kT, (k+N+1)T]$. 
Specifically, for all $i\in \{1,2,\ldots, \ell_k \}$, $\SSS_k$ cannot contain the disk of radius $T(N+1) \bar v$ centered at 
the location $c_{i,k}$ of the detected point. 
Thus, for all $i\in \{1,2,\ldots, \ell_k \}$, we want the ellipse given by the zero-level set of $\sigma_{i,k}$ to encircle the disk of radius $T(N+1) \bar v$ centered at $c_{i,k}$.
This is accomplished by setting $\varepsilon_a \ge T(N+1) \bar v$. 
Similarly, the model of the detection area $\xi_k$ needs to account for the fact that objects outside the detection area can move into the detection area.
Similarly, this is accomplished by using $\xi_k = \beta_k$, where $\varepsilon_\beta \ge T(N+1) \bar v$. 
Thus, this section uses the perception feedback function $b_k$ given by \eqref{eq:bk_ex1}, where $\xi_k = \beta_k$, $\rho = 20$, $\bar r = 5$~m, and $\varepsilon_a = \varepsilon_\beta = T(N+1) \bar v$.
In this example, $\bar v = 0.5$~m/s and $\bar \ell = 100$. 
Since the environment is dynamic, we use $N=1$.
\Cref{fig:ex1_dynamic_map} is a map of the unknown and dynamic environment where the shaded circles are dynamic obstacles.
The dashed lines show the time-evolution of each dynamic obstacle, which are not known by the robot control system.

\begin{figure}[t!]
\center{\includegraphics[width=0.41\textwidth,clip=true,trim= 0.3in 0.3in 1in 1.03in] {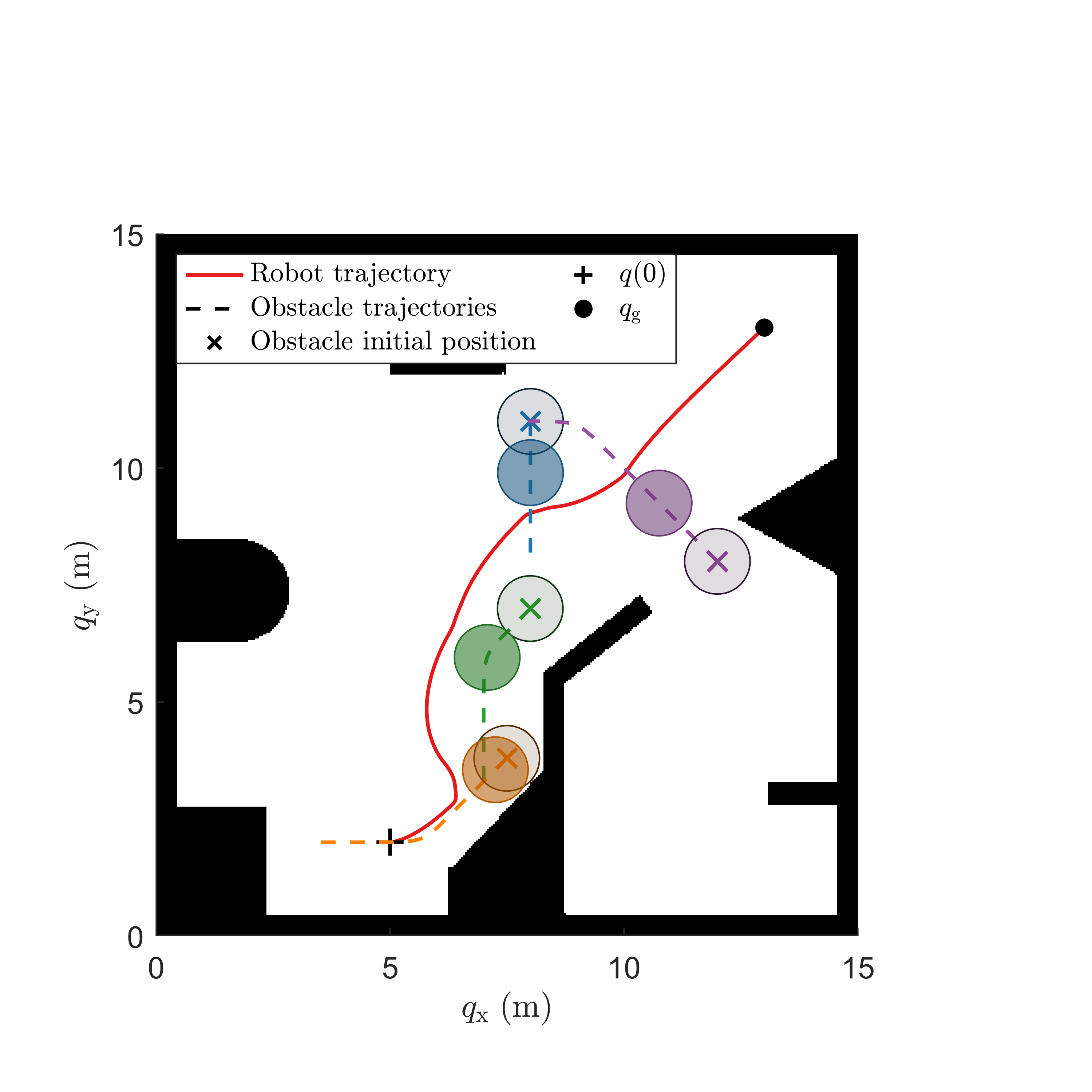}}
\caption{Map of the environment showing the motion of the dynamic obstacles, and a closed-loop trajectory using the control~\Cref{eq:softmax_h,eq:HOCBF,eq:uclose,eq:ulambda,eq:omegabar,eq:dxt} with $b_k$ generated from $360^{\circ}$~FOV perception.
Circles show the location of each dynamic obstacle at $t=0$ and when it is closest to the robot. }\label{fig:ex1_dynamic_map}
\end{figure}

\begin{figure}[t!]
\center{\includegraphics[width=0.44\textwidth,clip=true,trim= 0.35in 0.37in 0.6in 0.67in] {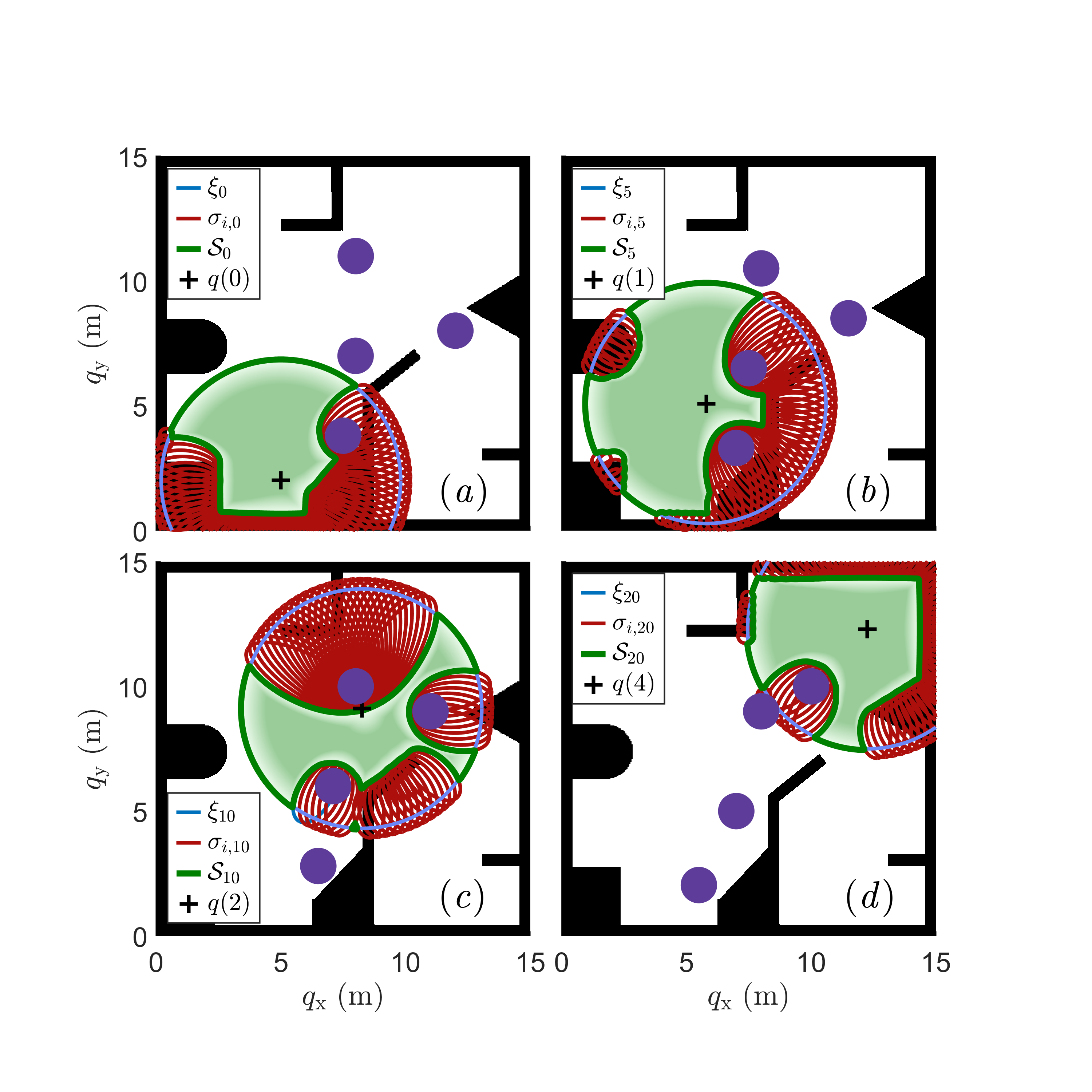}}
\caption{Ellipses modeled by $\sigma_{i,k}$ and circle modeled by $\xi_k=\beta_k$ at (a) $t=0$~s, (b) $t=1$~s, (c) $t=2$~s, and (d) $t=4$~s.}
\label{fig:ex1_dynamic_safeset}
\end{figure} 

We implement the control~\Cref{eq:softmax_h,eq:HOCBF,eq:uclose,eq:ulambda,eq:omegabar,eq:dxt} with $\kappa = 30$, $\gamma = 200$, $\alpha_0(\psi_0) =30\psi_0$, $\alpha(\psi_1) =30\psi_1$, and $\eta$ given by Example~\ref{ex:g} where $r=2$ and $\nu = 2$. 
For sample-data implementation, the control is updated at $100$~Hz.

\Cref{fig:ex1_dynamic_map} shows the closed-loop trajectory for $x_0 = [\,5\quad2\quad0\quad0\,]^\rmT$ with goal location $q_\rmg = [\,13\quad 13\,]^\rmT$~m. 
The figure shows each dynamic obstacle's initial position, trajectory, and location at the time when the obstacle was closest to the robot. 
\Cref{fig:ex1_dynamic_safeset} shows the ellipses modeled by $\sigma_{i,k}$ and circle modeled by $\xi_k=\beta_k$ at $t=0$~s, $t=1$~s, $t=2$~s, and $t=4$~s. 
\Cref{fig:ex1_dynamic_h,fig:ex1_dynamic_state} show time histories of the relevant signals. 
\Cref{fig:ex1_dynamic_h} shows that $\psi_0$ and $\psi_1$ are positive for all time, indicating that the safety constraint is satisfied. 
\textcolor{black}{
See the supplementary material for a video showing the time evolution of the robot and the obstacles.}

\begin{figure}[t!]
\center{\includegraphics[width=0.44\textwidth,clip=true,trim= 0.35in 0.4in 1.0in 0.6in] {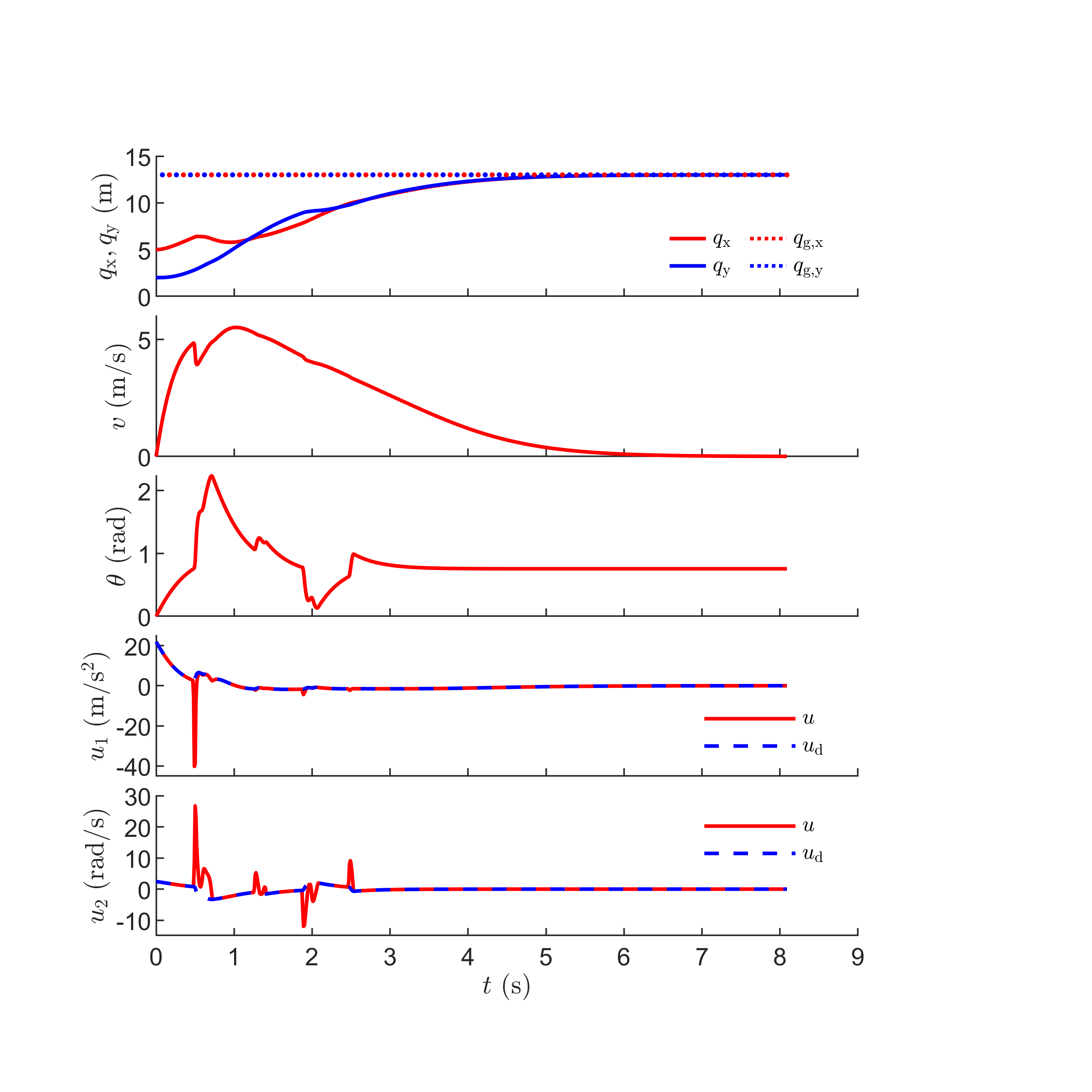}}
\caption{$q_\rmx$, $q_\rmy$, $v$, $\theta$, $u_\rmd$, and $u$ for $q_\rmg = [\,13\quad5\,]^\rmT$~m.}
\label{fig:ex1_dynamic_state}
\end{figure}

\begin{figure}[t!]
\center{\includegraphics[width=0.44\textwidth,clip=true,trim= 0.2in 0.27in 1.0in 0.5in] {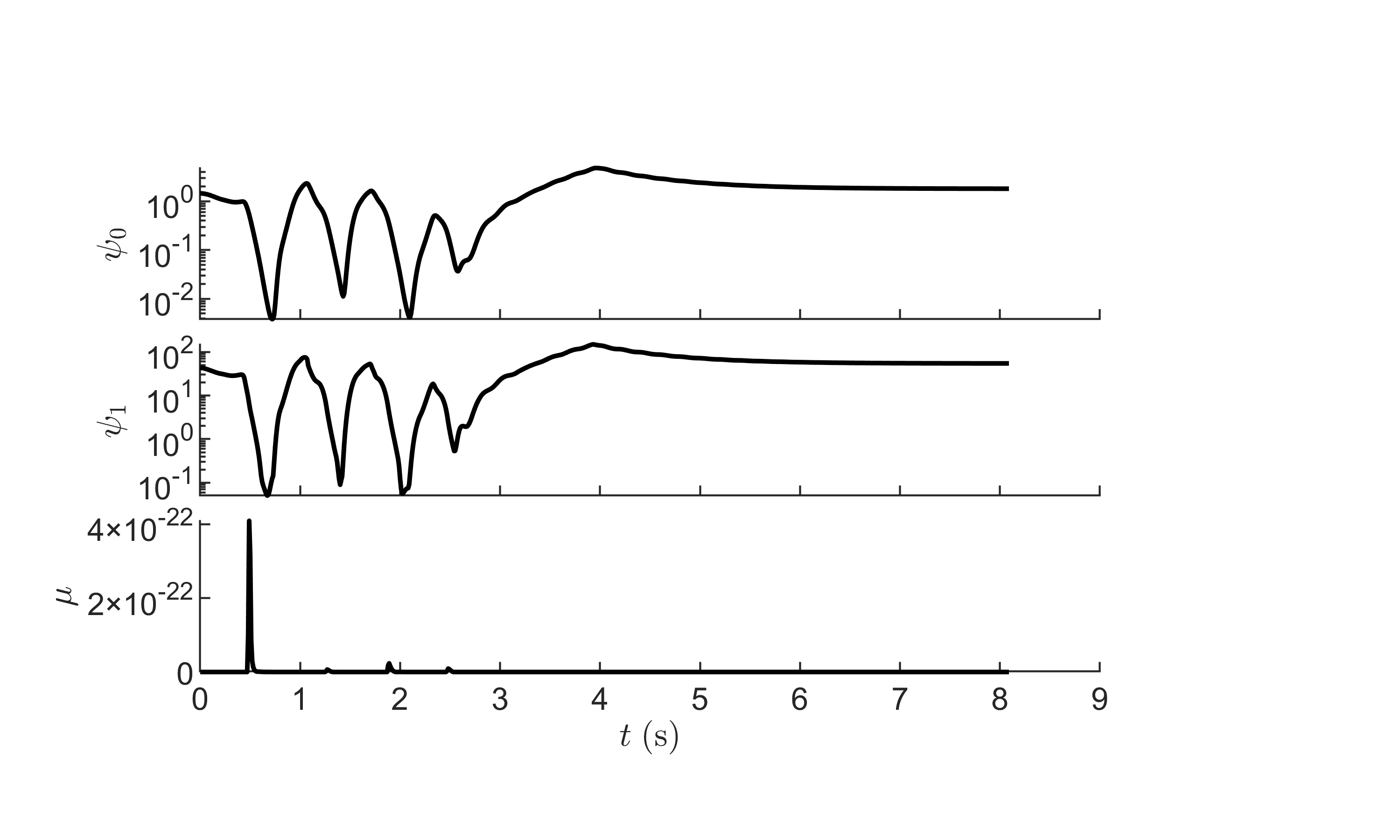}}
\caption{$\psi_0$, $\psi_1$, and $\mu$ for $q_\rmg = [\,13\quad5\,]^\rmT$~m.}
\label{fig:ex1_dynamic_h}
\end{figure} 

\textcolor{black}{
To examine control performance in more cluttered environments, we extend this simulation by introducing additional dynamic obstacles.
The baseline scenario in \Cref{fig:ex1_dynamic_map,fig:ex1_dynamic_safeset,fig:ex1_dynamic_h,fig:ex1_dynamic_state,fig:ex1_dynamic_h} has 4 dynamic obstacles. 
We examine scenarios with $5,6,\ldots,15$ dynamic obstacles where the map is the same as in \Cref{fig:ex1_dynamic_map}.
For each scenario, we conduct $1,000$ trials, where the initial condition $x(0)$, goal location $q_\rmg$, and trajectories of dynamic obstacles are random for each trial.
The speed of each dynamic obstacle is a uniform random variable on $(0, \bar v]$. 
All control parameters are the same as the baseline scenario, and each trial is 20-s long. 
For each scenario, \Cref{tab:dyn_obs} shows the percent of trials without collision (``Safe''), and percent of trials without collision and where the goal is reached by 20~s (``Successful'').
Note that collisions can occur because the density and random motion of dynamic obstacles can violate \ref{con2}; specifically, there may be time instants where the perceived safe set vanishes (i.e., $\SC_0$ may be empty because $\SSS_k \cap \SSS_{k-1}$ is empty). 
%
For each successful trial, we consider the following performance metrics: minimum value of the barrier function $\min_{t \in [0,20]} \psi_0(t)$; settling time $T_\rms$; and RMS of $u-u_\rmd$. 
\Cref{fig:boxplot_dynamic} shows the median, interquartile range ($25$th-to-$75$th percentiles), and whisker range ($10$th-to-$90$th percentiles) of each performance metric for each scenario. 
As expected, as the environment becomes more densely cluttered, the settling time increases, $\min_{t \in [0,20]} \psi_0(t)$ decreases, and the control perturbation $u-u_\rmd$ increases.
Note that all metrics are approximately linear in the number of obstacles. 
}
\begin{table}[t!]
\centering
\small
\sisetup{detect-weight=true, detect-inline-weight=math}
\setlength{\tabcolsep}{3pt} 
\renewcommand{\arraystretch}{1.1} 
\caption{Percent of trials for each scenario that are Safe and Successful.}
\label{tab:dyn_obs}
\begin{tabular}{|c|c|c|c|}
\hline
Number of Obstacles & Safe & Successful  \\
\hline
\multirow{1}{*}{$5$}   & 98.2 & 97.8   \\
\hline
\multirow{1}{*}{$6$}   & 96.4 & 95.6   \\
\hline
\multirow{1}{*}{$7$}   & 95.1 & 94.0   \\
\hline
\multirow{1}{*}{$8$}   & 91.3 & 90.0   \\
\hline
\multirow{1}{*}{$9$}   & 88.9 & 87.1   \\
\hline
\multirow{1}{*}{$10$}   & 85.8 & 83.1   \\
\hline
\multirow{1}{*}{$11$}   & 84.7 & 81.3   \\
\hline
\multirow{1}{*}{$12$}   & 80.2 & 76.2   \\
\hline
\multirow{1}{*}{$13$}   & 76.7 & 72.4   \\
\hline
\multirow{1}{*}{$14$}   & 72.7 & 68.5   \\
\hline
\multirow{1}{*}{$15$}   & 67.8 & 63.0   \\
\hline
\end{tabular}
\end{table}

\begin{figure}[t!]
\center{\includegraphics[width=0.46\textwidth,clip=true,trim= 0.3in 0.2in 0.7in 0.6in] {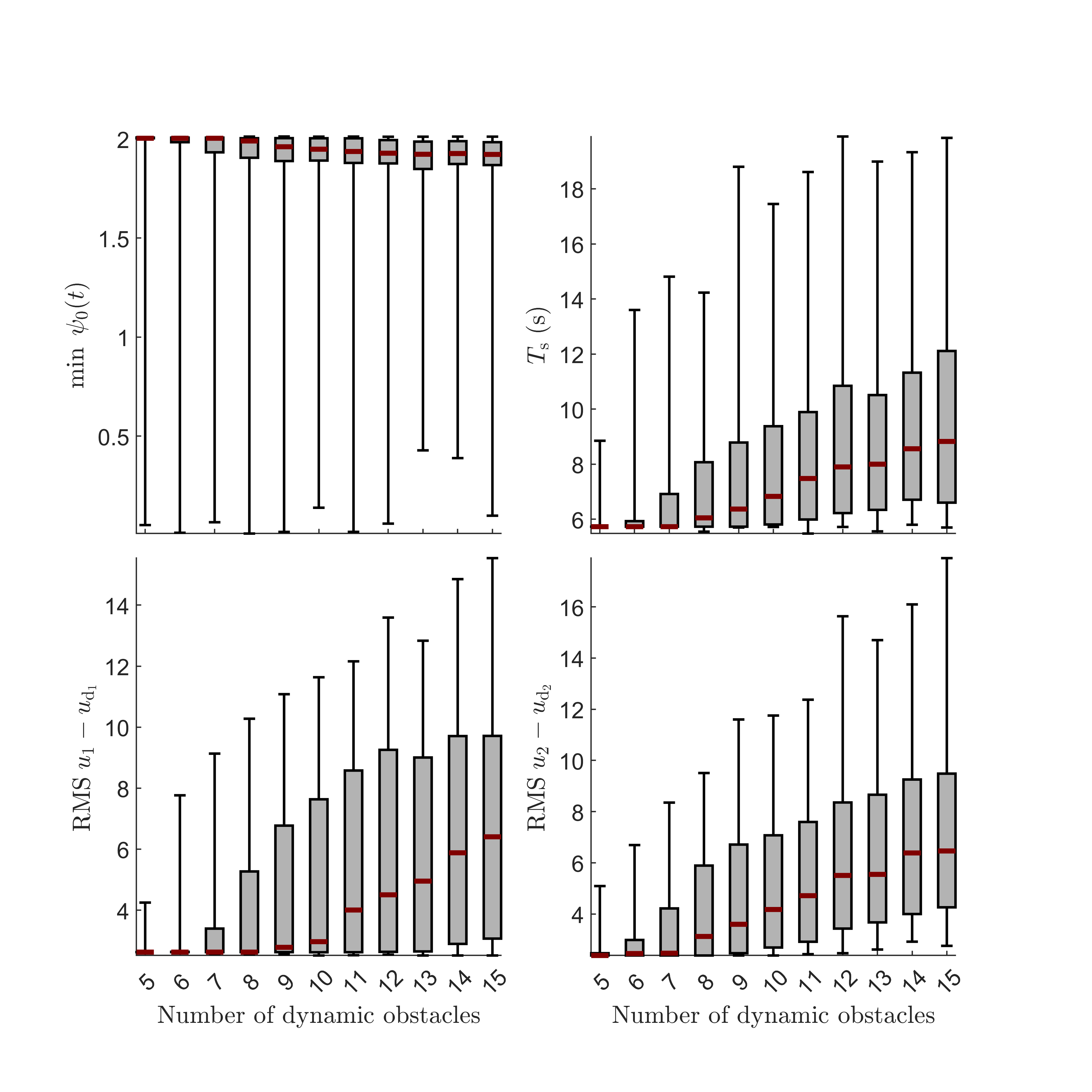}}
\caption{Median (red line), interquartile range (box), and $10$th-to-$90$th-percentile range (whisker) of each metric for each scenario.} 
\label{fig:boxplot_dynamic}
\end{figure}

\subsection{$120^\circ$ FOV in a Static Environment}
\label{sec:limited_FOV}

We now consider sensing capability with limited FOV. 
We use the perception feedback $b_k$ given by \eqref{eq:bk_ex1}, where 
\begin{align*}
    \xi_k(x) &= \mbox{softmin}_{\rho} \left( \beta_k(x), \ubar \tau_{k}(x), \bar \tau_{k}(x) \right), \\
    \ubar \tau_{k}(x) &\triangleq   e\left(\theta(kT) - \frac{\theta_f}{2}\right) \left[ \chi(x)-q(kT)\right]  - \varepsilon_k, \\
    \bar \tau_{k}(x) &\triangleq  - e\left(\theta(kT) + \frac{\theta_f}{2}\right) \left [\chi(x)-q(kT)\right]  - \varepsilon_k,  \\
    e(\theta) &\triangleq \begin{bmatrix} -\sin\theta &  \cos\theta \end{bmatrix},
\end{align*}
where $\rho > 0$, $\theta_\rmf \in (0,\pi]$~rad is the FOV, and $\epsilon_k > 0$ is chosen such that $b_{k}(q(kT)) = 0$. 
The detection area is modeled by the zero-superlevel set of $\xi_k$, which uses the soft minimum to approximate the intersection of the zero-superlevel sets of $\beta_k$, $\ubar \tau_k$, and $\bar \tau_k$. 
Note that $\beta_k$ models the disk based on detection radius, and $\ubar \tau_k$ and $\bar \tau_k$ model half planes based on the FOV angle $\theta_\rmf$. 
In this example, $\theta_f = 2\pi/3$~rad (i.e., $120^\circ$ FOV), $\rho =30$, $\bar \ell = 30$, $\bar r =5$~m, and $\varepsilon_a = \varepsilon_\beta = 0.15$~m. 
\Cref{fig:ex1_FOV_safeset} shows the ellipses modeled by $\sigma_{i,k}$, the detection areas modeled by $\xi_k$, and the associated $\SSS_k$ at $k=0$ near the robot initial position (i.e., $q(0)$).

\begin{figure}[t!]
\center{\includegraphics[width=0.41\textwidth,clip=true,trim= 0.35in 0.3in 1.0in 0.86in] {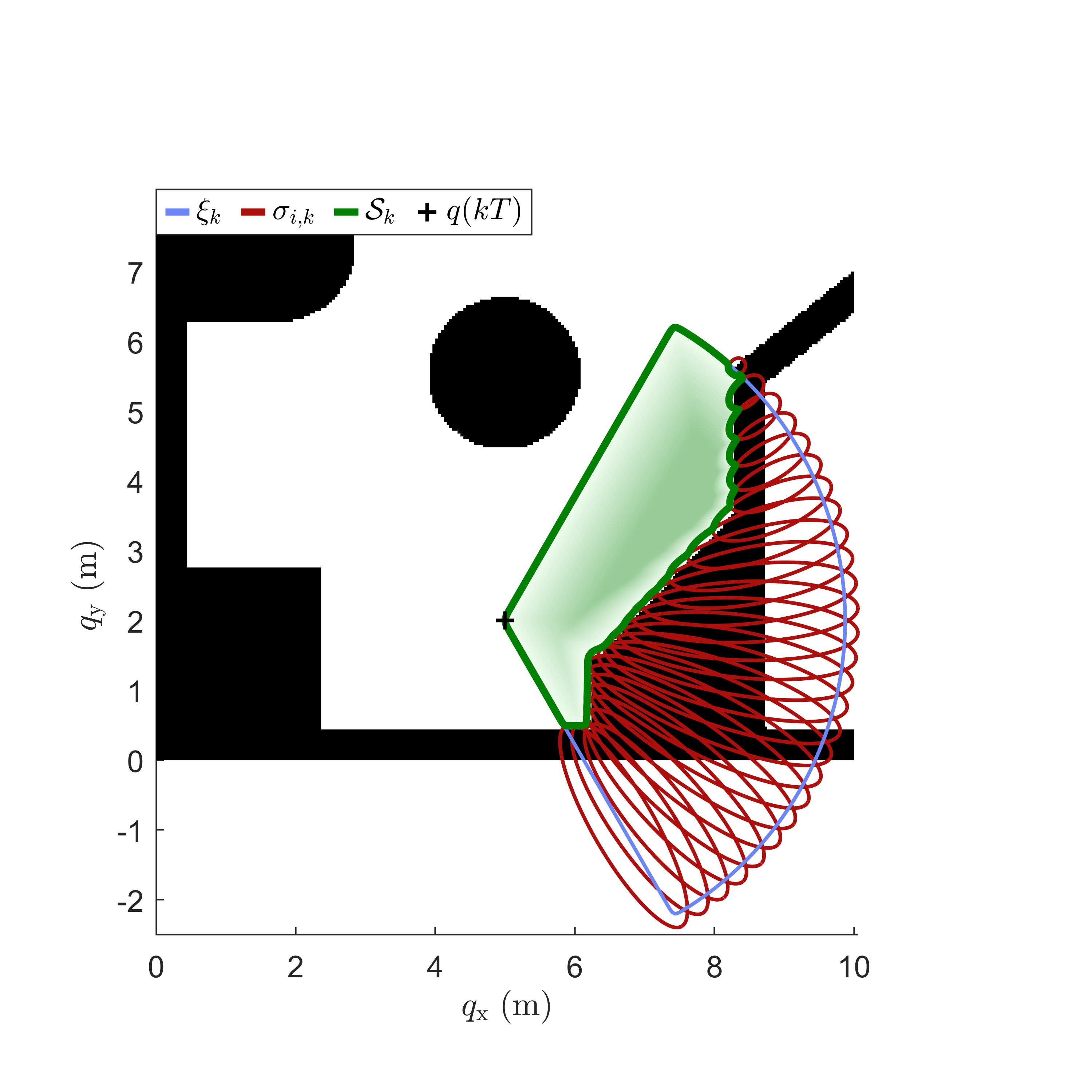}}
\caption{$\SSS_k$ that approximates the intersection of the zero-superlevel sets of $\xi_k$ and $\sigma_{1,k},\ldots,\sigma_{\ell_k,k}$.}\label{fig:ex1_FOV_safeset}
\end{figure} 

\begin{figure}[t!]
\center{\includegraphics[width=0.41\textwidth,clip=true,trim= 0.3in 0.3in 1in 1.03in] {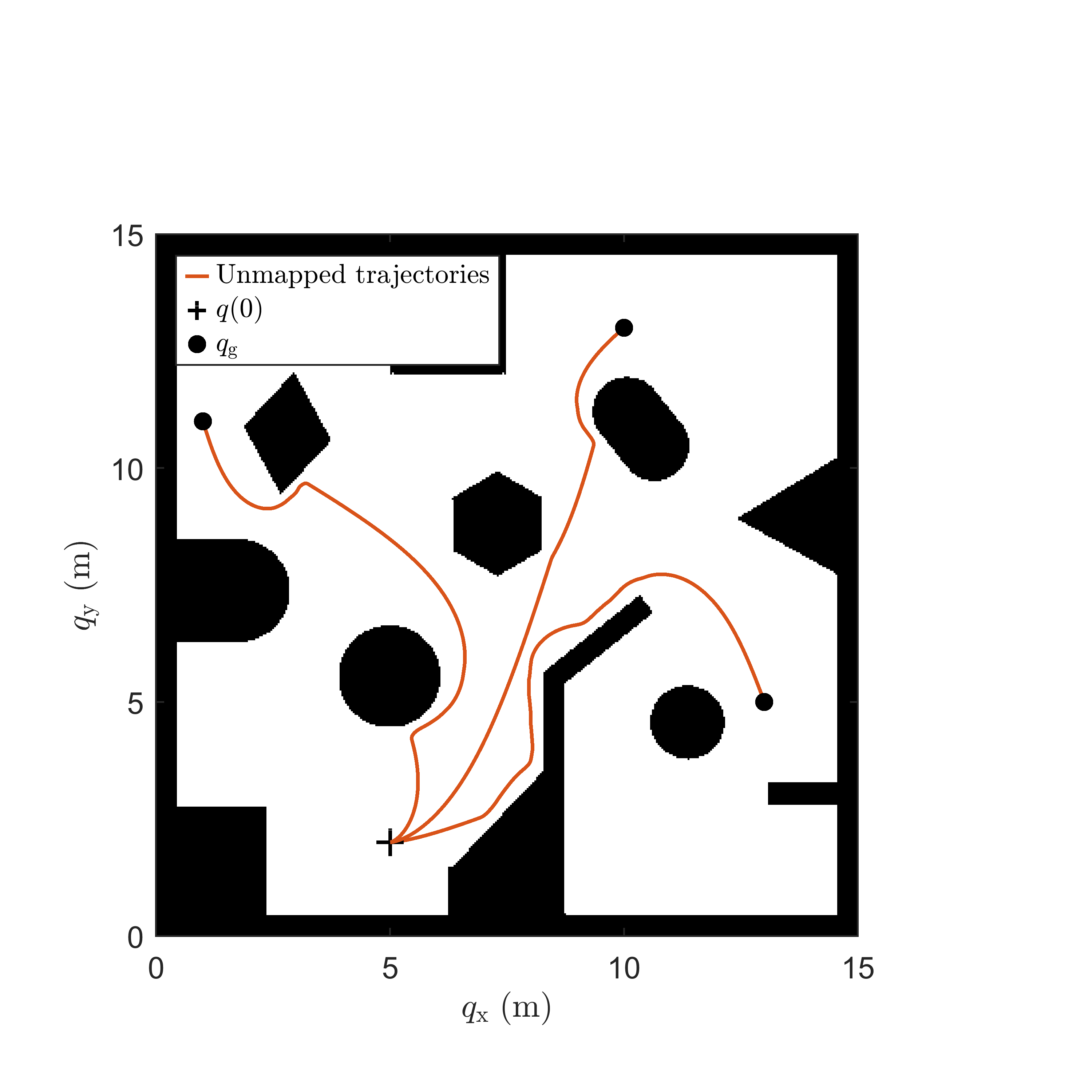}}
\caption{Three closed-loop trajectories using the control~\Cref{eq:softmax_h,eq:HOCBF,eq:uclose,eq:ulambda,eq:omegabar,eq:dxt} with the perception feedback $b_k$ generated from $120^{\circ}$ FOV perception in a static environment.}\label{fig:ex1_FOV_map}
\end{figure} 

We implement the control~\Cref{eq:softmax_h,eq:HOCBF,eq:uclose,eq:ulambda,eq:omegabar,eq:dxt} with $\kappa = 30$, $\gamma = 200$, $\alpha_0(\psi_0) =40\psi_0$, $\alpha(\psi_1) =65\psi_1$, and $\eta$ given by Example~\ref{ex:g} where $r=2$ and $\nu = 2$. 
Since this example considers a limited FOV, we use $N = 4$ to incorporate recent perception feedback functions and enlarge the percieved subset of the safe set $\SC_0(t)$.
For sample-data implementation, the control is updated at $100$~Hz.

\Cref{fig:ex1_FOV_map} shows the closed-loop trajectories  for $x_0 = [\,5\quad2\quad0\quad0\,]^\rmT$ with 3 different goals locations: $q_\rmg = [\,11\quad2\,]^\rmT$~m, $q_\rmg = [\,10\quad13\,]^\rmT$~m, and $q_\rmg = [\,1\quad11\,]^\rmT$~m.
In all cases, the robot converges to the goal while satisfying safety. 
\Cref{fig:ex1_FOV_h,fig:ex1_FOV_state} provide time histories of relevant signals for the case where $q_\rmg = [\,13\quad5\,]^\rmT$~m. 
\Cref{fig:ex1_FOV_h} shows that $\psi_0$ and $\psi_1$ are positive for all time, indicating that the safety constraint is satisfied.

\begin{figure}[t!]
\center{\includegraphics[width=0.44\textwidth,clip=true,trim= 0.35in 0.4in 1.0in 0.6in] {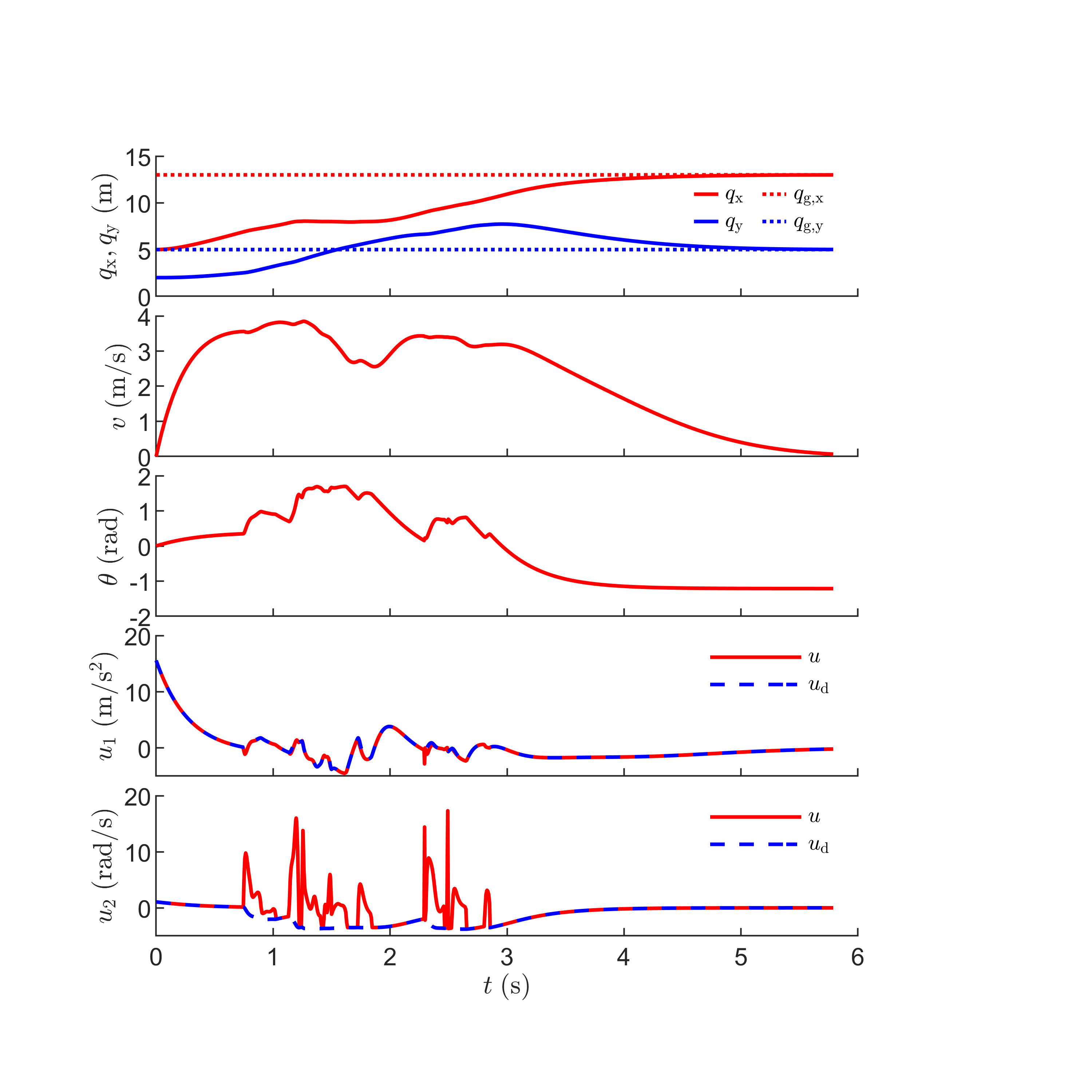}}
\caption{$q_\rmx$, $q_\rmy$, $v$, $\theta$, $u_\rmd$, and $u$ for $q_\rmg = [\,13\quad5\,]^\rmT$.}\label{fig:ex1_FOV_state}
\end{figure}

\begin{figure}[t!]
\center{\includegraphics[width=0.44\textwidth,clip=true,trim= 0.35in 0.3in 0.8in 0.6in] {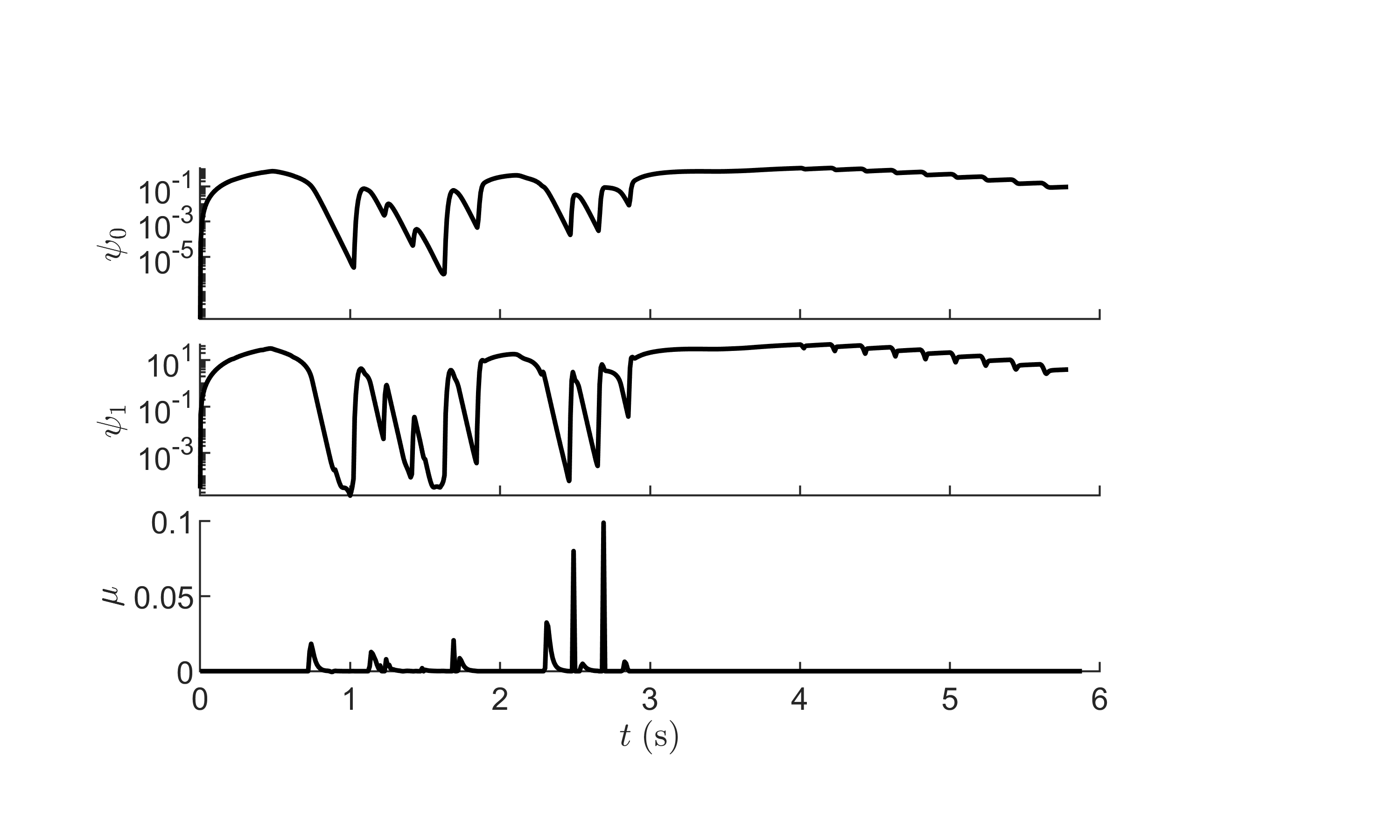}}
\caption{$\psi_0$, $\psi_1$, and $\mu$ for $q_\rmg = [\,13\quad5\,]^\rmT$ .}\label{fig:ex1_FOV_h}
\end{figure}

\section{Application to Quadrotor Aerial Vehicle}
\label{sec:Quadrotor}

Consider the attitude-stabilized quadrotor aerial vehicle modeled by 
\begin{align}
    &\dot{q} = p, \label{eq:uav dyn1}\\
    &m\dot{p} = FR\begin{bmatrix}
    0&0&1
\end{bmatrix}^\rmT - mg\begin{bmatrix}
    0&0&1
\end{bmatrix}^\rmT, \label{eq:uav dyn2}\\
    &\dot{R} = R[\omega]_\times, \label{eq:uav dyn3}\\ 
    &\dot{\omega} =
    \begin{bmatrix}
        1 & 0 & -\sin\theta \\
        0 & \cos\phi & \sin\phi\cos\theta \\ 
        0 & -\sin\phi & \cos\phi\cos\theta
    \end{bmatrix}
    \begin{bmatrix}
        k_1(\phi_\rmc-\phi) + k_2(-\dot{\phi}) \\ 
        k_1(\theta_\rmc-\theta) + k_2(-\dot{\theta}) \\
        k_3(\dot \psi_\rmc - \dot \psi)
    \end{bmatrix}, \label{eq:uav dyn4}\\
   &\dot{F} = k_4(F_\rmc - F),\label{eq:uav dyn5}
\end{align}
where $q(t)\in \mathbb{R}^3$ is the position of the quadrotor; 
$p(t) \in \mathbb{R}^3$ is the velocity; 
$R(t) \in \mbox{SO(3)}$ is the rotation matrix from a quadrotor's body frame to the inertial frame;
$\omega(t) \in \mathbb{R}^3$ is angular velocity; 
$[\omega(t)]_{\times} \in  \mbox{so(3)}$ is the skew-symmetric form of $\omega(t)$;  
$F(t) \in [0,\infty]$ is the scalar thrust force in the body frame;
and $\psi(t)$, $\theta(t)$, and $\phi(t)$ are the $3-2-1$ Euler angle sequence associated with $R(t)$. 
The quadrotor has mass $m = 0.1$ \si{\kilogram} and $g = 9.81$ \si{\meter\per\second^2} is the gravity acceleration. 
The inner-loop attitude controller has gains of $k_1 = \num{3.4e3} $, $k_2 = 116.67 $, and $k_3 = 1950$. 
The thrust gain is $k_4 = \num{3.9e3}$.
The command inputs to the attitude stabilized quadrotor \Cref{eq:uav dyn1,eq:uav dyn2,eq:uav dyn3,eq:uav dyn4,eq:uav dyn5} are  $\psi_\rmc$, $\theta_\rmc$, $\phi_\rmc$, $F_\rmc$ : $[0, \infty) \to \mathbb{R}$, which are the commanded yaw, pitch, roll, and thrust, respectively. 
These commanded inputs are given by $\psi_\rmc \equiv 0$ and 
\begin{gather}
\theta_\rmc \triangleq \arctan\left(\wfrac[1pt]{u_x}{u_z+g}\right),\quad 
\phi_\rmc \triangleq \arctan\left(\wfrac[1pt]{-u_y\cos\theta_\rmc}{u_z+g}\right), \label{eq:translational uav3}\\ 
F_\rmc \triangleq \wfrac[1pt]{(u_z+g)m}{\cos\phi_\rmc\cos\theta_\rmc},\label{eq:translational uav4}
\end{gather}
where $u = \begin{bmatrix} u_x & u_y & u_z \end{bmatrix}^\rmT$ is the translational acceleration command determined by the outer-loop control that is computed using the optimal and safe control~\Cref{eq:softmax_h,eq:HOCBF,eq:uclose,eq:ulambda,eq:omegabar,eq:dxt}.

Define $x \triangleq \begin{bmatrix} q^\rmT &  p^\rmT \end{bmatrix}^\rmT$.
Similar to the ground robot, the quadrotor is equipped with a perception system (e.g., LiDAR) that detects up to $\bar \ell$ points on objects that are: (i) in line of sight of the quadrotor; (ii) inside the field of view (FOV) of the perception system; and (iii) inside the detection radius $\bar r > 0$ of the perception system. 
However, the detection system is 3D rather than 2D. 
For all $k \in \BBN$, at time $t=kT$, the quadrotor obtains raw perception feedback in the form of $\ell_k \in \{0,1,\ldots,\bar \ell \}$ points given by $(r_{1,k},\theta_{1,k},\phi_{1,k}),\cdots,(r_{\ell_k,k},\theta_{\ell_k,k},\phi_{\ell_k,k})$, which are the spherical-coordinate positions of the detected points relative to the $q(kT)$ at the time of detection.
For all $i\in \{1,2,\ldots,\ell_k \}$, $r_{i,k} \in [0,\bar r]$, $\theta_{i,k} \in [0,2\pi)$, and $\phi_{i,k} \in [-\frac{\pi}{2},\frac{\pi}{2}]$.

For all $i\in \{1,2,\ldots, \ell_k \}$, the location of the detected point is 
\begin{equation*}
   c_{i,k} \triangleq q\left(kT\right) + r_{i,k} \begin{bmatrix}
        \sin\phi_{i,k}\cos\theta_{i,k} \\ \sin\phi_{i,k}\sin\theta_{i,k} \\ 
        \cos\phi_{i,k}
    \end{bmatrix}, 
\end{equation*}
and 
\begin{equation*}
   d_{i,k} \triangleq q(kT) +  \bar r \begin{bmatrix}
        \sin\phi_{i,k}\cos\theta_{i,k} \\ \sin\phi_{i,k}\sin\theta_{i,k} \\ 
        \cos\phi_{i,k}
    \end{bmatrix}
\end{equation*}
is the location of the point that is at the boundary of the detection radius and on the line between $q(kT)$ and $c_{i,k}$.

For each point $(r_{i,k},\theta_{i,k}, \phi_{i,k})$, we consider a function whose zero-level set is an ellipsoid that encircles $c_{i,k}$ and $d_{i,k}$.
Specifically, for all $i\in \{1,2,\ldots,\ell_k \}$, consider $\sigma_{i,k} \colon \BBR^6 \to \BBR$ given by \eqref{eq:ellipse}, where $\chi(x) \triangleq [ \, I_3 \quad 0_{3\times3} \, ]^\rmT x$, $a_{i,k}$ and $z_{i,k}$ are given by \eqref{eq:ellipse.3}, 
\begin{gather*}
    R_{i,k} \triangleq \begin{bmatrix}
        \cos\theta_{i,k}\sin\phi_{i,k} & \sin\theta_{i,k}\sin\phi_{i,k} & \cos\phi_{i,k} \\ -\sin\theta_{i,k}& \cos\theta_{i,k}&0 \\ 
        -\cos\theta_{i,k}\cos\phi_{i,k} & -\sin\theta_{i,k}\cos\phi_{i,k} & \sin\phi_{i,k}
    \end{bmatrix},\\
    P_{i,k} \triangleq \begin{bmatrix}
        a_{i,k}^{-2}&0&0 \\ 0 & z_{i,k}^{-2}&0 \\ 0 & 0&z_{i,k}^{-2}
    \end{bmatrix}, 
    %
\end{gather*}
and $\varepsilon_a > 0$ determines the size of the ellipsoid $\sigma_{i,k}(x)=0$. 
The area outside the ellipsoid is the zero-superlevel set of $\sigma_{i,k}$.

Let $\xi_k \colon \BBR^6 \to \BBR$ be a continuously differentiable function whose zero-superlevel set models the perception system's detection area (i.e., the detection radius and FOV).
In this section, we consider a perception system that has a $360^\circ$ azimuth FOV, $180^\circ$ elevation FOV, and detection radius $\bar r >0$.
Thus, we let $\xi_k(x) = \beta_k(x)$, where $\beta_k$ is given by \eqref{eq:circle}.
Note that limited FOV can be addressed with a method similar to that used in 
\Cref{sec:Dynamic_env}.

Finally, the perception feedback $b_k$ is constructed similar to in \Cref{sec:GroundRobot}. 
Specifically, $b_k$ is given by \eqref{eq:bk_ex1}, where $\rho>0$.

The control objective is for the quadrotor to move from its initial location to a goal location $q_\rmg = [ \, q_{\rmg,\rmx} \quad q_{\rmg,\rmy} \quad q_{\rmg,\rmz} \, ]^\rmT \in\BBR^3$ without violating safety (i.e., hitting an obstacle).
To accomplish this objective, we consider the desired control $u_\rmd(x)  \triangleq k_5 \tanh ( q_\rmg - q ) - k_6 p$, where $k_5 = 3$ and $k_6 = 2$.
Note $u_\rmd$ is a proportional-derivative control with saturation on the position-error feedback term.

This section uses the minimum-intervention approach with the cost \eqref{eq:J}, where $Q(t,x) = I_2$, $c(t,x) = -u_\rmd(x)$.
The perception update period is $T = 0.2$~s, and we use the perception feedback \eqref{eq:bk_ex1}, where $\xi_k = \beta_k$, $\rho = 30$, $\bar r = 5$~m, and $\varepsilon_a = \varepsilon_\beta = 0.15$~m. 
The maximum number of detected points is $\bar \ell = 300$, and $N = 2$.

We note that the attitude-stabilized quadrotor \cref{eq:uav dyn1,eq:uav dyn2,eq:uav dyn3,eq:uav dyn4,eq:uav dyn5} combined with \cref{eq:translational uav3,eq:translational uav4} can be approximated as the double integrator\cite{beard2008quadrotor}. 
The double integrator can be modeled by \eqref{eq:affine control}, where 
\begin{equation} \label{eq:approx2}
    f(x) = \begin{bmatrix}
    p \\
    0
    \end{bmatrix},
    \qquad
    g({x}) = \begin{bmatrix}
    0_{3\times3} \\
    I_3
    \end{bmatrix}. 
\end{equation}
\textcolor{black}{Using \eqref{eq:approx2}, it can be verified by direct calculation that $L_gb_k=0$.
Next, using an argument similar to the one used for the ground robot, it follows that $L_gL_f b_k(x) \ne 0$ for almost all $x \in\ \BBR^6$.
Hence, $b_k$ satisfies \ref{con3} and~\ref{con4} with $r=2$ for the approximate system model~\eqref{eq:approx2}.}
We implement the control~\Cref{eq:softmax_h,eq:HOCBF,eq:uclose,eq:ulambda,eq:omegabar,eq:dxt} with $\kappa = 30$, $\gamma = 200$, $\alpha_0(\psi_0) =40\psi_0$, $\alpha(\psi_1) =2.5\psi_1$, $f$ and $g$ given by \eqref{eq:approx2}, and $\eta$ given by Example~\ref{ex:g} where $r=2$ and $\nu = 2$. 
For sample-data implementation, the control is updated at $100$~Hz.

\Cref{fig:UAV_map_3D,fig:UAV_map_2D} show the map of unknown environment that the quadrotor aims to navigate and closed-loop trajectories for $q(0) = [\,8\quad -10 \quad 5\,]^\rmT$~m and $p(0) = 0$~m/s with 3 different goals locations $q_\rmg = [\,0 \quad 10 \quad 5\,]^\rmT$~m, $q_\rmg = [\,-10 \quad 10 \quad 8\,]^\rmT$~m, and $q_\rmg = [\,-10 \quad 0 \quad 3 \,]^\rmT$~m. 
In all cases, the quadrotor's position converges to the goal location while satisfying safety constraints.  
\Cref{fig:UAV_h,fig:UAV_pos,fig:UAV_state,fig:UAV_input} provide time histories of relevant signals for the case where $q_\rmg = [\,0\quad 10 \quad 5 \,]^\rmT$~m. 
\Cref{fig:UAV_h} shows that $\psi_0$ and $\psi_1$ are positive for all time, which demonstrates that for all time $t$, the trajectory is in $\SC(t) \subseteq \SSS_\rms(t)$. \Cref{fig:UAV_input} shows $u$ deviates from $u_\rmd$ in order to satisfy safety.
 
\begin{figure}[t!]
\center{\includegraphics[width=0.41\textwidth,clip=true,trim= 0.2in 0.95in 0.9in 1.7in] {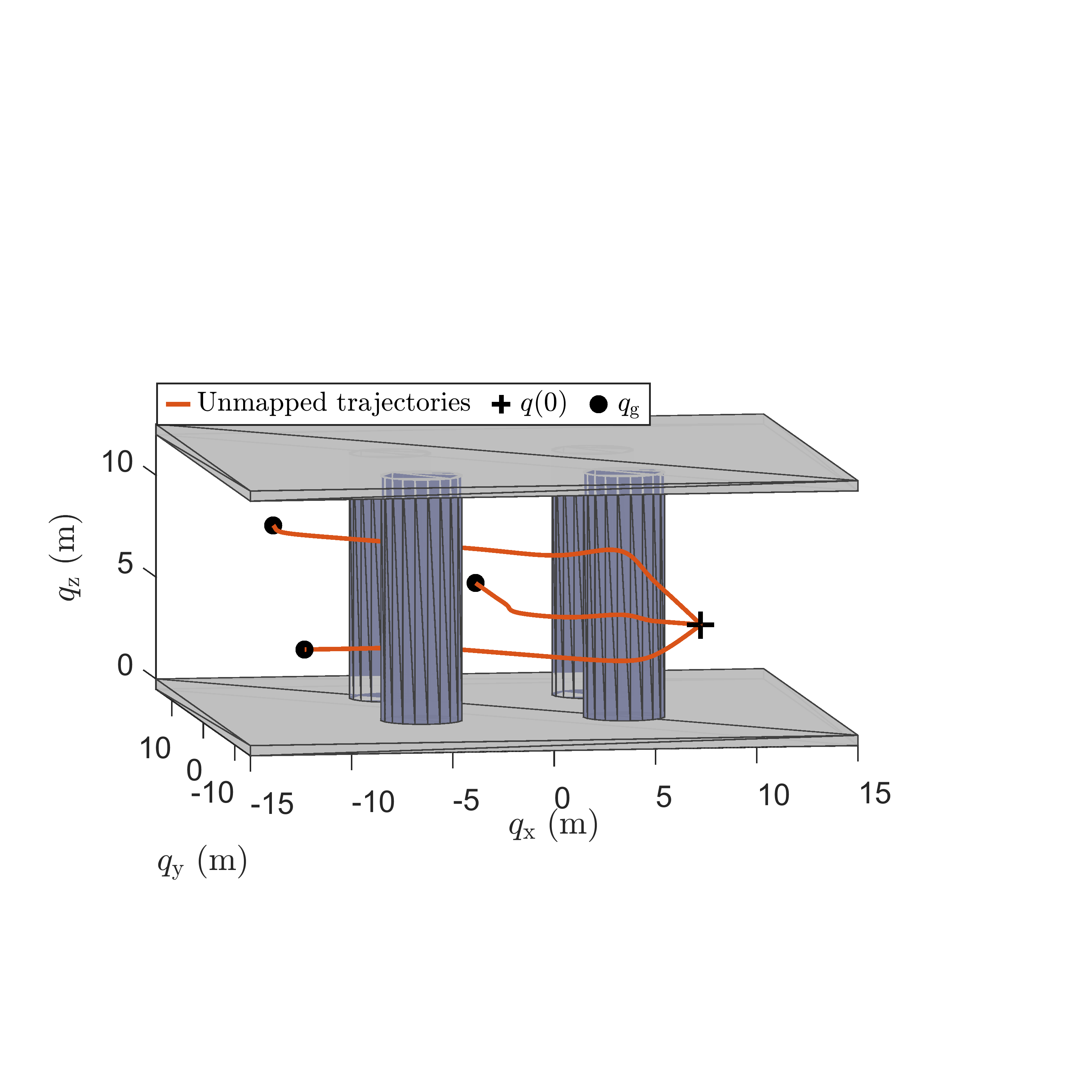}}
\caption{Three closed-loop trajectories using the control~\Cref{eq:softmax_h,eq:HOCBF,eq:uclose,eq:ulambda,eq:omegabar,eq:dxt} with the perception feedback $b_k$.}\label{fig:UAV_map_3D}
\end{figure}

\begin{figure}[t!]
\center{\includegraphics[width=0.41\textwidth,clip=true,trim= 0.25in 0.3in 1in 1.0in] {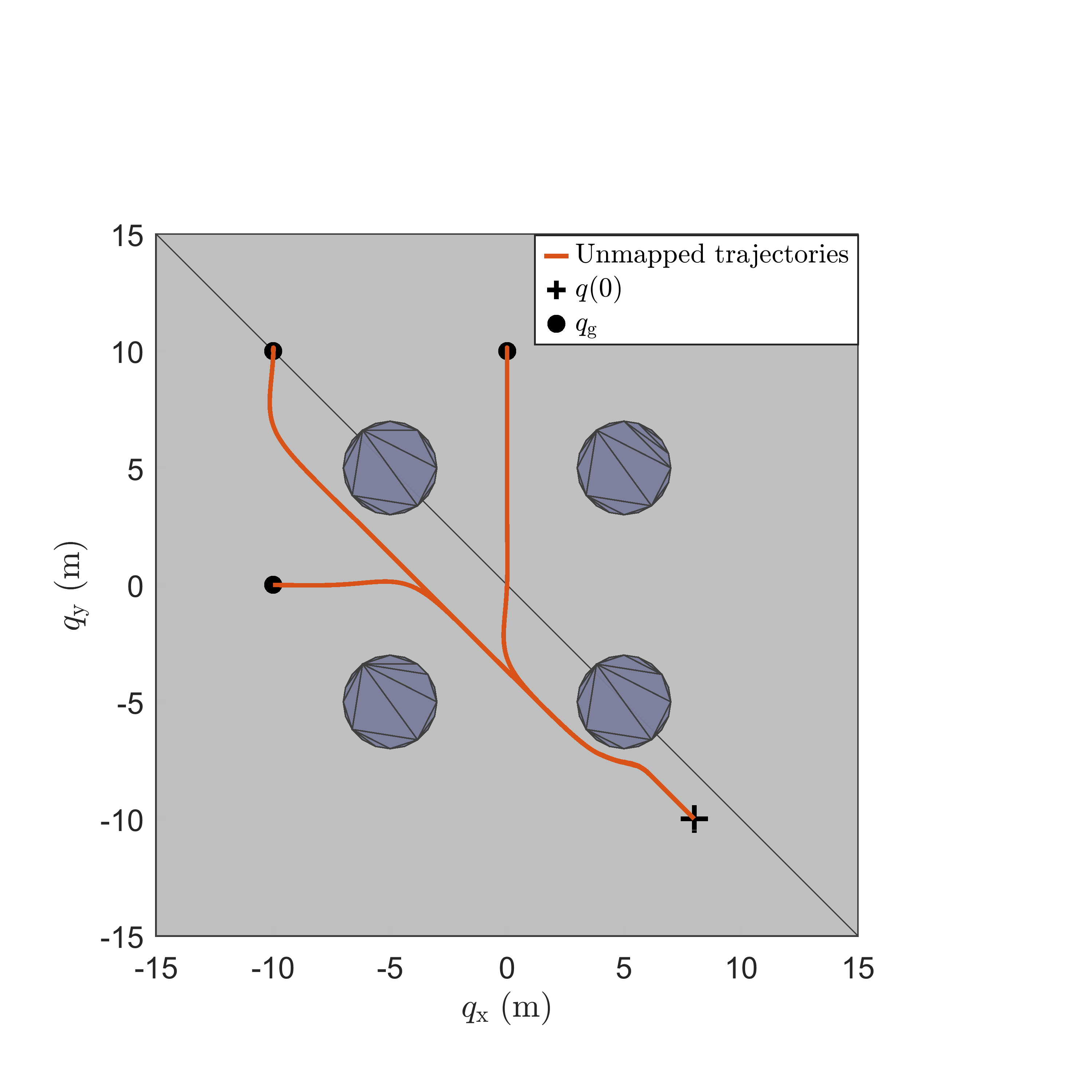}}
\caption{Top-down view of three closed-loop trajectories using the control~\Cref{eq:softmax_h,eq:HOCBF,eq:uclose,eq:ulambda,eq:omegabar,eq:dxt} with the perception feedback $b_k$.}\label{fig:UAV_map_2D}
\end{figure}

\begin{figure}[t!]
\center{\includegraphics[width=0.44\textwidth,clip=true,trim=0.3in 0.27in 1.0in 0.5in] {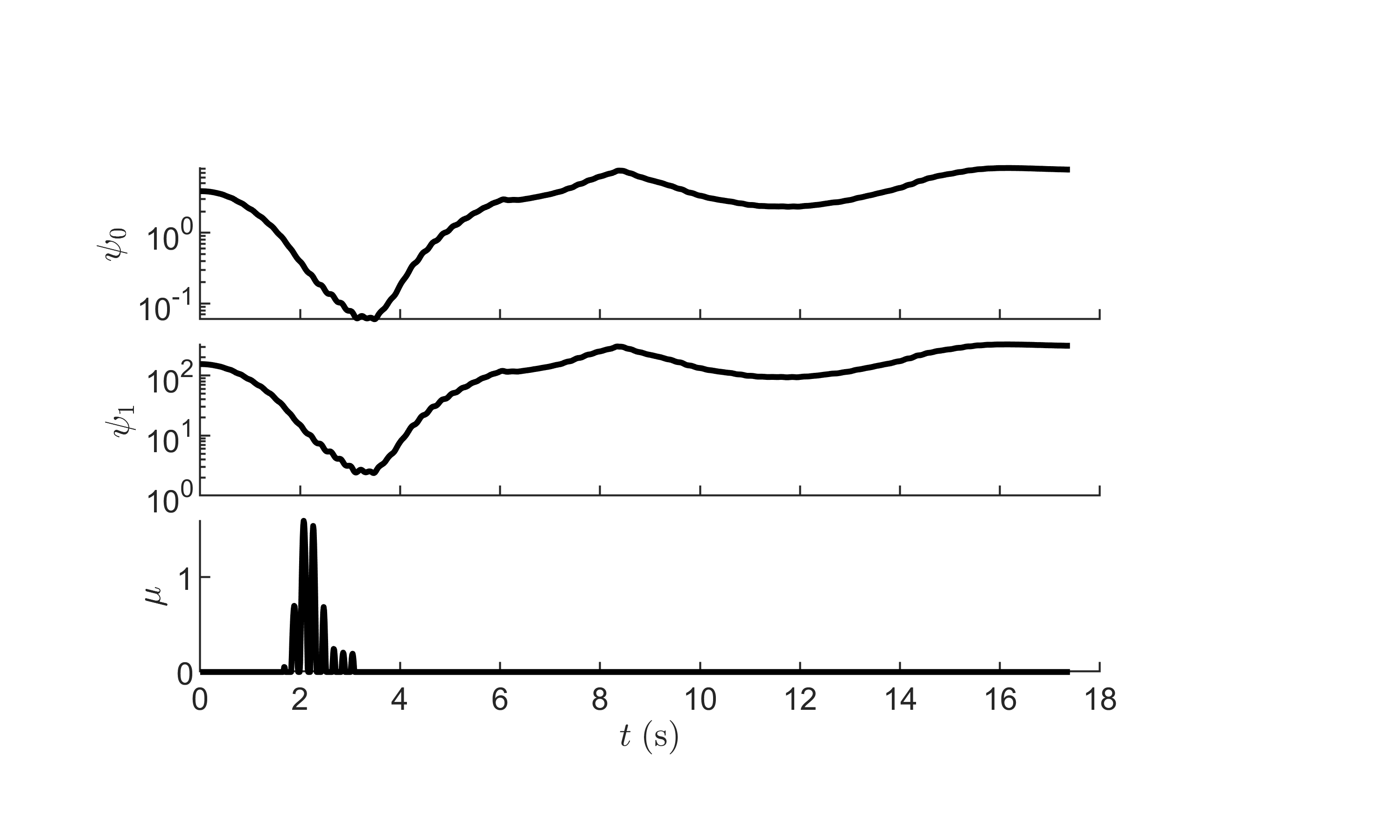}}
\caption{$\psi_0$, $\psi_1$, and $\mu$ for $q_\rmg = [\,0\quad 10 \quad 5 \,]^\rmT$~m.}\label{fig:UAV_h}
\end{figure} 
\begin{figure}[t!]
\center{\includegraphics[width=0.44\textwidth,clip=true,trim= 0.35in 0.27in 1.0in 0.5in] {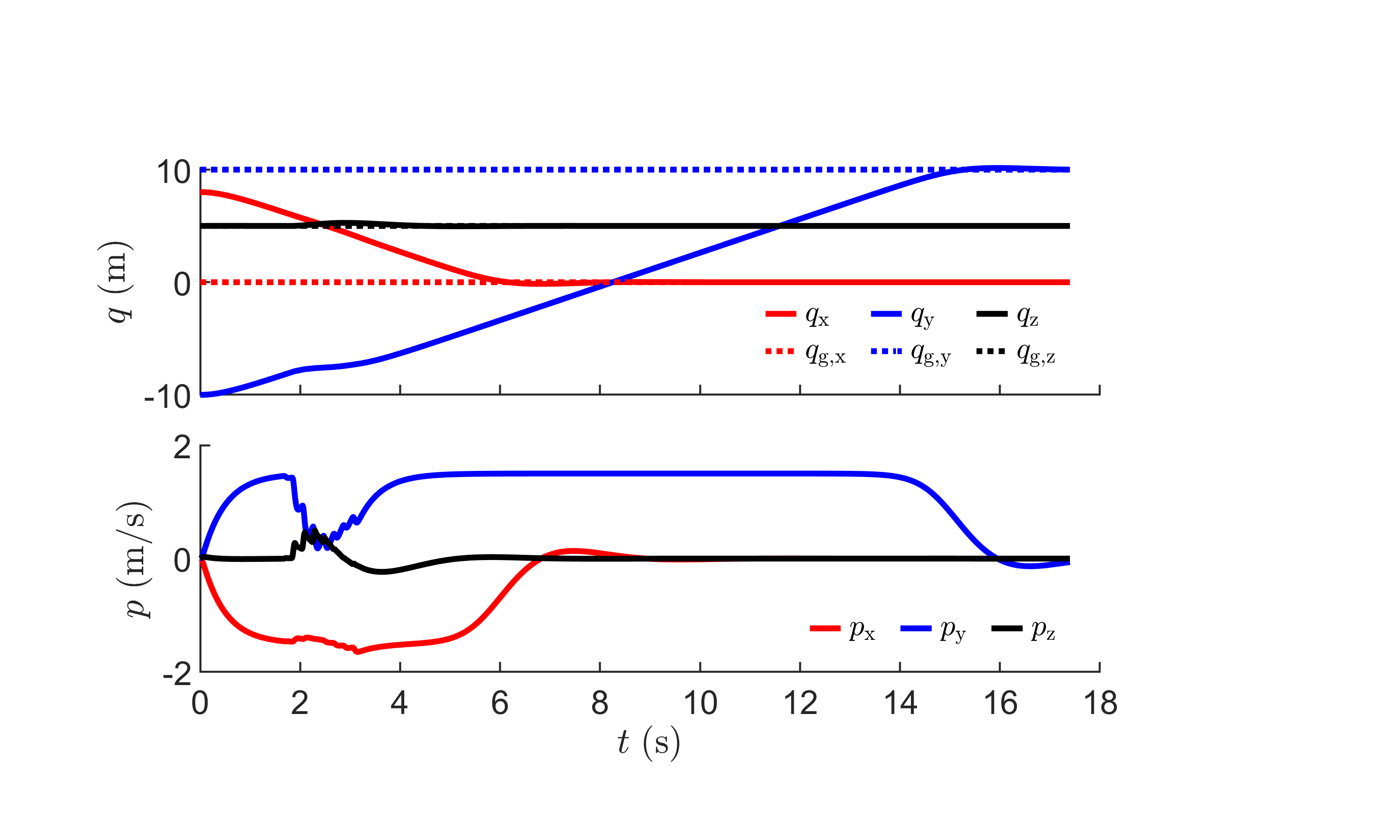}}
\caption{$q$ and $p$ for $q_\rmg = [\,0\quad 10 \quad 5 \,]^\rmT$~m.}\label{fig:UAV_pos}
\end{figure} 
\begin{figure}[t!]
\center{\includegraphics[width=0.44\textwidth,clip=true,trim= 0.35in 0.35in 1.0in 0.5in] {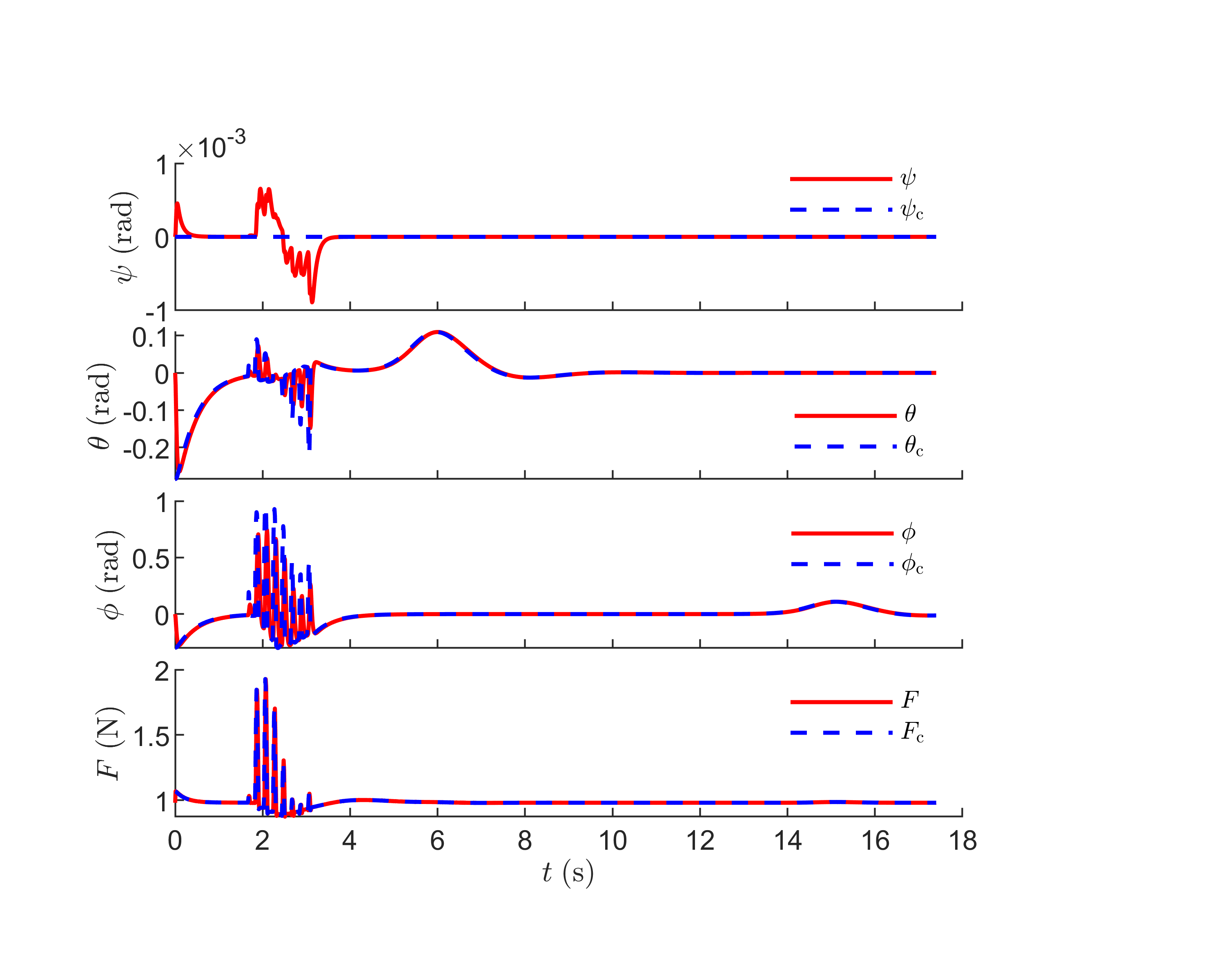}}
\caption{$\psi$, $\theta$, $\phi$, and $F$ for $q_\rmg = [\,0\quad 10 \quad 5 \,]^\rmT$~m.}\label{fig:UAV_state}
\end{figure} 
\begin{figure}[t!]
\center{\includegraphics[width=0.44\textwidth,clip=true,trim= 0.35in 0.35in 1.0in 0.5in] {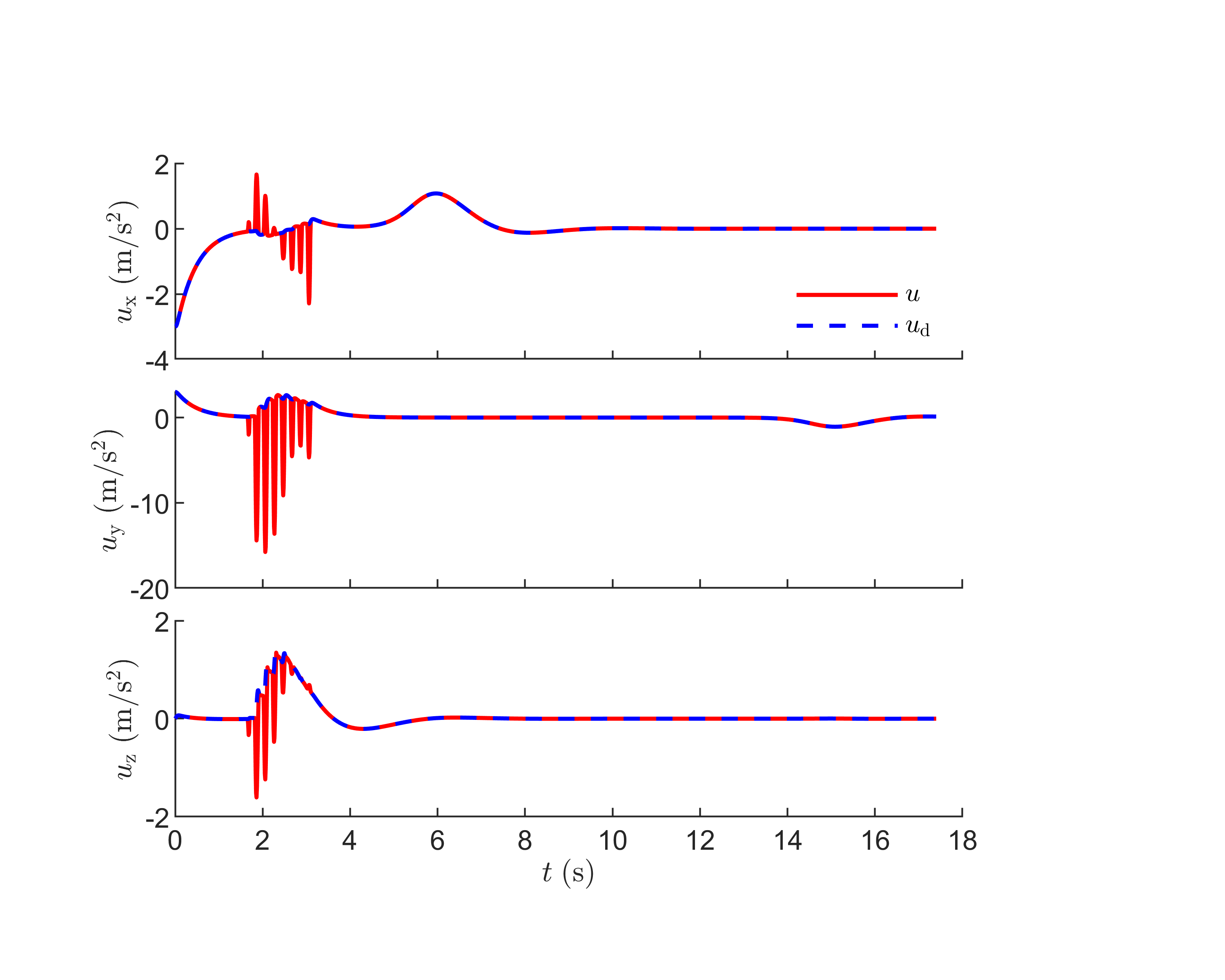}}
\caption{$u_x$, $u_y$, and $u_z$ for $q_\rmg = [\,0\quad 10 \quad 5 \,]^\rmT$~m.}\label{fig:UAV_input}
\end{figure}

\section{Concluding Remarks}

This article presents a new closed-form optimal feedback control method that ensures safety in an \textit{a priori} unknown and potentially dynamic environment.
A key enabling element is the smooth time-varying soft-maximum barrier function $\psi_0$ given by  \eqref{eq:softmax_h}, whose zero-superlevel set $\SC_0(t)$ is a subset of the safe set $\SSS_\rms(t)$. 
The function $\psi_0$ composes the $N+1$ most recently obtained perception feedback $b_k,\ldots,b_{k-N}$ and uses the convex combination $\eta\left(\textstyle\frac{t}{T}-k\right)b_k  + \left[ 1-\eta\left(\textstyle\frac{t}{T}-k\right)\right ] b_{k-N}$ to incorporated the newest perception feedback $b_k$ and remove the oldest perception feedback $b_{k-N}$. 
Other choices of $\psi_0$ also have these required properties.
For example, 
\begin{align}
    \psi_0(t,x) &\triangleq  \left[ 1-\eta\left(\textstyle\frac{t}{T}-k\right)\right]\mbox{softmax}_\kappa \left (b_{k-1}(x), \cdots, b_{k-N}(x) \right )\nn \\
    &\quad +\eta\left(\textstyle\frac{t}{T}-k\right) \mbox{softmax}_\kappa \left ( b_{k}(x), \cdots, b_{k-N+1}(x) \right ) \label{eq:alternative}. 
\end{align}
can be used in place of \eqref{eq:softmax_h}. 
If $N=1$, then \eqref{eq:alternative} is equivalent to \eqref{eq:softmax_h}; however, these choices of $\psi_0$ differ for $N>1$. 
The main results of this article (\Cref{prop.psi_i_properties,prop:CFI,prop:ClosedForm,prop:Forward_Invariant} and \Cref{thm:global minimizer,Th:Main th}) still hold if $\psi_0$ is given by \eqref{eq:alternative} instead of \eqref{eq:softmax_h}.
For the simulation in this article, $\psi_0$ given by \eqref{eq:alternative} performs very similar to $\psi_0$ given by \eqref{eq:softmax_h}.
However, the relative advantages and disadvantages of different $\psi_0$ is an open question. 

The main results on the optimal and safe feedback control~\Cref{eq:uclose,eq:ulambda,eq:omegabar,eq:dxt} rely on Assumption~\ref{con6} that $L_g \psi_{r-1}$ is nonzero on a subset of the boundary of the zero-superlevel set of $\psi_{r-1}$ (specifically, on $[0,\infty) \times \SB(t)$). 
\Cref{remark:Lgpsi} discusses sufficient conditions for~\ref{con6}. 
However, determining necessary and sufficient conditions for~\ref{con6} is an open question.

\appendices

\section{Derivation of Optimal Control \cref{eq:uclose,eq:mu_close,eq:ulambda,eq:omegabar,eq:dxt}}
\label{appendix:CformControl}

This appendix uses the first-order necessary conditions to derive the expressions \cref{eq:uclose,eq:mu_close,eq:ulambda,eq:omegabar,eq:dxt} for the unique global minimizer of $\SJ(t,x,\hat u,\hat \mu)$ subject to $b(t,x,\hat{u},\hat{\mu}) \ge 0$, where $\SJ$ and $b$ are given by \cref{eq:SJ,eq:safety_constraint}.

To derive the minimizer \Cref{eq:uclose,eq:mu_close}, let $(t,x) \in [0,\infty) \times \BBR^n$, and define $u_\rmg \triangleq-Q(t,x)^{-1} c(t,x)$, which is the unique global minimizer of the unconstrained cost $J$ given by \eqref{eq:J}.
Thus, $( u_\rmg, 0 )$ is the unique global minimizer of the unconstrained cost $\SJ$ given by \eqref{eq:SJ}.

Let $\left(u_*, \mu_*\right) \in \mathbb{R}^{m} \times \mathbb{R}$ denote the unique global minimizer of $\SJ\left(t, x_, \hat{u}, \hat{\mu} \right)$ subject to $b\left(t,x, \hat{u}, \hat{\mu}\right) \geq 0$. 
Note \cref{eq:omegabar,eq:safety_constraint} imply that 
\begin{equation}
    b(t,x,u_\rmg, 0) = \omega(t,x). \label{eq:b=omega}
\end{equation}
We consider 2 cases: (i) $\omega(t,x) \ge 0$, which implies $( u_\rmg, 0 )$ satisfies the constraint; and (ii) $\omega(t,x) < 0$, which implies $( u_\rmg, 0 )$ does not satisfy the constraint.

First, consider case (i) $\omega(t,x) \ge 0$, and it follows from \eqref{eq:b=omega} that $b(t,x,u_\rmg, 0) \ge 0$. 
Since, in addition, $\left(u_\rmg, 0\right)$ is the unique global minimizer of the unconstrained cost $\SJ\left(t, x, \hat{u}, \hat{\mu}\right)$, it follows that $u_*=u_\rmg$ and $\mu_*= 0$, which confirms \Cref{eq:uclose,eq:mu_close} for case (i) $\omega(t,x) \ge 0$.

Next, consider case (ii) $\omega(t,x) < 0$, and it follows from \eqref{eq:b=omega} that $b(t,x,u_\rmg, 0) < 0$. 
Thus, $b(t,x, u_*, \mu_* )=0$. 
Define the Lagrangian $\mathcal{L}(\hat{u}, \hat{\mu}, \hat{\lambda}) \triangleq \SJ\left(t, x, \hat{u}, \hat{\mu}\right)-\hat{\lambda} b\left(t, x, \hat u, \hat \mu \right)$, and let $\lambda_* \in \mathbb{R}$ be such that $(u_*, \mu_*, \lambda_* )$ is a stationary point of $\SL$. 
Next, we evaluate ${\partial \mathcal{L}}/{\partial \hat{u}}$, ${\partial \mathcal{L}}/{\partial \hat{\mu}}$, and ${\partial \mathcal{L}}/{\partial \hat{\lambda}}$ at $(u_*, \mu_*, \lambda_* )$; set equal to zero; and solve for $u_*, \mu_*, \lambda_*$ to obtain
\begin{gather*}
u_*=-Q(t, x)^{-1}\left(c(t, x)-\lambda_* L_g\psi_{r-1}\left(t,x\right)^{\mathrm{T}} \right), \\
\mu_*=\frac{\psi_{r-1}\left(t,x\right) \lambda_*}{\gamma},\qquad
\lambda_*=\frac{-\omega\left(t,x\right)}{d\left(t,x\right)},
\end{gather*}
where it follows from \ref{propA.1} of \Cref{prop:ClosedForm} that $d(t,x) > 0$.
This confirms \Cref{eq:uclose,eq:mu_close,eq:ulambda} for case (ii) $\omega(t,x) < 0$.

\section{Proofs of \Cref{fact:softmin_limit,prop:softmin_softmax_sets,prop:softmin_max_numerical,prop.h,prop.psi_i_properties}}
\label{appendix:proposition proofs}

\begin{proof}[\indent Proof of \Cref{fact:softmin_limit}]
Define $\ubar{z} \triangleq \min \, \{z_1,\cdots,z_N\}$. Since the exponential is nonnegative and strictly increasing, it follows that $e^{-\kappa \ubar{z}} \leq \sum_{i=1}^{N} e^{-\kappa z_i} \leq Ne^{-\kappa \ubar{z}}$, and taking the logarithm yields $-\kappa \ubar{z} \leq \log\sum_{i=1}^{N} e^{-\kappa z_i} \leq \log N -\kappa \ubar{z}$.
Then, dividing by $-\kappa$ confirms \eqref{eq:softmin_inequality}.

Next, define $\bar z \triangleq \max \, \{z_1,\cdots,z_N\}$. 
Since the exponential is nonnegative and strictly increasing, it follows that $e^{\kappa \bar{z}} \leq \sum_{i=1}^{N} e^{\kappa z_i} \leq Ne^{\kappa \bar{z}}$, and taking the logarithm yields $ \kappa \bar{z} \leq \log\sum_{i=1}^{N} e^{\kappa z_i} \leq \log N + \kappa \bar{z}$.
Then, subtracting $\log N$ and dividing by $\kappa$ confirms \eqref{eq:softmax_inequality}.
\end{proof} 

\begin{proof}[\indent Proof of \Cref{prop:softmin_softmax_sets}]
To prove $\SX_{\kappa} \subseteq \bigcap_{i=1}^N \SD_i$, let $x \in \SX_\kappa$. 
Since $\kappa > 0$, it follows from \Cref{eq:softmin,prop2.2} that $\sum_{i=1}^Ne^{-\kappa \zeta_i(x)}\le 1$, which implies that for all $i \in \{1,\cdots N \}$, $e^{-\kappa \zeta_i(x)} \le 1$.
Thus, for all $i \in \{1,\cdots N \}$, $\zeta_i(x) \ge 0$,  which implies that $x \in \SD_i$. 
Hence, $\SX_{\kappa} \subseteq \bigcap_{i=1}^N \SD_i$.

Next, to prove $\SY_{\kappa} \subseteq \bigcup_{i=1}^N \SD_i$, let $y \in \SY_\kappa$. 
Since $\kappa > 0$, it follows from \Cref{eq:softmin,prop2.3} that $\sum_{i=1}^Ne^{\kappa \zeta_i(y)}\ge N$, which implies that there exists $i_* \in \{1,\cdots N \}$ such that $e^{\kappa \zeta_{i_*}(y)} \ge 1$. 
Thus, $\zeta_{i_*}(y) \ge 0$, which implies that $y \in \SD_{i_*}$. 
Hence, $\SY_{\kappa} \subseteq \bigcup_{i=1}^N \SD_i$.

To prove the final statement of the result, note that \Cref{fact:softmin_limit} implies that $\lim_{\kappa \to \infty} \mbox{softmin}_\kappa(z_1,\cdots,z_N) = \min \, \{z_1,\cdots,z_N\}$ and $\lim_{\kappa \to \infty} \mbox{softmax}_\kappa(z_1,\cdots,z_N) = \max \, \{z_1,\cdots,z_N\}$, which combined with \cref{prop2.1,prop2.2,prop2.3} implies that as $\kappa \to \infty$, $\SX_{\kappa} \to \bigcap_{i=1}^N \SD_i$ and $\SY_{\kappa} \to \bigcup_{i=1}^N \SD_i$.
\end{proof}

\begin{proof}[\indent Proof of \Cref{prop:softmin_max_numerical}]
It follows from \eqref{eq:softmin}
\begin{align*}
   \mbox{softmin}_\kappa (z_1,\cdots,z_N) &= -\frac{1}{\kappa}\log\sum_{i=1}^Ne^{-\kappa z_i} \\
   & = -\frac{1}{\kappa}\log\sum_{i=1}^Ne^{-\kappa (z_i - \ubar{z} + \ubar{z} )} \\
   & = -\frac{1}{\kappa}\log e^{-\kappa \ubar{z}} -\frac{1}{\kappa}\log\sum_{i=1}^Ne^{-\kappa (z_i - \ubar{z})} \\ 
   & = \ubar{z} + \mbox{softmin}_\kappa(z_1-\ubar{z},\cdots,z_N-\ubar{z}).
\end{align*}
The same steps are used to prove $\mbox{softmax}_\kappa(z_1, \cdots, z_N) = \bar{z} + \mbox{softmax}_\kappa(z_1-\bar{z}, \cdots, z_N-\bar{z})$.
\end{proof}

\begin{proof}[\indent Proof of \Cref{prop.h}]
To prove \ref{prop.h.1}, note that $b_k$ is $r$-times continuously differentiable on $\BBR^n$, and $\eta$ is $r$-times continuously differentiable on $[0,\infty)$. 
Thus, \Cref{eq:softmax,eq:softmax_h} and \ref{con:con1_g}--\ref{con: con4_g} imply that $\psi_0$ is $r$-times continuously differentiable on $[0,\infty) \times \BBR^n$, which proves \ref{prop.h.1}.

Next, it follows from \eqref{eq:softmax_h} and direct computation that for all $i \in \{0,1,\cdots, r-1 \}$,
\begin{align}
L_f^i\psi_0(t,x) &= F_i (t, b_k(x),\cdots,b_{k-N}(x),L_fb_k(x),\cdots,\nn\\
    &\qquad L_fb_{k-N}(x),\cdots,L_f^ib_{k}(x) ,\cdots,L_f^ib_{k-N}(x)).\label{eq:induction.prop2b}
\end{align}
where $F_0 : [0,\infty) \times \BBR^{N+1}\to \BBR$ is defined by
\begin{equation}
F_0(t, b_k(x),\cdots,b_{k-N}(x)) \triangleq \psi_0(t,x), \label{eq:induction.prop2b.1}
\end{equation}
and for all $i \in \{1,\cdots, r-1 \}$, $F_i : [0,\infty) \times \BBR^{(N+1)(i+1)}\to \BBR$ is defined by
\begin{equation}
F_{i} \triangleq  \sum_{j = 0}^{N}\sum_{p=0}^{i-1}\textstyle\frac{\partial F_{i-1}}{\partial L_f^p b_{k-j}}L_f^{i}b_{k-j}(x), \label{eq:induction.prop2b.2}
\end{equation}
where the arguments of $F_i$ are omitted from \eqref{eq:induction.prop2b.2} for brevity.

To prove \ref{prop.h.2}, let  $i \in \{0,1,\cdots,r-2\}$, and \cref{eq:induction.prop2b} implies
 \begin{equation*}
     L_gL_f^i\psi_0(t,x) = L_g F_i = \sum_{j = 0}^{N}\sum_{p=0}^{i}\textstyle\frac{\partial F_i}{\partial L_f^p b_{k-j}}L_gL_f^{p}b_{k-j}(x).
 \end{equation*}
Since \ref{con3} implies that for all $j \in \{0,\cdots ,N \}$ and all $p \in \{0,\cdots,i\}$, $L_gL_f^{p}b_{k-j}(x) = 0$, it follows that $L_gL_f^i\psi_0(t,x) =0$, which proves \ref{prop.h.2}.

To prove \ref{prop.h.3}, it follows from \eqref{eq:induction.prop2b} and \ref{con3} that 
 \begin{equation}
     L_g L_f^{r-1}\psi_0(t,x) = \sum_{j = 0}^{N} \textstyle\frac{\partial F_{r-1}}{\partial L_f^{r-1} b_{k-j}} L_g L_f^{r-1}b_{k-j}(x). \label{eq:induction.prop2b.3}
 \end{equation}
Next, \eqref{eq:induction.prop2b.2} and \eqref{eq:induction.prop2b.1} imply that for all $j \in \{0,\ldots,N\}$,
 \begin{align*}
\textstyle\frac{\partial F_{r-1}}{\partial L_f^{r-1} b_{k-j}} = \textstyle\frac{\partial F_{r-2}}{\partial L_f^{r-2} b_{k-j}} = \cdots = \textstyle\frac{\partial F_{0}}{\partial L_f^{0} b_{k-j}} = \textstyle\frac{\partial \psi_{0}}{\partial b_{k-j}},
 \end{align*}
 which combined with \eqref{eq:induction.prop2b.3} yields
 \begin{align*}
     L_g L_f^{r-1}\psi_0(t,x) &= \sum_{j = 0}^{N} \textstyle\frac{\partial \psi_0}{\partial b_{k-j}} L_g L_f^{r-1}b_{k-j}(x)\nn\\
         & = \sum_{j=0}^{N} \mu_{j}(t,x) L_gL_f^{r-1} b_{k-j}(x),
 \end{align*}
which proves \ref{prop.h.3}.

To prove \ref{prop.h.4}, since the exponential is nonnegative, and $\eta\left(\frac{t}{T}-k\right)$ and $1-\eta\left(\frac{t}{T}-k\right)$ are nonnegative, it follows from \Cref{eq:mu0,eq:muN,eq:muj} that for $j \in \{0,1,\cdots,N\}$, $\mu_j(t,x) \ge 0$.
Next, using \Cref{eq:mu0,eq:muN,eq:muj} and direct calculation yields $\sum_{j=0}^N \mu_j(t,x)~=1$.
\end{proof}


The next lemma is needed for the proof of \Cref{prop.psi_i_properties}.

\begin{lemma}\label{lemma:Fij}\rm
Assume $r \ge 2$.
Then, for all $i \in \{0,1,\cdots, r-2 \}$ and all $j \in \{0,1,\cdots,r-2-i\}$, there exists continuously differentiable $F_{i,j} : \BBR^{j+1}\to \BBR$ such that 
\begin{equation}\label{eq:Fij.1}
    L_f^{j}\alpha_i(\psi_i(t,x)) = F_{i,j} (\psi_i(t,x), L_f\psi_i(t,x),\cdots,L_f^j\psi_i(t,x)).
\end{equation}
\end{lemma}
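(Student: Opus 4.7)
The plan is to fix $i \in \{0,1,\ldots,r-2\}$ and induct on $j \in \{0,1,\ldots,r-2-i\}$. For the base case $j=0$, I simply take $F_{i,0}(z_0) \triangleq \alpha_i(z_0)$, so that $L_f^0 \alpha_i(\psi_i(t,x)) = \alpha_i(\psi_i(t,x)) = F_{i,0}(\psi_i(t,x))$. Since $\alpha_i$ is $(r-1-i)$-times continuously differentiable and $r-1-i \ge 1$, $F_{i,0}$ is continuously differentiable.

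For the inductive step, suppose the claim holds for some $j \in \{0,1,\ldots,r-3-i\}$ with $F_{i,j}$ continuously differentiable. Then
\begin{align*}
L_f^{j+1}\alpha_i(\psi_i(t,x)) &= L_f\bigl[F_{i,j}(\psi_i(t,x),L_f\psi_i(t,x),\ldots,L_f^j\psi_i(t,x))\bigr]\\
&= \sum_{p=0}^{j} \frac{\partial F_{i,j}}{\partial z_p}\bigl(\psi_i,L_f\psi_i,\ldots,L_f^j\psi_i\bigr) \, L_f^{p+1}\psi_i(t,x),
\end{align*}
where the second equality follows from the chain rule and the identity $L_f(L_f^p \psi_i) = L_f^{p+1}\psi_i$. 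This motivates defining
\begin{equation*}
F_{i,j+1}(z_0,z_1,\ldots,z_{j+1}) \triangleq \sum_{p=0}^{j} \frac{\partial F_{i,j}}{\partial z_p}(z_0,\ldots,z_j) \, z_{p+1},
\end{equation*}
which yields \eqref{eq:Fij.1} with $j$ replaced by $j+1$, completing the induction. The existence of $L_f^{j+1}\psi_i$ is guaranteed because \ref{prop.h.1} of \Cref{prop.h} shows that $\psi_0$ is $r$-times continuously differentiable, and the higher-order construction \eqref{eq:HOCBF} implies $\psi_i$ is $(r-i)$-times continuously differentiable, so Lie derivatives of order up to $r-i$ exist, which covers $j+1 \le r-1-i$.

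The main bookkeeping obstacle is verifying that $F_{i,j+1}$ is itself continuously differentiable, so that the induction can be continued. By inspection of the recursion, $F_{i,j}$ depends on derivatives of $\alpha_i$ of orders up to $j$; hence $\partial F_{i,j}/\partial z_p$ involves derivatives of $\alpha_i$ up to order $j+1$, so continuous differentiability of $F_{i,j+1}$ requires $\alpha_i$ to be $(j+2)$-times continuously differentiable. Since $j+1 \le r-2-i$, we have $j+2 \le r-1-i$, which is exactly the smoothness available from the hypothesis that $\alpha_i$ is an $(r-1-i)$-times continuously differentiable extended class-$\mathcal K$ function. Thus the inductive hypothesis is preserved across the full range $j \in \{0,1,\ldots,r-2-i\}$, and the lemma follows.
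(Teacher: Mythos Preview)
Your proof is correct and follows essentially the same approach as the paper's: fix $i$, induct on $j$, take $F_{i,0}=\alpha_i$, and in the inductive step apply the chain rule to $L_f F_{i,j}$ to obtain the recursive formula $F_{i,j+1}(z_0,\ldots,z_{j+1})=\sum_{p=0}^{j}(\partial F_{i,j}/\partial z_p)\,z_{p+1}$. Your derivative-order bookkeeping for the continuous differentiability of $F_{i,j+1}$ is in fact more explicit than the paper's, which simply asserts the conclusion from the $(r-1-i)$-times continuous differentiability of $\alpha_i$.
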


\begin{proof}[\indent Proof]
Let $i \in \{0,1,\cdots, r-2 \}$, and we use induction on $j$.
First, let $F_{i,0}(\psi_i) = \alpha_i(\psi_i)$, which is continuously differentiable because $\alpha_i$ is $(r-1-i)$-times continuously differentiable. 
Note that $L_f^0\alpha_i(\psi_i) = F_{i,0}(\psi_i)$, which confirms \eqref{eq:Fij.1} for $j=0$.
Next, assume for induction that \eqref{eq:Fij.1} holds for $j = \ell \in \{0,1,\cdots,r-3-i\}$. 
Thus, 
\begin{equation}
    L_f^{\ell+1}\alpha_i(\psi_i) =L_f L_f^{\ell}\alpha_i(\psi_i) = L_f F_{i,\ell} = \sum_{p=0}^{\ell}\textstyle\frac{\partial F_{i,\ell}}{\partial L_f^p \psi_i} L_f^{p+1} \psi_i, \label{eq:Fij.2}
\end{equation}
where the arguments of $F_{i,\ell}$ are omitted for brevity. 
Let 
\begin{equation}
F_{i,\ell+1} =  \sum_{p=0}^{{\ell}}\textstyle\frac{\partial F_{i,\ell}}{\partial L_f^p \psi_l} L_f^{p+1} \psi_l, \label{eq:Fij.3}
\end{equation}
and \Cref{eq:Fij.2,eq:Fij.3} imply that $L_f^{\ell+1}\alpha_i(\psi_i) = F_{i,\ell+1}$. 
Since, in addition, $\alpha_i$ is $(r-1-i)$-times continuously differentiable, it follows that $F_{i,\ell+1}$ is continuously differentiable, which confirms \eqref{eq:Fij.1} for $j=\ell +1$. 
\end{proof}

\begin{proof}[\indent Proof of \Cref{prop.psi_i_properties}]
We use induction on $i$ to show that for all $i\in \{0,1,\ldots,r-2\}$ and all $j \in \{0,1,\ldots,r-2-i\}$, 
\begin{equation}
    L_gL_f^j\psi_i = 0. \label{prop.psi_i_properties.1}
\end{equation}
First, \ref{prop.h.2} of \Cref{prop.h} implies that for all $j \in \{0,1,\ldots,r-2\}$, $L_g L_f^{j}  \psi_0 = 0$, which confirms \eqref{prop.psi_i_properties.1} for $i=0$.
Next, let $i=\ell \in \{0,1,\cdots, r-3 \}$, and assume for induction that for all $j \in \{0,1,\ldots,r-2-\ell\}$, $L_gL_f^{j} \psi_\ell = 0$. 
Thus, using \Cref{eq:HOCBF} implies that for all $j \in \{0,1,\ldots,r-3-\ell\}$,
\begin{align}
    L_gL_f^{j} \psi_{\ell+1} &= L_gL_f^{j} \left [ \textstyle \frac{\partial \psi_{\ell}}{\partial t} + L_f \psi_{\ell} +\alpha_\ell(\psi_\ell) \right ]\nn\\
    &= \textstyle \frac{\partial}{\partial t} L_gL_f^{j} \psi_{\ell}+ L_gL_f^{j+1} \psi_{\ell} + L_gL_f^{j} \alpha_\ell(\psi_\ell)\nn\\
    &= L_gL_f^{j} \alpha_\ell(\psi_\ell). \label{prop.psi_i_properties.2}
\end{align}
\Cref{lemma:Fij} implies that for  $j \in \{0,1,\cdots,r-3-\ell\}$, there exists continuously differentiable $F_{\ell,j} : \BBR^{j+1}\to \BBR$ such that $L_f^{j}\alpha_\ell(\psi_\ell(t,x)) = F_{\ell,j} (\psi_\ell(t,x), L_f\psi_\ell(t,x), \cdots, L_f^j\psi_\ell(t,x))$, which combined with \eqref{prop.psi_i_properties.2} implies that for all $j \in \{0,1,\ldots,r-3-\ell\}$,
\begin{equation}
    L_gL_f^{j} \psi_{\ell+1} = L_g F_{\ell,j} (\psi_\ell,\cdots,L_f^j\psi_\ell) = \sum_{p=0}^{j}\textstyle\frac{\partial F_{\ell,j}}{\partial L_f^p \psi_\ell} L_g L_f^{p} \psi_\ell,\label{prop.psi_i_properties.4}
 \end{equation}
 where the arguments of $F_{\ell,j}$ are omitted for brevity. 
Since for all $j \in \{0,1,\ldots,r-2-\ell\}$, $L_gL_f^{j} \psi_\ell = 0$, it follows from \eqref{prop.psi_i_properties.4} that 
 for all $j \in \{0,1,\ldots,r-3-\ell\}$, $L_gL_f^{j} \psi_{\ell+1} = 0$, which confirms \eqref{prop.psi_i_properties.1} for $i=\ell+1$.
Thus, for all $i\in \{0,1,\ldots,r-2\}$, $L_g \psi_i=0$.


To prove that $L_g \psi_{r-1} = L_gL_f^{r-1}\psi_{0}$, note that since $L_g \psi_{r-2} =0$, it follows from \Cref{eq:HOCBF} that 
\begin{align}
L_g \psi_{r-1} &= \textstyle\frac{\partial}{\partial t} L_g \psi_{r-2} + L_gL_f\psi_{r-2} + \alpha_{r-2}^\prime (\psi_{r-2}) L_g \psi_{r-2} \nn\\
    &= L_gL_f\psi_{r-2}, \label{eq:prop.psi_i_properties.5}
\end{align}
Since \Cref{prop.psi_i_properties.1,prop.psi_i_properties.2} imply that for all $i \in \{0,1,\cdots,r-3 \}$, $L_gL_f^{r-2-i} \psi_i =0$ and $L_g L_f^{r-2-i}\alpha_{i}(\psi_{i}) = 0$, it follows from \Cref{eq:prop.psi_i_properties.5,eq:HOCBF} that 
\begin{align*}
    L_g \psi_{r-1} &= L_g L_f \left [ \textstyle\frac{\partial}{\partial t} \psi_{r-3} + L_f \psi_{r-3} + \alpha_{r-3} (\psi_{r-3}) \right ] \\
    &= L_gL_f^2 \psi_{r-3} \\
    &= L_g L_f^2 \left [\textstyle\frac{\partial}{\partial t} \psi_{r-4} + L_f \psi_{r-4} + \alpha_{r-4} (\psi_{r-4}) \right ] \\
    &= L_gL_f^3 \psi_{r-4} \\
    &\quad \vdots\\
    & = L_gL_f^{r-1}\psi_{0},
\end{align*}
which confirms the result. 
\end{proof}

\bibliographystyle{ieeetr}
\bibliography{Softmax_LiDAR.bib} 

\begin{thebibliography}{10}

\bibitem{hudson2021heterogeneous}
N.~Hudson {\em et~al.}, ``Heterogeneous ground and air platforms, homogeneous sensing: Team {CSIRO} {Data61}'s approach to the {DARPA} subterranean challenge,'' {\em arXiv preprint arXiv:2104.09053}, 2021.

\bibitem{kress2009temporal}
H.~Kress-Gazit, G.~E. Fainekos, and G.~J. Pappas, ``Temporal-logic-based reactive mission and motion planning,'' {\em IEEE Trans. Robotics}, vol.~25, no.~6, pp.~1370--1381, 2009.

\bibitem{schwarting2018planning}
W.~Schwarting, J.~Alonso-Mora, and D.~Rus, ``Planning and decision-making for autonomous vehicles,'' {\em Annual Review of Contr., Robotics, Autom. Sys.}, vol.~1, pp.~187--210, 2018.

\bibitem{tang2010novel}
L.~Tang, S.~Dian, G.~Gu, K.~Zhou, S.~Wang, and X.~Feng, ``A novel potential field method for obstacle avoidance and path planning of mobile robot,'' in {\em Int. Conf. Comp. Sci. Info. Tech.}, pp.~633--637, IEEE, 2010.

\bibitem{kirven2021autonomous}
T.~Kirven and J.~B. Hoagg, ``Autonomous quadrotor collision avoidance and destination seeking in a gps-denied environment,'' {\em Autonomous Robots}, vol.~45, pp.~99--118, 2021.

\bibitem{sunkara2019collision}
V.~Sunkara, A.~Chakravarthy, and D.~Ghose, ``Collision avoidance of arbitrarily shaped deforming objects using collision cones,'' {\em IEEE Robotics and Autom. Lett.}, vol.~4, no.~2, pp.~2156--2163, 2019.

\bibitem{chen2018hamilton}
M.~Chen and C.~J. Tomlin, ``Hamilton--jacobi reachability: Some recent theoretical advances and applications in unmanned airspace management,'' {\em Annual Review of Contr., Robotics, Autom. Sys.}, pp.~333--358, 2018.

\bibitem{prajna2007framework}
S.~Prajna, A.~Jadbabaie, and G.~J. Pappas, ``A framework for worst-case and stochastic safety verification using barrier certificates,'' {\em IEEE Trans. Autom. Contr.}, pp.~1415--1428, 2007.

\bibitem{panagou2015distributed}
D.~Panagou, D.~M. Stipanovi{\'c}, and P.~G. Voulgaris, ``Distributed coordination control for multi-robot networks using {Lyapunov}-like barrier functions,'' {\em IEEE Trans. Autom. Contr.}, pp.~617--632, 2015.

\bibitem{tee2009barrier}
K.~P. Tee, S.~S. Ge, and E.~H. Tay, ``Barrier {Lyapunov} functions for the control of output-constrained nonlinear systems,'' {\em Automatica}, pp.~918--927, 2009.

\bibitem{ames2014control}
A.~D. Ames, J.~W. Grizzle, and P.~Tabuada, ``Control barrier function based quadratic programs with application to adaptive cruise control,'' in {\em Proc. Conf. Dec. Contr.}, pp.~6271--6278, 2014.

\bibitem{ames2016control}
A.~D. Ames, X.~Xu, J.~W. Grizzle, and P.~Tabuada, ``Control barrier function based quadratic programs for safety critical systems,'' {\em IEEE Trans. Autom. Contr.}, pp.~3861--3876, 2016.

\bibitem{ames2019control}
A.~D. Ames, S.~Coogan, M.~Egerstedt, G.~Notomista, K.~Sreenath, and P.~Tabuada, ``Control barrier functions: Theory and applications,'' in {\em Proc. Europ. contr. conf.}, pp.~3420--3431, 2019.

\bibitem{wabersich2022predictive}
K.~P. Wabersich and M.~N. Zeilinger, ``Predictive control barrier functions: Enhanced safety mechanisms for learning-based control,'' {\em IEEE Trans. Autom. Contr.}, vol.~68, no.~5, pp.~2638--2651, 2022.

\bibitem{seiler2021control}
P.~Seiler, M.~Jankovic, and E.~Hellstrom, ``Control barrier functions with unmodeled input dynamics using integral quadratic constraints,'' {\em IEEE Contr. Sys. Lett.}, vol.~6, pp.~1664--1669, 2021.

\bibitem{breeden2023robust}
J.~Breeden and D.~Panagou, ``Robust control barrier functions under high relative degree and input constraints for satellite trajectories,'' {\em Automatica}, vol.~155, p.~111109, 2023.

\bibitem{wieland2007constructive}
P.~Wieland and F.~Allg{\"o}wer, ``Constructive safety using control barrier functions,'' {\em IFAC Proc.}, vol.~40, no.~12, pp.~462--467, 2007.

\bibitem{tan2021high}
X.~Tan, W.~S. Cortez, and D.~V. Dimarogonas, ``High-order barrier functions: Robustness, safety, and performance-critical control,'' {\em IEEE Trans. Autom. Contr.}, vol.~67, no.~6, pp.~3021--3028, 2021.

\bibitem{nguyen2016exponential}
Q.~Nguyen and K.~Sreenath, ``Exponential control barrier functions for enforcing high relative-degree safety-critical constraints,'' in {\em Proc. Amer. Contr. Conf.}, pp.~322--328, IEEE, 2016.

\bibitem{xiao2021high}
W.~Xiao and C.~Belta, ``High-order control barrier functions,'' {\em IEEE Trans. Autom. Contr.}, vol.~67, no.~7, pp.~3655--3662, 2021.

\bibitem{zeng2021safety}
J.~Zeng, B.~Zhang, and K.~Sreenath, ``Safety-critical model predictive control with discrete-time control barrier function,'' in {\em Proc. Amer. Contr. Conf.}, pp.~3882--3889, IEEE, 2021.

\bibitem{taylor2020adaptive}
A.~J. Taylor and A.~D. Ames, ``Adaptive safety with control barrier functions,'' in {\em Proc. Amer. Contr. Conf.}, pp.~1399--1405, IEEE, 2020.

\bibitem{srinivasan2020synthesis}
M.~Srinivasan, A.~Dabholkar, S.~Coogan, and P.~A. Vela, ``Synthesis of control barrier functions using a supervised machine learning approach,'' in {\em Int. Conf. Int. Robots and Sys.}, pp.~7139--7145, IEEE, 2020.

\bibitem{abuaish2023geometry}
A.~Abuaish, M.~Srinivasan, and P.~A. Vela, ``Geometry of radial basis neural networks for safety biased approximation of unsafe regions,'' in {\em Proc. Amer. Contr. Conf.}, pp.~1459--1466, 2023.

\bibitem{long2021learning}
K.~Long, C.~Qian, J.~Cort{\'e}s, and N.~Atanasov, ``Learning barrier functions with memory for robust safe navigation,'' {\em IEEE Robotics and Autom. Lett.}, vol.~6, no.~3, pp.~4931--4938, 2021.

\bibitem{backupautomatica}
P.~Rabiee and J.~B. Hoagg, ``Soft-minimum and soft-maximum barrier functions for safety with actuation constraints,'' {\em Automatica}, vol.~171, no.~111921, pp.~1--11, 2025.

\bibitem{rabiee2024closed}
P.~Rabiee and J.~B. Hoagg, ``A closed-form control for safety under input constraints using a composition of control barrier functions,'' {\em arXiv preprint arXiv:2406.16874}, 2024.

\bibitem{compositionACC}
P.~Rabiee and J.~B. Hoagg, ``Composition of control barrier functions with differing relative degrees for safety under input constraints,'' in {\em Proc. Amer. Contr. Conf.}, 2024.

\bibitem{lindemann2018control}
L.~Lindemann and D.~V. Dimarogonas, ``Control barrier functions for signal temporal logic tasks,'' {\em IEEE Contr. Sys. Lett.}, vol.~3, no.~1, pp.~96--101, 2018.

\bibitem{srinivasan2018control}
M.~Srinivasan, S.~Coogan, and M.~Egerstedt, ``Control of multi-agent systems with finite time control barrier certificates and temporal logic,'' in {\em Proc. Conf. Dec. Contr.}, pp.~1991--1996, IEEE, 2018.

\bibitem{glotfelter2017nonsmooth}
P.~Glotfelter, J.~Cort{\'e}s, and M.~Egerstedt, ``Nonsmooth barrier functions with applications to multi-robot systems,'' {\em IEEE Contr. Sys. Lett.}, vol.~1, no.~2, pp.~310--315, 2017.

\bibitem{glotfelter2019hybrid}
P.~Glotfelter, I.~Buckley, and M.~Egerstedt, ``Hybrid nonsmooth barrier functions with applications to provably safe and composable collision avoidance for robotic systems,'' {\em IEEE Robotics and Autom. Lett.}, vol.~4, no.~2, pp.~1303--1310, 2019.

\bibitem{long2024sensor}
K.~Long, Y.~Yi, Z.~Dai, S.~Herbert, J.~Cort{\'e}s, and N.~Atanasov, ``Sensor-based distributionally robust control for safe robot navigation in dynamic environments,'' {\em arXiv preprint arXiv:2405.18251}, 2024.

\bibitem{lutkus2024incremental}
P.~Lutkus, D.~Anantharaman, S.~Tu, and L.~Lindemann, ``Incremental composition of learned control barrier functions in unknown environments,'' {\em arXiv preprint arXiv:2409.12382}, 2024.

\bibitem{cortez2022compatibility}
W.~S. Cortez and D.~V. Dimarogonas, ``On compatibility and region of attraction for safe, stabilizing control laws,'' {\em IEEE Trans. Autom. Contr.}, vol.~67, no.~9, pp.~4924--4931, 2022.

\bibitem{safari2023time}
A.~Safari and J.~B. Hoagg, ``Time-varying soft-maximum control barrier functions for safety in an a priori unknown environment,'' in {\em Proc. Amer. Contr. Conf.}, 2024.

\bibitem{blanchini2008set}
F.~Blanchini {\em et~al.}, {\em Set-theoretic methods in control}.
\newblock Springer, 2008.

\bibitem{khalil2002control}
H.~K. Khalil, {\em Control of nonlinear systems}.
\newblock Prentice Hall, 2002.

\bibitem{de2002control}
A.~De~Luca, G.~Oriolo, and M.~Vendittelli, ``Control of wheeled mobile robots: An experimental overview,'' {\em RAMSETE: articulated and mobile robotics for services and technologies}, pp.~181--226, 2002.

\bibitem{lindqvist2021exploration}
B.~Lindqvist, A.-A. Agha-Mohammadi, and G.~Nikolakopoulos, ``Exploration-{R}{R}{T}: A multi-objective path planning and exploration framework for unknown and unstructured environments,'' in {\em Proc. Int. Conf. Intelligent Robots and Systems}, pp.~3429--3435, 2021.

\bibitem{beard2008quadrotor}
R.~W. Beard, ``Quadrotor dynamics and control rev 0.1,'' {\em Faculty Publications Brigham Young University}, 2008.

\end{thebibliography}



\end{document}